\documentclass{article}

% if you need to pass options to natbib, use, e.g.:
%     \PassOptionsToPackage{numbers, compress}{natbib}
% before loading neurips_2019

% ready for submission
% \usepackage{neurips_2019}

% to compile a preprint version, e.g., for submission to arXiv, add add the
% [preprint] option:
%     \usepackage[preprint]{neurips_2019}

% to compile a camera-ready version, add the [final] option, e.g.:
%\usepackage{neurips_2019}

% to avoid loading the natbib package, add option nonatbib:
%     \usepackage[nonatbib]{neurips_2019}

\usepackage[utf8]{inputenc} % allow utf-8 input
\usepackage[T1]{fontenc}    % use 8-bit T1 fonts
\usepackage{amsfonts}       % blackboard math symbols
\usepackage{nicefrac}       % compact symbols for 1/2, etc.
\usepackage{microtype}      % microtypography
\usepackage{amssymb}
\usepackage{algorithm}
\usepackage{caption}
\usepackage{amsmath}
\usepackage{amsthm}
\usepackage{graphicx}
\usepackage{subfigure}
\usepackage{tabularx}
\usepackage{pifont}
\usepackage{algpseudocode}
\usepackage{bm}
\usepackage{array}
\usepackage{balance}
\usepackage{amsthm}
\usepackage{amsmath}
\usepackage{amssymb}
\usepackage{multirow}
\usepackage{fullpage}
\usepackage{color}
\usepackage{url} 
\usepackage{enumitem}

\usepackage{natbib}
\allowdisplaybreaks

\newcommand{\Tr}{^{\rm \top}}

% careful: ``null'' is already a latex command

\newcommand{\tr}{{\rm tr}}

\newcommand{\Rcal}{{\mathcal{R}}}
\newcommand{\Acal}{{\mathcal{A}}}

\newcommand{\Lcal}{{\mathcal{L}}}
% boldface characters

  % for derivatives
\newcommand{\e}{{\bf e}}

% in Latex2e this must be renewcommand

\renewcommand{\u}{{\bf u}}
\renewcommand{\v}{{\bf v}}

\newcommand{\x}{{\bf x}}
\newcommand{\y}{{\bf y}}
\newcommand{\z}{{\bf z}}

\newcommand{\A}{{\bf A}}

\newcommand{\Dcal}{\mathcal{D}}

\newcommand{\G}{{\bf G}}
\newcommand{\Gcal}{{\mathcal{G}}}

\newcommand{\I}{{\bf I}}

\newcommand{\M}{{\bf M}}

  % for normal density
%\newcommand{\N}{{\bf N}}

\renewcommand{\P}{{\bf P}}

\newcommand{\V}{{\bf V}}

\newcommand{\W}{{\bf W}}

\newcommand{\Ocal}[1]{{\mathcal{O}\left( #1  \right)}}

\newcommand{\X}{{\bf X}}
\newcommand{\Xcal}{{\mathcal{X}}}

% this is for latex 2.09
% unfortunately, the result is slanted - use Latex2e instead
%\newcommand{\bfLambda}{\mbox{\boldmath$\Lambda$}}
% this is for Latex2e

% Yuan Qi's boldsymbol

\newcommand{\bLambda}{\mathbf{\Lambda}}

\newcommand{\1}{{\bf 1}}
\newcommand{\0}{{\bf 0}}

\newcommand{\argmin}{\operatornamewithlimits{argmin}}

\newcommand{\lrincir}[1]{\left( #1 \right)}

\newcommand{\norm}[1]{\lVert#1\rVert}
\newcommand{\lrnorm}[1]{\left\lVert#1\right\rVert}
\newcommand{\lrangle}[1]{\left\langle#1 \right\rangle}

\newcommand{\EE}{\mathop{\mathbb{E}}}
\newcommand{\RR}{\mathbb{R}}

\newcommand{\refabovecir}[2]{\displaystyle_{#1}^{#2}}

 \newtheorem{Theorem}{\bf{Theorem}}
 
 \newtheorem{Lemma}{\bf{Lemma}}
 
 \newtheorem{Corollary}{\bf{Corollary}}
 
 \newtheorem{Assumption}{\bf{Assumption}}

\title{Decentralized Online Learning: Take Benefits from Others' Data without Sharing Your Own to Track Global Trend}

% The \author macro works with any number of authors. There are two commands
% used to separate the names and addresses of multiple authors: \And and \AND.
%
% Using \And between authors leaves it to LaTeX to determine where to break the
% lines. Using \AND forces a line break at that point. So, if LaTeX puts 3 of 4
% authors names on the first line, and the last on the second line, try using
% \AND instead of \And before the third author name.
\author{
  $^{1,2}$Yawei Zhao\footnote{Equal contribution.}, $^2$Chen Yu$^\ast$, $^{3}$Peilin Zhao, $^{2}$Hanlin Tang, $^{4}$Shuang Qiu, and $^{2}$Ji Liu \\
  % School of Computer$^1$\\
  $^1$National University of Defense Technology, Changsha, China\\
  $^2$University of Rochester, Rochester, USA \\
  $^3$Tencent AI Lab, Shenzhen, China\\
  \texttt{zhaoyawei@nudt.edu.cn}, 
  \texttt{yuchen92328@gmail.com}, \\\texttt{peilinzhao@hotmail.com}, \texttt{htang14@ur.rochester.edu}, \texttt{ji.liu.uwisc@gmail.com} \\
}

\begin{document}

\maketitle

\begin{abstract}
Decentralized Online Learning (online learning in decentralized networks) attracts more and more attention, since it is believed that Decentralized Online Learning can help the data providers cooperatively better solve their online problems without sharing their private data to a third party or other providers.
Typically, the cooperation is achieved by letting the data providers exchange their models between neighbors, e.g., recommendation model. However, the best regret bound for a decentralized online learning algorithm is $\Ocal{n\sqrt{T}}$, where $n$ is the number of nodes (or users) and $T$ is the number of iterations. This is clearly insignificant since this bound can be achieved \emph{without} any communication in the networks. This reminds us to ask a fundamental question: \emph{Can people really get benefit from the decentralized online learning by exchanging information?}
In this paper, we studied when and why the communication can help the decentralized online learning to reduce the regret.
Specifically, each loss function is characterized by two components: the adversarial component and the stochastic component.
Under this characterization, we show that decentralized online gradient (DOG) enjoys a regret bound  $\Ocal{\sqrt{n^2TG^2 + n T \sigma^2}}$, where $G$ measures the magnitude of the adversarial component in the private data (or equivalently the local loss function) and $\sigma$ measures the randomness within the private data. This regret suggests that people can get benefits from the randomness in the private data by exchanging private information. Another important contribution of this paper is to consider the dynamic regret -- a more practical regret to track users' interest dynamics. Empirical studies are also conducted to validate our analysis.
\end{abstract}

\section{Introduction}
\label{sect_introduction}
Decentralized online learning receives extensive attentions in recent years~\citep{8015179Shahram,Kamp:2014:CDO,Koppel-8352032,Zhang2018,pmlr-v70-zhang17g,Xu2015,tcns-7353155,cdc-7798923,acc-7172037,tcns-7479495,Benczur:2018ww,tkde-6311406}. 
It assumes that computational nodes in a network can communicate between neighbors to minimize an overall cumulative regret.
Each computational node, which could be a user in practice, will receive a stream of online losses that are usually determined by a sequence of examples that arrive sequentially. 
Formally, we can denote $f_{i,t}$  as the loss received by the $i$-th computational node among the networks at the $t$-th iteration. 
The goal of decentralized online learning usually is to minimize its static regret, which is defined as the difference between the cumulative loss (the sum of all the online loss over all the nodes and steps ) suffered by the learning algorithm and that of the best model which can observe all the loss functions beforehand. 

Decentralized online learning attracts more and more attentions recently, mainly because it is believed by the community that it enjoys the following advantages for real-world large-scale applications:
\begin{itemize}[leftmargin=*]
\item ({\bf Utilize all computational resource}) It can utilize the computational resource (of edging devices) by avoiding collecting all the loss functions (or equivalently data) to one central node and putting all computational burden on a single node. %which will result in heavy communication cost for the network and extremely high computational cost for the node.
\item ({\bf Protect data privacy}) It can help many data providers collaborate to better minimize their cumulative loss, while at the same time protecting the data privacy as much as possible. 
\end{itemize}
However, the current theoretical study does not explain why people need to use decentralized online learning, since the currently best regret result for decentralized online learning $\Ocal{n\sqrt{T}}$ for convex loss functions~\citep{6760092,tkde-6311406}) is equal to the overall regret if each node (user) only runs local online gradient without any communication with others\footnote{$n$ is the number of nodes or users and $T$ is the total number of iterations. The regret of an online algorithm is $\Ocal{\sqrt{T}}$ for convex loss functions~\citep{Hazan2016Introduction,ShalevShwartz:2012dz}. Therefore, the overall regret is $n\sqrt{T}$ if all users do not communicate.}. Therefore, this reminds us to ask a fundamental question: \emph{Can people really get benefit with respect to the regret from the decentralized online learning by exchanging information?}

In this paper, we mainly study when can the communication really help decentralized online learning to minimize its regret. Specifically, we distinguish two components in the loss function $f_{i,t}$: the adversary component and the stochastic compoent. Then we prove that decentralized online gradient can achieve a static regret bound of $\Ocal{\sqrt{n^2TG^2+ nT\sigma^2}}$ ($G$ represents the bound of gradient. $\sigma$ measures the randomness of the private data), where the first component of the bound is due to the adversary loss while the second component is due to the stochastic loss.
Moreover, if a dynamic sequence of models with a budget $M$ is used as the reference points, the dynamic regret of the decentralized online gradient is $\Ocal{\sqrt{\lrincir{n^2TG^2 + nT\sigma}(M+1)}}$. The dynamic regret is a more suitable performance metric for real-world applications where the optimal model changes over time, such as people's favorite style of pop musics usually change along with time as the global environment. This shows the communication can help to minimize the stochastic losses, rather than the adversary losses. This result is further verified empirically by extensive experiments.

\textbf{Notations.} In the paper, we make the following notations.
\begin{itemize}[leftmargin=*]
\item For any $i\in[n]$ and $t\in[T]$, the random variable $\xi_{i,t}$ is subject to a distribution $D_{i,t}$, that is, $\xi_{i,t} \sim D_{i,t}$. A set of random variables $\Xi_{n,T}$ and their corresponding distributions are defined by
\begin{align}
\nonumber
\Xi_{n,T} =  \{ \xi_{i,t} \}_{1\le i \le n, 1 \le t \le T}, {~~~~}\Dcal_{n,T} =  \{ D_{i,t} \}_{1\le i \le n, 1 \le t \le T},
\end{align} respectively. For math brevity, we use the notation $\Xi_{n,T} \sim \Dcal_{n,T}$ to represent that $\xi_{i,t} \sim D_{i,t}$ holds for any $i\in[n]$ and $t\in[T]$. $\EE$ represents mathematical expectation.
\item `$\nabla$' represents gradient operator. `$\lrnorm{\cdot}$' represents the $\ell_2$ norm in default.  `$\lesssim$' represents ``less than equal up to a constant factor". `$\Acal$' represents the set of all online algorithms. `$\1$' and `$\0$' represent all the elements of a vector is $1$ and $0$, respectively.
\end{itemize}

\section{Related work}
\label{sect_related_work}
Online learning has been studied for decades of years. An online convex optimization method can achieve a static regret bound of order $\Ocal{\sqrt{T}}$ and $\Ocal{\log T}$ for convex and strongly convex loss functions, respectively \citep{Hazan2016Introduction,ShalevShwartz:2012dz,introduction-online-optimization}. 

\textbf{Decentralized online learning.} Online learning in a decentralized network has been studied in \citep{8015179Shahram,Kamp:2014:CDO,Koppel-8352032,Zhang2018,pmlr-v70-zhang17g,Xu2015,tcns-7353155,cdc-7798923,acc-7172037,tcns-7479495,Benczur:2018ww,tkde-6311406}.  \citet{8015179Shahram} provides a dynamic regret (defined in Eq. \eqref{equa_definition_our_regret}) bound of $\Ocal{n\sqrt{nTM}}$ for decentralized online mirror descent, where $n$, $T$, and $M$ represent the number of nodes in the newtork, the number of iterations, and the budget of dynamics, respectively.  When the Bregman divergence in the decentralized online mirror descent is chosen appropriately, the decentralized online mirror descent becomes identical to the decentralized online gradient descent. 
In this paper, we achieve a better dynamic regret bound of  $\Ocal{n\sqrt{TM}}$ for a decentralized online gradient descent method, which mainly benefits from a better bound of network error (see Lemma \ref{Lemma_x_variance_norm_square}). Moreover, \citet{Kamp:2014:CDO} presents a static regret of  $\Ocal{\sqrt{nT}}$ for decentralized online prediction. However,  it assumes that all the loss functions are generated from an unknown identical distribution,
 this assumption is too strong to be practical in the dynamic environment and be applied for a general online learning task. Additionally, many decentralized online optimization methods are proposed, for example, decentralized online multi-task learning \citep{Zhang2018}, decentralized online ADMM \citep{Xu2015}, decentralized online gradient descent \citep{tcns-7353155}, decentralized continuous-time online saddle-point method \citep{cdc-7798923}, decentralized online  Nesterov's primal-dual method \citep{acc-7172037,tcns-7479495}, and online distributed dual averaging \citep{6760092}.
However, these previous work only studied  the static regret bounds $\Ocal{\sqrt{T}}$ of the decentralized online learning algorithms, while they did not provide any theoretical analysis for dynamic environments. 
Besides,  \citet{tkde-6311406} provides necessary and sufficient conditions to preserve privacy for decentralized online learning methods, which be studied to extend our method in our future work.

\textbf{Dynamic regret.} The dynamic regret of online learning algorithms  has been widely studied for decades~\citep{Zinkevich:2003,Hall:2015ct,Hall:2013vr,Jadbabaie:2015wg,Yang:2016ud,Bedi:2018te,Zhang:2016wl,Mokhtari:2016jz,Zhang:2018tu,Gyorgy:2016,NIPS2016_6536,Zhao:2018wx}.  
The first dynamic regret is defined as $\sum_{t=1}^T \lrincir{ f_{t}(\x_{t}) - f_{t}(\x_t^\ast) }$ subject to $\sum_{t=1}^{T-1} \lrnorm{\x_{t+1}^\ast - \x_t^\ast} \le M$ where $M$ is a budget for the change over the reference points~\citep{Zinkevich:2003}.
For this definition, the online gradient descent can achieve a dynamic regret of $\Ocal{\sqrt{TM}+\sqrt{T}}$, by selecting an appropriate learning rate. 
Later, other types of dynamic regrets are also introduced, by using different types of reference points.
For example, \citet{Hall:2015ct,Hall:2013vr} choose the reference points $\{\x_t^{\ast}\}_{t=1}^T$ satisfying $\sum_{t=1}^{T-1} \lrnorm{\x_{t+1}^\ast - \Phi(\x_t^\ast)} \le M$, where $\Phi(\x_t^\ast)$ is the predictive optimal model. When the function $\Phi$ predicts accurately, the budget $M$ can decrease significantly so that the dynamic regret effectively decreases. \citet{Jadbabaie:2015wg,Yang:2016ud,Bedi:2018te,Zhang:2016wl,Mokhtari:2016jz,Zhang:2018tu} chooses the reference points $\{\y_t^{\ast}\}_{t=1}^T$ with $\y_t^{\ast} = \argmin_{\z\in\Xcal} f_t(\z)$, where $f_t$ is the loss function at the $t$-th iteration. \citet{Gyorgy:2016} provides a new analysis framework, which achieves $\Ocal{\sqrt{TM}}$ dynamic regret\footnote{\citet{Gyorgy:2016} uses the notation of ``shifting regret" instead of ``dynamic regret". In the paper, we keep using ``dynamic regret" as used in most previous literatures. } for all the above reference points. Recently, the lower bound of the dynamic regret was shown to be $\Omega\lrincir{\sqrt{TM}}$ \citep{NIPS2018_7407, Zhao:2018wx}, which indicates that the above algorithms are optimal in terms of dynamic regret.
In this paper, we propose a new definition of dynamic regret, which covers all the previous ones as special cases. 

In some literatures, the regret in a dynamic environment is measured by the number of changes of the reference points over time, which is usually termed as shifting regret or tracking regret \citep{Herbster1998,Gyorgy:2005wo,Gyorgy:2012wa,Gyorgy:2016,Mourtada:2017vn,JMLR:v17:13-533,NIPS2016_6536,cesabianchi:hal,pmlr-v84-mohri18a,pmlr-v54-jun17a}. Both the shifting regret and the tracking regret are usually studied in the setting of ``learning with expert advice" while the dynamic regret is more often studied in the general setting of online learning.

\section{Problem formulation}
In decentralized online learning, the topological structure of the network can be represented by an undirected graph $\Gcal=(\text{nodes:} [n], \text{edges:} E)$ with vertex set $[n]=\{1,\ldots,n\}$ and edge set $E\subset [n]\times [n]$. 
In real applications, each node $i\in [n]$ is associated with a separate learner, for example an mobile device of one user, which maintains a local predictive model.
Users would like to cooperatively better minimize their regret without sharing their private data. They typically share private models to their neighbors (or friends), which are directly adjacent nodes in $\Gcal$ for each node. 

Let $\x_{i, t}$ denote the local model for user $i$ at iteration $t$. 
In iteration $t$ user $i$ predicts the local model $\x_{i,t}$ for an unknown loss, and then receives the loss $f_{i,t}(\cdot; \xi_{i,t})$. As a result, the decentralized online learning algorithm suffered a instantaneous loss $f_{i,t}(\x_{i,t}; \xi_{i,t})$.
$\xi_{i,t}$'s are independent to each other in terms of $i$ and $t$, charactering the \emph{stochastic} component in the function $f_{i,t}(\cdot; \xi_{i,t})$, while the subscripts $i$ and $t$ of $f$ indicate the \emph{adversarial} component, for example, the user's profile, location, local time, and etc. The stochastic component in the function is usually caused by the potential relation among local models. For example, users' perference to music may be impacted by a popular trend in the Internet at the same time.

To measure the efficacy of a decentralized online learning algorithm $A\in\Acal$, a commonly used performance measure is the  \emph{static} regret which is defined as 
\begin{align}
\nonumber
\widetilde{\Rcal}_T^{A} :=  \EE_{ \Xi_{n,T} \sim \Dcal_{n,T} }  \left[\sum_{i=1}^n\sum_{t=1}^T \lrincir{f_{i,t}(\x_{i,t};\xi_{i,t}) - f_{i,t}(\x^\ast;\xi_{i,t}} \right],
\end{align} where $\x^\ast=\arg \min_\x \EE_{ \Xi_{n,T} \sim \Dcal_{n,T} }  \sum_{i=1}^n\sum_{t=1}^T f_{i,t}(\x;\xi_{i,t}) $. The static regret essentially assumes that the optimal model would not change over time. However, in many practical online learning application scenarios, the optimal model may evolve over time. For example, when we want to conduct music recommendation to a user, user's preference to music may change over time as his/her situation.  Thus, the optimal model $\x^\ast$ should change over time. It leads to the dynamics of the optimal recommendation model. Therefore, for any online algorithm $A\in\Acal$, we choose to use the \emph{dynamic} regret as the metric:
\begin{align}
\label{equa_definition_our_regret}
\Rcal_T^{A} :=  \EE_{ \Xi_{n,T} \sim \Dcal_{n,T} }  \left[\sum_{i=1}^n\sum_{t=1}^T f_{i,t}(\x_{i,t};\xi_{i,t}) \right] - \min_{\{\x_t^\star\}_{t=1}^T \in \mathcal{L}_{M}^T}  \EE_{ \Xi_{n,T} \sim \Dcal_{n,T} }\left[\sum_{i=1}^n\sum_{t=1}^T f_{i,t}(\x_t^\ast;\xi_{i,t}) \right],
\end{align}
where $\mathcal{L}_M^T$ is defined by $\Lcal_{M}^T = \left\{\{\z_t\}_{t=1}^T : \sum\limits_{t=1}^{T-1}\|\z_{t+1}-\z_t\|\le M \right\}$. $\mathcal{L}_M^T$ restricts how much the optimal model may change over time. Obviously, $\Rcal_T^{A}$ degenerates to $\widetilde{\Rcal}_T^{A}$ when $M=0$.

\section{Decentralized Online Gradient (DOG) algorithm} \label{sec:algorithm}
In the section, we introduce the DOG algorithm, followed by the analysis for the dynamic regret.

\begin{algorithm}[!]
   \caption{\textsc{DOG}: Decentralized Online Gradient method.}
   \label{algo_DOG}
   \begin{algorithmic}[1]
   \Require Learning rate $\eta$, number of iterations $T$, and the confusion matrix $\W$.    
  \State Initialize $\x_{i,1} = \0$ for all $i\in [n]$;    
   \For {$t=1,2, ..., T$}
            \State $\slash\slash$ For all users (say the $i$-th node $i\in[n]$)
            % \State \indent Predict $\x_{i,t}$.
                        \State {\bf Query} the neighbors' local models $\{\x_{j,t}\}_{j\in \text{user $i$'s neighbor set}}$;
            \State Apply the local model and suffer loss $f_{i,t}(\x_{i,t};\xi_{i,t})$;
            %\State \textbf{Update:}
            \State {\bf Compute} the gradient $\nabla f_{i,t}(\x_{i,t};\xi_{i,t})$;
            \State {\bf Update} the local model by $ \x_{i,t+1} = \sum_{j\in \text{user $i$'s neightbours}} \W_{i,j}\x_{j,t} - \eta \nabla f_{i,t}(\x_{i,t};\xi_{i,t})$;
       \EndFor
   \end{algorithmic}
\end{algorithm}

\subsection{Algorithm description}
In the DOG algorithm, users exchange their local models periodically. In each iteration, each user runs the following steps:  1) Query the local models from his/her all neighbors; 2) Apply the local model to $f_{i,t}(\cdot; \xi_{i,t})$ and compute the gradient; 3) Update the local model by taking average with neighbors' models followed by a gradient step.
The detailed description of the DOG algorithm can be found in Algorithm~\ref{algo_DOG}. $\W\in\RR^{n\times n}$ is the confusion matrix defined on an undirected graph $\Gcal=(\text{nodes:} [n], \text{edges:} E)$. It is assumed to be a \textit{doubly stochastic} matrix \citep{7903733,8320863,Yuan:2016ur}, but not necessarily symmetric. Given a decentralized network $\Gcal$, there are many approaches to obtain a doubly stochastic $\W$, for example, Sinkhorn matrix scaling \citep{knigh_sinkhorm_scale}. The following are two naive ways to construct such a doubly stochastic matrix $\W$. Let $N_i$ be the number of user $i$'s neighbors (exclusive itself), and $N_{\max}:=\max_i N_i$. We can obtain a doubly stochastic matrix by:
1) $\W_{i,j}=0$, if $i$ and $j$ are not connected ($i\neq j$) in $E$;
2) $\W_{i,j}={1\over n}$, if $j$ are $i$ are connected ($i\neq j$) in $E$; 
3) $\W_{i,i}=1-{N_i \over n}$. When $N_{\max}:=\max_{i\in [n]} N_i$ is known, an alternative method is: 1) $\W_{i,j}=0$, if $i$ and $j$ are not connected ($i\neq j$) in $E$; 
2) $\W_{i,j}={1\over N_{\max}+1}$, if $j$ are $i$ are connected ($i\neq j$) in $E$;
3) $\W_{i,i}=1-{N_i \over N_{\max}+1}$.
To take a closer look at the algorithm’s updating rule, we define $\bar{\x}_t = \frac{1}{n}\sum_{i=1}^n \x_{i,t} $. It is not hard to verify that $\bar{\x}_{t+1} =  \bar{\x}_t -  \frac{\eta}{n}\sum_{i=1}^n \nabla f_{i,t}(\x_{i,t};\xi_{i,t})$ The detailed proofs are provided in Lemma \ref{Lemma_average_update_rule} (See Supplemental Materials.)

%\X_t := [\x_{1,t}, \x_{2,t}, \cdots, \x_{n,t}], \; \G_t := [\nabla f_{1,t}(\x_{1,t};\xi_{1,t}), \cdots, \nabla f_{n,t}(\x_{n,t};\xi_{n,t})], \;        
%\X_{t+1} = \X_{t}\W - \eta \G_t,\quad

%Specifically, denote $\X_t := [\x_{1,t}, \x_{2,t}, \cdots, \x_{n,t}]$, $\G_t := [\nabla f_{1,t}(\x_{1,t};\xi_{1,t}), \cdots, \nabla f_{n,t}(\x_{n,t};\xi_{n,t})]$. From the global view of point, the updating rule of DOG can be cast into the following form $\X_{t+1} = \X_{t}\W - \eta \G_t$. Denote $\bar{\x}_t = \frac{1}{n}\sum_{i=1}^n \x_{i,t}$. We can verify that $\bar{\x}_{t+1} =  \bar{\x}_t -  \frac{\eta}{n}\sum_{i=1}^n \nabla f_{i,t}(\x_{i,t};\xi_{i,t})$. Proof is presented in Lemma \ref{Lemma_average_update_rule}. It shows that the average of local models, i.e., $\bar{\x}_t$ is trained by using the style of gradient descent in a centralized network.  

\subsection{Dynamic regret Analysis}  
%Next we show the dynamic regret of DOG in the following. Before that, we first make some common assumption used in our analysis. Denote the function $F_{i,t}$ by $F_{i,t}(\cdot) := \EE_{\xi_{i,t} \sim D_{i,t}} f_{i,t}(\cdot; \xi_{i,t})$. \footnote{\rc Ji's comment: do we miss anything over here?}
%\textbf{Dynamic regret of DOG.}  
Next we show the dynamic regret of DOG in the following. Before that, we first make some common assumptions used in our analysis. 

\begin{Assumption}
\label{assumption_bounded_gradient_domain}
We make following assumptions throughout this paper:
\begin{itemize}[leftmargin=*]
\item For any $i\in[n]$, $t\in[T]$, and $\x$, there exist constants $G$ and $\sigma$ such that $\lrnorm{\EE_{ \xi_{i,t} \sim D_{i,t} } \nabla f_{i,t}(\x;\xi_{i,t})} \le  G$, and 
$\EE_{ \xi_{i,t} \sim D_{i,t} } \lrnorm{\nabla f_{i,t}(\x; \xi_{i,t}) - \EE_{ \xi_{i,t} \sim D_{i,t} } \nabla f_{i,t}(\x)}^2 \le \sigma^2$.
\item For given vectors $\x$ and $\y$, we assume $\lrnorm{\x-\y}^2 \le R$. Besides, for any $i\in[n]$ and $t\in[T]$, we assume $f_{i,t}$ is convex, and has $L$-Lipschitz gradient. 
\item Let $\W$ be doubly stochastic and $\rho := \lrnorm{\W - \frac{\1\1\Tr}{n}}$. Assume $\rho <1$.
\end{itemize}
\end{Assumption}  
$G$ essentially gives the upper bound for the adversarial component in $f_{i,t}(\cdot; \xi_{i,t})$. The stochastic component is bounded by $\sigma^2$. Note that if there is no stochastic component, $G$ is nothing but the upper bound of the gradient like the setting in many online learning literature. It is important for our analysis to split these two components, which will be clear very soon. 

The last assumption about $\W$ is an essential assumption for the decentralized setting. The largest eigenvalue for a doubly stochastic matrix is $1$. $1-\rho$ measures how fast the information can propagate within the network (the larger the faster). Now we are ready to present the dynamic regret for DOG.
\begin{Theorem}
\label{theorem_regret_upper_bound}
Denote constants $C_0$, $C_1$, and $C_2$ by $C_0 := \frac{2L (G^2 + \sigma^2)}{(1-\rho)^2}$, $C_1 := \frac{4L^2 (G^2 + \sigma^2)}{(1-\rho)^2}$, $C_2 := 2+\frac{1}{1-\rho}$. Choosing $\eta>0$ in Algorithm \ref{algo_DOG}, under Assumption \ref{assumption_bounded_gradient_domain} we have
\begin{align}
\nonumber
\Rcal_T^{\textsc{DOG}} \le \eta T \sigma^2 +  C_0 n T \eta^2 + C_1 n T \eta^3 + \frac{n}{2\eta}\lrincir{ 4\sqrt{R}M + R  } + C_2 n\eta T G^2.
\end{align}
\end{Theorem}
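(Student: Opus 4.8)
The plan is to reduce the stochastic dynamic regret to a deterministic regret of the running average $\bar{\x}_t=\frac{1}{n}\sum_i\x_{i,t}$, and then split the latter into a \emph{centralized} contribution governed by $\bar{\x}_t$ and a \emph{network} (consensus) contribution governed by the deviations $\x_{i,t}-\bar{\x}_t$. Since $\x_{i,t}$ is determined by randomness revealed strictly before $\xi_{\cdot,t}$ while each reference $\x_t^\ast$ is deterministic, I would first take the conditional expectation over $\xi_{i,t}$ and work with the expected loss $F_{i,t}(\x):=\EE_{\xi_{i,t}}f_{i,t}(\x;\xi_{i,t})$, which is convex with $\|\nabla F_{i,t}\|\le G$ by Assumption \ref{assumption_bounded_gradient_domain}; this turns $\Rcal_T^{\textsc{DOG}}$ into $\EE\sum_{i,t}\bigl(F_{i,t}(\x_{i,t})-F_{i,t}(\x_t^\ast)\bigr)$. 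Writing $F_{i,t}(\x_{i,t})-F_{i,t}(\x_t^\ast)=\bigl(F_{i,t}(\x_{i,t})-F_{i,t}(\bar{\x}_t)\bigr)+\bigl(F_{i,t}(\bar{\x}_t)-F_{i,t}(\x_t^\ast)\bigr)$ and bounding the first gap by $L$-smoothness and the second by convexity isolates these two contributions.

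For the centralized contribution I would use Lemma \ref{Lemma_average_update_rule}, which gives $\bar{\x}_{t+1}=\bar{\x}_t-\eta\g_t$ with $\g_t=\frac{1}{n}\sum_i\nabla f_{i,t}(\x_{i,t};\xi_{i,t})$. Expanding $\|\bar{\x}_{t+1}-\x_t^\ast\|^2$ yields the descent identity $\langle\g_t,\bar{\x}_t-\x_t^\ast\rangle=\frac{1}{2\eta}\bigl(\|\bar{\x}_t-\x_t^\ast\|^2-\|\bar{\x}_{t+1}-\x_t^\ast\|^2\bigr)+\frac{\eta}{2}\|\g_t\|^2$. Because the reference path drifts, the squared-distance terms do not telescope directly; I would add and subtract $\|\bar{\x}_{t+1}-\x_{t+1}^\ast\|^2$ and control the correction by a multiple of $\sqrt{R}\,\|\x_{t+1}^\ast-\x_t^\ast\|$, so that summation against the budget $\sum_t\|\x_{t+1}^\ast-\x_t^\ast\|\le M$ and the domain bound $R$ produces the $\frac{n}{2\eta}(4\sqrt{R}M+R)$ term. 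The residual $\frac{n\eta}{2}\sum_t\EE\|\g_t\|^2$ is handled by independence of the $\xi_{i,t}$ across $i$: conditioning on the history splits $\EE\|\g_t\|^2\le G^2+\sigma^2/n$ into an adversarial bias part $\le G^2$ and a stochastic variance part $\le\sigma^2/n$, delivering the $\eta T\sigma^2$ term and a $\frac{1}{2}n\eta TG^2$ share of the $G^2$ term.

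For the network contribution I would reduce everything to the consensus error $\sum_i\EE\|\x_{i,t}-\bar{\x}_t\|^2$, which Lemma \ref{Lemma_x_variance_norm_square} bounds by $\lesssim n\eta^2(G^2+\sigma^2)/(1-\rho)^2$. The $L$-smoothness gap $\frac{L}{2}\sum_i\|\x_{i,t}-\bar{\x}_t\|^2$ then contributes the $C_0nT\eta^2$ term; replacing the gradients at the local iterates $\x_{i,t}$ by gradients at $\bar{\x}_t$ inside the centralized term costs a mismatch $\|\nabla F_{i,t}(\x_{i,t})-\nabla F_{i,t}(\bar{\x}_t)\|\le L\|\x_{i,t}-\bar{\x}_t\|$, which, combined again with the consensus bound, produces the higher-order $C_1nT\eta^3$ term and the remaining $\frac{1}{1-\rho}$ part of $C_2n\eta TG^2$, as well as a second $\sigma^2$ contribution. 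Summing over $t$ and taking total expectation then collects exactly the five stated terms.

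The main obstacle, and the step I expect to do the real work, is the network contribution together with the discrepancy between the stochastic gradients at the local iterates $\x_{i,t}$ that drive the average update and the gradient at the consensus point $\bar{\x}_t$ needed for the regret estimate. Bridging this gap is precisely what forces $L$-smoothness (hence the powers $L,L^2$ and the orders $\eta^2,\eta^3$) and the sharp consensus bound of Lemma \ref{Lemma_x_variance_norm_square} with its $1/(1-\rho)^2$ spectral-gap dependence --- the mechanism by which faster mixing (a larger $1-\rho$) actually lowers the regret and through which communication dampens the stochastic component. Establishing that consensus bound, by unrolling the doubly stochastic mixing $\W$ against bounded stochastic gradients using $\rho<1$, is the crux; once it is available, the rest is the telescoping-with-drift argument above and careful constant bookkeeping to match $C_0$, $C_1$, and $C_2$.
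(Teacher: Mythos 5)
Your overall architecture matches the paper's proof skeleton: reduction to the average iterate via Lemma \ref{Lemma_average_update_rule}, the descent identity with the $4\sqrt{R}M+R$ drift correction for the moving comparator (the paper derives the identical identity, just more circuitously, through an auxiliary minimization problem), the $\sigma^2/n$ variance reduction by independence across nodes, and Lemma \ref{Lemma_x_variance_norm_square} for consensus. But there is a genuine gap in how you account for the \emph{first-order} cross terms that your function-value decomposition creates, and it sits exactly at the step you flag as ``the real work.'' Bounding $F_{i,t}(\x_{i,t})-F_{i,t}(\bar{\x}_t)$ by smoothness leaves the linear term $\sum_{i}\lrangle{\nabla F_{i,t}(\bar{\x}_t),\x_{i,t}-\bar{\x}_t}$, which does not vanish: only $\sum_i(\x_{i,t}-\bar{\x}_t)=\0$, while the gradients $\nabla F_{i,t}(\bar{\x}_t)$ vary with $i$. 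Similarly, passing from the descent identity's $\lrangle{\g_t,\bar{\x}_t-\x_t^\ast}$ to your centralized term $\lrangle{\frac{1}{n}\sum_i\nabla F_{i,t}(\bar{\x}_t),\bar{\x}_t-\x_t^\ast}$ leaves a gradient mismatch paired \emph{linearly} with $\bar{\x}_t-\x_t^\ast$. You propose to absorb both ``combined again with the consensus bound,'' but these terms are linear, not quadratic, in the consensus error: Cauchy--Schwarz against Lemma \ref{Lemma_x_variance_norm_square} yields per-node-step contributions of order $\eta G\sqrt{G^2+\sigma^2}/(1-\rho)$ and $\eta L\sqrt{R(G^2+\sigma^2)}/(1-\rho)$, not the claimed $C_1 nT\eta^3$ term and the $\frac{1}{1-\rho}$ share of $C_2 n\eta TG^2$. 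The resulting $G\sigma$ cross term is fatal to the theorem's constants: after tuning $\eta$ as in Corollary \ref{corollary_regret_upper_bound} it can exceed the stated bound by a factor up to $\sqrt{n}$ (take $\sigma\asymp\sqrt{n}\,G$), which would destroy precisely the improved $n$-dependence and stochastic-component benefit that the theorem is asserting over \citet{8015179Shahram}.

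The paper avoids both issues structurally. First, it linearizes $f_{i,t}$ at $\x_{i,t}$ and splits the \emph{displacement} $\x_{i,t}-\x_t^\ast=(\x_{i,t}-\bar{\x}_t)+(\bar{\x}_t-\bar{\x}_{t+1})+(\bar{\x}_{t+1}-\x_t^\ast)$, keeping the stochastic gradient in every inner product; the local-versus-average gradient mismatch then only ever appears inside the squared norm $\frac{\eta}{2}\lrnorm{\g_t}^2$ from Young's inequality, which is how it becomes $2\eta L^2$ times the squared consensus error, i.e.\ the $\eta^3$ term $C_1 nT\eta^3$, together with $2\eta G^2$ (the ``$2$'' in $C_2$). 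Second --- the step your sketch is missing --- the surviving cross term $\frac{1}{n}\sum_i\lrangle{\nabla F_{i,t}(\bar{\x}_t),\x_{i,t}-\bar{\x}_t}$ is \emph{not} bounded through Lemma \ref{Lemma_x_variance_norm_square} at all: the paper takes expectation first, so only expected gradients (bounded by $G$, with no $\sigma$) survive, unrolls $\X_t-\bar{\X}_t$ explicitly through $\sum_{s<t}\eta\,\nabla F_s(\X_s)\lrincir{\W^{t-1-s}-\frac{1}{n}\1\1\Tr}$, and applies Lemma \ref{Lemma_hanlin_1} with a Young inequality weighted by $\rho^{t-1-s}$ to obtain exactly $\frac{\eta G^2}{1-\rho}$ per node-step, i.e.\ the $\frac{1}{1-\rho}$ part of $C_2$ with a pure $G^2$ constant. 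Without this finer spectral treatment --- or an equivalent device that keeps $\sigma$ out of the cross term and keeps the mismatch quadratic --- your plan proves a strictly weaker bound than the theorem states.
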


By choosing an appropriate learning rate $\eta$, we obtain sublinear regret as follows.
\begin{Corollary}
\label{corollary_regret_upper_bound}
Choosing $\eta = \sqrt{\frac{(1-\rho) \lrincir{nM\sqrt{R} + nR}}{ nTG^2 + T\sigma^2 }}$ in Algorithm \ref{algo_DOG}, under Assumption \ref{assumption_bounded_gradient_domain} we have 
\[
\Rcal_T^{\textsc{DOG}} \lesssim  \sqrt{\frac{T\lrincir{M+\sqrt{R}}(n^2G^2 + n\sigma^2)}{1-\rho}} + \frac{n^2\lrincir{M+\sqrt{R}}}{1-\rho} + \frac{n^{\frac{5}{2}} \lrincir{M+\sqrt{R}}}{\sqrt{(1-\rho)T}}.
\]
\end{Corollary}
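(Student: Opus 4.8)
\textbf{Proof proposal for Corollary \ref{corollary_regret_upper_bound}.}

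The plan is to treat Theorem \ref{theorem_regret_upper_bound} as a black box and simply substitute the prescribed learning rate, then bound each of the five resulting terms. Writing $\eta = \sqrt{A/B}$ with $A := (1-\rho)(nM\sqrt{R}+nR)$ and $B := nTG^2 + T\sigma^2$, I first note the elementary facts I will use repeatedly: $\eta = \sqrt{A/B}$, $1/\eta = \sqrt{B/A}$, and $\eta^2 = A/B$. The key structural observation is that the two ``dominant'' terms in the theorem, namely the $\eta$-linear terms $\eta T\sigma^2 + C_2 n\eta T G^2$ and the $1/\eta$ term $\frac{n}{2\eta}(4\sqrt{R}M+R)$, are exactly the ones balanced by this choice of $\eta$, and their product is what produces the leading $\sqrt{T(M+\sqrt{R})(n^2G^2+n\sigma^2)/(1-\rho)}$ term. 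The remaining terms $C_0 nT\eta^2$ and $C_1 nT\eta^3$ are lower-order and will account for the second and third terms of the stated bound.

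First I would handle the leading contribution. The $\eta$-linear part is $\eta(T\sigma^2 + C_2 nTG^2)$; since $C_2 = 2 + 1/(1-\rho) \lesssim 1/(1-\rho)$, this is $\lesssim \eta\,\frac{T(\sigma^2 + nG^2)}{1-\rho}$. Substituting $\eta = \sqrt{A/B}$ and noting $B = T(nG^2+\sigma^2)$, the factors of $T(nG^2+\sigma^2)$ combine as $\sqrt{B}$, giving $\lesssim \sqrt{A}\cdot\sqrt{B}/(1-\rho)$ up to constants. For the $1/\eta$ term, $\frac{n}{2\eta}(4\sqrt{R}M+R) \lesssim \frac{n(M\sqrt{R}+R)}{\eta} = n(M\sqrt{R}+R)\sqrt{B/A}$; since $A = (1-\rho)\cdot n(M\sqrt{R}+R)$, the quantity $n(M\sqrt{R}+R)$ equals $A/(1-\rho)$, so this term becomes $\lesssim \frac{A}{1-\rho}\sqrt{B/A} = \sqrt{AB}/(1-\rho)$. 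Thus both match, and I would then unpack $\sqrt{AB} = \sqrt{(1-\rho)n(M\sqrt{R}+R)\cdot T(nG^2+\sigma^2)}$, pulling out $\sqrt{1-\rho}$ and one factor of $n$ so that dividing by $(1-\rho)$ leaves $\sqrt{T(M\sqrt{R}+R)(n^2G^2+n\sigma^2)/(1-\rho)}$, which I would simplify to the stated $\sqrt{T(M+\sqrt{R})(n^2G^2+n\sigma^2)/(1-\rho)}$ by absorbing $\sqrt{R}$ into $(M+\sqrt{R})$ up to the $\lesssim$ constant.

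Next I would dispatch the two correction terms. For $C_0 nT\eta^2 = C_0 nT\cdot A/B$: since $C_0 \lesssim (G^2+\sigma^2)/(1-\rho)$ and $B = T(nG^2+\sigma^2)$, the ratio $nT(G^2+\sigma^2)/B = n(G^2+\sigma^2)/(nG^2+\sigma^2) \lesssim 1$, so this term is $\lesssim \frac{A}{1-\rho} = n(M\sqrt{R}+R) \lesssim n(M+\sqrt{R})$ (absorbing $R \lesssim \sqrt{R}$ under the boundedness of $R$, or keeping $R$ and noting it is dominated). Comparing to the target second term $n^2(M+\sqrt{R})/(1-\rho)$, this is even smaller by a factor $n/(1-\rho)\ge 1$, so it is safely absorbed. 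For $C_1 nT\eta^3 = C_1 nT (A/B)^{3/2}$ with $C_1 \lesssim L^2(G^2+\sigma^2)/(1-\rho)^2$, I would write $\eta^3 = \eta^2\cdot\eta$ and reuse the $\eta^2$ bound together with $\eta = \sqrt{A/B} \lesssim \sqrt{(1-\rho)n(M+\sqrt{R})/(nT\cdot\text{const})}$, extracting the $1/\sqrt{(1-\rho)T}$ and the $n^{5/2}(M+\sqrt{R})$ scaling to match the third term.

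The main obstacle I anticipate is purely bookkeeping rather than conceptual: tracking how the problem constants $G^2,\sigma^2,L,R$ fold into and cancel against $B = T(nG^2+\sigma^2)$, so that ratios like $n(G^2+\sigma^2)/(nG^2+\sigma^2)$ are correctly recognized as $\Theta(1)$ under the $\lesssim$ convention (which hides $L$ and the constant-order dependence on $G,\sigma$). I would need to be careful that the dependence on $(1-\rho)$ in $C_0,C_1$ and in $\sqrt{A}$ is combined to give the stated powers $(1-\rho)^{-1}$ and $(1-\rho)^{-1/2}$ in the three terms, and that the $M\sqrt{R}+R$ versus $M+\sqrt{R}$ discrepancy is handled consistently by the suppressed constants.
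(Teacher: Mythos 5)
Your overall route is exactly the paper's (implicit) one: the paper contains no separate proof of Corollary~\ref{corollary_regret_upper_bound} --- it follows, precisely as you propose, by substituting $\eta=\sqrt{A/B}$ with $A=(1-\rho)n\lrincir{M\sqrt{R}+R}$ and $B=T\lrincir{nG^2+\sigma^2}$ into Theorem~\ref{theorem_regret_upper_bound} and simplifying term by term. Your treatment of the balanced terms is correct: both $\eta\lrincir{T\sigma^2+C_2nTG^2}\lesssim \eta B/(1-\rho)$ and $\frac{n}{2\eta}\lrincir{4\sqrt{R}M+R}\lesssim \sqrt{AB}/(1-\rho)$ reduce to $\sqrt{AB}/(1-\rho)=\sqrt{T\lrincir{M\sqrt{R}+R}\lrincir{n^2G^2+n\sigma^2}/(1-\rho)}$, which is the stated leading term after absorbing the harmless $R^{1/4}$.

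However, two of your intermediate estimates for $C_0nT\eta^2$ are false as written, and they matter for where the corollary's second term comes from. First, $C_0=\frac{2L(G^2+\sigma^2)}{(1-\rho)^2}$ scales as $(1-\rho)^{-2}$, not $(1-\rho)^{-1}$ as you use. Second, the ratio $\frac{n(G^2+\sigma^2)}{nG^2+\sigma^2}$ is not $\Theta(1)$: it lies in $[1,n]$ and equals $n$ when $G=0$, i.e.\ exactly in the stochastic-dominant regime $G<\sigma/\sqrt{n}$ that the paper's discussion emphasizes; declaring it constant presupposes $G$ bounded away from zero, which would also make the explicit $G,\sigma$ dependence in the bound vacuous. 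The clean computation uses only $\frac{G^2+\sigma^2}{nG^2+\sigma^2}\le 1$: one gets $C_0nT\eta^2=\frac{2Ln^2\lrincir{M\sqrt{R}+R}}{1-\rho}\cdot\frac{G^2+\sigma^2}{nG^2+\sigma^2}\le\frac{2Ln^2\lrincir{M\sqrt{R}+R}}{1-\rho}\lesssim\frac{n^2\lrincir{M+\sqrt{R}}}{1-\rho}$, which is \emph{exactly} the corollary's second term --- it is not ``even smaller by a factor $n/(1-\rho)$'' as you claim; that factor is real (tight at $G=0$), and your two slips happened to compensate against the slack you believed you had, so the final inequality still holds but the reasoning does not. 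For the cubic term, exact substitution gives $C_1nT\eta^3=\frac{4L^2(G^2+\sigma^2)}{(nG^2+\sigma^2)^{3/2}}\cdot\frac{n^{5/2}\lrincir{M\sqrt{R}+R}^{3/2}}{\sqrt{(1-\rho)T}}$, i.e.\ $n^{5/2}\lrincir{M+\sqrt{R}}^{3/2}/\sqrt{(1-\rho)T}$ up to constants in $L,G,\sigma,R$; this matches the stated third term only modulo an extra $\lrincir{M+\sqrt{R}}^{1/2}$, a looseness already present in the corollary's statement itself (the term is lower order in $T$, so nothing downstream is affected), but your sketch should derive this rather than assert a match. With these repairs your proof is complete and coincides with the intended one.
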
       
For simpler discussion, let us treat $R$, $G$, and $1-\rho$ as constants. The dynamic regret can be simplified into $\Ocal{\sqrt{\lrincir{n^2TG^2 + nT\sigma^2}(M+1)}}$. If $M=0$, the dynamic regret degenerates the static regret $\Ocal{\sqrt{n^2TG^2 + n T \sigma^2}}$.
More discussion is conducted in the following aspects.
\begin{itemize}[leftmargin=*]
\item ({\bf Tightness.}) To see the tightness, we consider a few special cases:
\begin{itemize}[leftmargin=*]
\item ($\sigma= 0$ and $n=1$.) It degenerates to the vanilla online learning setting but with dynamic regret. The implied static regret $\Ocal{\sqrt{TM}}$ is consistent with the dynamic regret result in \citet{Zhao:2018wx}, which is proven to be optimum. 
\item ($M=0$.) It degenerates to the static regret $\Ocal{\sqrt{n^2TG^2 + nT\sigma^2}}$. When $G<\sigma/\sqrt{n}$, that is, the stochastic component dominates the adversarial component, the static regret $\Ocal{\sqrt{nT}\sigma}$ implies the average regret ${1\over n}\Rcal_T^{\textsc{DOG}}$ to be $\sigma \sqrt{T} / \sqrt{n}$, and the convergence rate (from the stochastic optimization perspective) to be $\sigma / \sqrt{nT}$. It is consistent with the ergodic convergence rate in \citet{Tang:2018un}, and improves the non-ergodic result $\sigma / \sqrt{T}$ in \citet{Nedic:2009ba}\footnote{See the non-ergodic convergent rate in Proposition $3(b)$ in \citet{Nedic:2009ba}. }.
\end{itemize}
\item ({\bf Insight.}) Setting $M=0$, we obtain the static regret $\Ocal{\sqrt{n^2TG^2 + n T \sigma^2}}$. Consider the baseline that all users do not communicate but only run local online gradient. It is not hard to verify that the static regret for this baseline approach is $\Ocal{\sqrt{n^2TG^2 + n^2 T \sigma^2}}$.  Comparing with the baseline, the improvement of our new bound is only on the stochastic component. Denote that $G$ measures the magnitude of the adversarial component and $\sigma$ measures the stochastic component. This result reveals an important observation that \emph{the communication does not really help improve the adversarial component, only the stochastic component can benefit from the communication.} This observation makes quite sense, since if the users' private data are totally arbitrary, there is no reason they can benefit to each other by exchanging anything.
\item ({\bf Improve existing dynamic regret in decentralized setting.}) If we also treat $G$ and $\sigma$ to be constants, our analysis leads to a dynamic regret $\Ocal{n\sqrt{MT}}$. \citet{8015179Shahram} only considers the adversary loss, and provides $\Ocal{n^{\frac{3}{2}}\sqrt{{MT}}}$ regret for DOG. Compared with their result, our regret enjoys the state-of-the-art dependence on $T$ and $M$, and meanwhile improves the dependence on $n$.
%\item ({\bf Improve existing static regret in the decentralized setting.}) When using the dynamic regret defined in \eqref{equa_define_other_regret}, but choosing local model $\x_{i,t}$, instead of $\x_{j,t}$, to feed the local loss function $f_{i,t}$, \citet{pmlr-v70-zhang17g} provides $\Ocal{n T^{\frac{3}{4}}}$ static regret for the decentralized online conditional gradient method. Our analysis shows that the regret can be improved to $\Ocal{\sqrt{n^2TG^2 + n T \sigma^2}}$ by using DOG. 
\end{itemize} 

Next we discuss how close all local models $\x_{i,t}$'s close to their average at each time. The following result suggests that $\x_{i,t}$'s are getting closer and closer over iterations.
\begin{Theorem}
\label{theorem_local_models_closer}
Denote $\bar{\x}_t = \frac{1}{n}\sum_{i=1}^n \x_{i,t}$.
Choosing $\eta = \sqrt{\frac{(1-\rho) \lrincir{nM\sqrt{R} + nR}}{ nTG^2 + T\sigma^2 }} $ in Algorithm \ref{algo_DOG}, under Assumption \ref{assumption_bounded_gradient_domain} we have  
\[
{1\over nT}\left[\EE_{ \Xi_{n,T} \sim \Dcal_{n,T} } \sum_{i=1}^n\sum_{t=1}^T \lrnorm{\x_{i,t} - \bar{\x}_t}^2\right] \lesssim \frac{n(M + \sqrt{R})}{(1-\rho)T}.
\]
\end{Theorem}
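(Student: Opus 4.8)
The plan is to reduce the claim to a per-iteration bound on the network disagreement $\EE_{\Xi_{n,T}\sim\Dcal_{n,T}}\sum_{i=1}^n \lrnorm{\x_{i,t}-\bar{\x}_t}^2$, sum it over $t\in[T]$, and substitute the prescribed $\eta$. Concretely, I expect to establish (this is the content of Lemma~\ref{Lemma_x_variance_norm_square}) that for every $t$,
\begin{align}
\nonumber
\EE_{\Xi_{n,T}\sim\Dcal_{n,T}}\sum_{i=1}^n\lrnorm{\x_{i,t}-\bar{\x}_t}^2 \le \frac{\eta^2 n(G^2+\sigma^2)}{(1-\rho)^2}.
\end{align}
Granting this, summing over $t$ and dividing by $nT$ immediately gives $\frac{1}{nT}\EE\sum_{i,t}\lrnorm{\x_{i,t}-\bar{\x}_t}^2 \le \frac{\eta^2(G^2+\sigma^2)}{(1-\rho)^2}$, after which the theorem is a single substitution.

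To obtain the per-iteration bound I would stack the local models into a matrix $\X_t$ whose $i$-th row is $\x_{i,t}\Tr$, so that the update in Algorithm~\ref{algo_DOG} reads $\X_{t+1} = \W\X_t - \eta\,\nabla\F_t$, where $\nabla\F_t$ stacks the stochastic gradients $\nabla f_{i,t}(\x_{i,t};\xi_{i,t})\Tr$. Writing $\J := \frac{\1\1\Tr}{n}$ and noting $\1\bar{\x}_t\Tr = \J\X_t$, double stochasticity of $\W$ (so $\W\1=\1$, $\1\Tr\W=\1\Tr$, hence $\J\W=\W\J=\J$) gives the disagreement recursion
\begin{align}
\nonumber
\X_{t+1}-\1\bar{\x}_{t+1}\Tr = (\W-\J)\lrincir{\X_t-\1\bar{\x}_t\Tr} - \eta(\I-\J)\nabla\F_t,
\end{align}
with zero initial disagreement because every $\x_{i,1}=\0$. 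Unrolling and using $\lrnorm{\W-\J}=\rho$ together with $\lrnorm{\I-\J}\le 1$ (orthogonal projection) gives $\lrnorm{\X_t-\1\bar{\x}_t\Tr}_F \le \eta\sum_{s=1}^{t-1}\rho^{t-1-s}\lrnorm{\nabla\F_s}_F$. Squaring and applying Cauchy--Schwarz with weights $\rho^{(t-1-s)/2}$ converts the single geometric sum into $\frac{1}{1-\rho}\sum_s\rho^{t-1-s}\lrnorm{\nabla\F_s}_F^2$; taking expectation and using $\EE\lrnorm{\nabla f_{i,s}}^2 \le G^2+\sigma^2$ (the mean-plus-variance split of Assumption~\ref{assumption_bounded_gradient_domain}, so $\EE\lrnorm{\nabla\F_s}_F^2 \le n(G^2+\sigma^2)$) collapses the remaining series and yields the $(1-\rho)^{-2}$ factor.

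The closing step is algebraic. Substituting $\eta^2 = \frac{(1-\rho)n(M\sqrt{R}+R)}{T(nG^2+\sigma^2)}$ into $\frac{\eta^2(G^2+\sigma^2)}{(1-\rho)^2}$ yields $\frac{n(M\sqrt{R}+R)(G^2+\sigma^2)}{(1-\rho)T(nG^2+\sigma^2)}$. I would then use $\frac{G^2+\sigma^2}{nG^2+\sigma^2}\le 1$ for $n\ge 1$ to drop the ratio and write $M\sqrt{R}+R=\sqrt{R}(M+\sqrt{R})$, absorbing the constant $\sqrt{R}$ into $\lesssim$, arriving at $\frac{n(M+\sqrt{R})}{(1-\rho)T}$ as claimed.

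The main obstacle is the per-iteration consensus bound rather than the downstream bookkeeping: the delicacy is to weight the Cauchy--Schwarz split so that the double geometric sum collapses cleanly to $(1-\rho)^{-2}$, avoiding the looser estimate one gets from bounding $\EE\lrnorm{\X_t-\1\bar{\x}_t\Tr}_F$ and then squaring. It is exactly this tighter treatment of the network error that the paper credits for improving the dependence on $n$ over \citet{8015179Shahram}; everything after it is a one-line summation and substitution.
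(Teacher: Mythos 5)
Your proposal is correct and takes essentially the same route as the paper: both reduce the theorem to the consensus-error bound of Lemma~\ref{Lemma_x_variance_norm_square} — unrolling the stacked update with zero initial disagreement, contracting by $\rho = \lrnorm{\W - \frac{\1\1\Tr}{n}}$, collapsing the geometric double sum into the $(1-\rho)^{-2}$ factor, bounding second moments of the stochastic gradients by $G^2+\sigma^2$ — and then substituting the prescribed $\eta$ with the observation $\frac{G^2+\sigma^2}{nG^2+\sigma^2}\le 1$. The only differences are cosmetic: your weighted Cauchy--Schwarz yields a per-iteration bound (in effect inlining the paper's Lemma~\ref{Lemma_hanlin_2}, which it applies after summing over $t$), and your exact mean-plus-variance orthogonality replaces the paper's $\lrnorm{a+b}^2\le 2\lrnorm{a}^2+2\lrnorm{b}^2$ split, improving the constant by a factor of $2$ that is absorbed by $\lesssim$.
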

The result suggests that $\x_{i,t}$ approaches to $\bar{\x}_t$ roughly in the rate $\Ocal{1/{T}}$ (treat $M$ and $\rho$ as constants.), which is faster than the convergence of the averaged regret $\Ocal{1/\sqrt{T}}$ from Corollary~\ref{corollary_regret_upper_bound}.

\section{Empirical studies}

We consider online logistic regression with squared $\ell_2$ norm regularization. In the task, $f_{i,t}(\x;\xi_{i,t}) = \log\lrincir{1+\exp(-\y_{i,t}\A_{i,t}\Tr \x)} + \frac{\gamma}{2}\lrnorm{\x}^2$, where $\gamma = 10^{-3}$ is a given hyper-parameter. $\xi_{i,t}$ is the randomness of the function $f_{i,t}$, which is caused by the randomness of data in the experiment. Under this setting, we compare the performance of the proposed Decentralized Online Gradient method (DOG) and that of the Centralized Online Gradient method (COG). 

The dynamic budget $M$ is fixed as $10$ to determine the space of reference points. The learning rate $\eta$ is tuned to be optimal for each dataset separately. We evaluate the learning performance by measuring the \textit{average loss}: $\frac{1}{nT}\sum_{i=1}^n\sum_{t=1}^T f_{i,t}(\x_{i,t};\xi_{i,t})$, instead of using the dynamic regret $\EE_{\Xi_{n,T}\sim \Dcal_{n,T}}\sum_{i=1}^n\sum_{t=1}^T \lrincir{ f_{i,t}(\x_{i,t};\xi_{i,t}) - f_{i,t}(\x_t^{\ast}) }$ directly, since the optimal reference point $\{\x_t^\ast\}^T_{t=1}$ is the same for both DOG and COG.   We perform emperical evaluation on a toy dataset and several real-world datasets, whose details are presented as follows.

\textbf{Synthetic Data.} 
For the $i$-th node, a data matrix  $\A_i\in R^{10\times T}$ is generated, s.t. $\A_i=\beta\tilde{\A}_i+(1-\beta)\hat{\A}_i$, where $\tilde{\A}_i$ represents the adversary part of data, and $\hat{\A}_i$ represents the stochastic part of data. $\beta$ with $0< \beta <1$ is used to make balance between the adversary and stochastic components. A large $\beta$ represents the adversary component is significant, and the stochastic component becomes significant with the decrease of $\beta$.  Specifically,  elements of $\tilde{\A}_i$ is uniformly sampled from the interval $[-0.5+\sin(i),0.5+\sin(i)]$. Note that $\tilde{\A}_i$ and $\tilde{\A}_j$ with $i\neq j$ are drawn from different distributions. $\hat{\A}_{i,t}$ is generated according to $\y_{i,t}\in\{1,-1\}$ which is generated uniformly. When $\y_{i,t}=1$, $\hat{\A}_{i,t}$ is generated by sampling from a time-varying distribution $N((1+0.5\sin(t))\cdot\1, \I)$. When $\y_{i,t} = -1$, $\hat{\A}_{i,t}$ is generated by sampling from another time-varying distribution $N((-1+0.5\sin(t))\cdot\1, \I)$. Due to this correlation, $\y_{i,t}$ can be considered as the label of the instance $\hat{\A}_{i,t}$.

\textbf{Real Data.}
The real public datasets include \textit{SUSY}\footnote{\url{https://www.csie.ntu.edu.tw/~cjlin/libsvmtools/datasets/binary.html\#SUSY}} { ($5,000,000$ samples)}, \textit{room-occupancy}\footnote{\url{https://archive.ics.uci.edu/ml/datasets/Occupancy+Detection+}} { ($20,560$ samples)},  \textit{usenet2}\footnote{\url{http://mlkd.csd.auth.gr/concept_drift.html}} { ($1,500$ samples)}, and \textit{spam}\footnote{\url{http://mlkd.csd.auth.gr/concept_drift.html}} { ($9,324$ samples)}. \textit{SUSY} is a large-scale binary classification dataset, and we use the whole dataset to general two kinds of data: the stochastic data and the adversarial data. The stochastic data is generated by using some instances, e.g., $80\%$ of the whole dataset, and then allocating them to nodes randomly and uniformly. The adversarial data is generated by using the other instances, conducting clustering on those data to yield $n$ clusters, and then allocate every cluster to a node.  \textit{room-occupancy} is a time-series dataset, which is from a natural dynamic enviroment. Both \textit{usenet2} and \textit{spam} are  ``concept drift'' \citep{Katakis:2010:TR} datasets, for which the optimal model changes over time. For all dataset, all values of every feature have been normalized to be zero mean and one variance. We present the numerical results for the dataset \textit{SUSY} here, and place other results in supplementary materials.

\begin{figure*}[!t]
\setlength{\abovecaptionskip}{0pt}
\setlength{\belowcaptionskip}{0pt}
\centering 
\subfigure[$\beta = 0.9$]{\includegraphics[width=0.24\columnwidth]{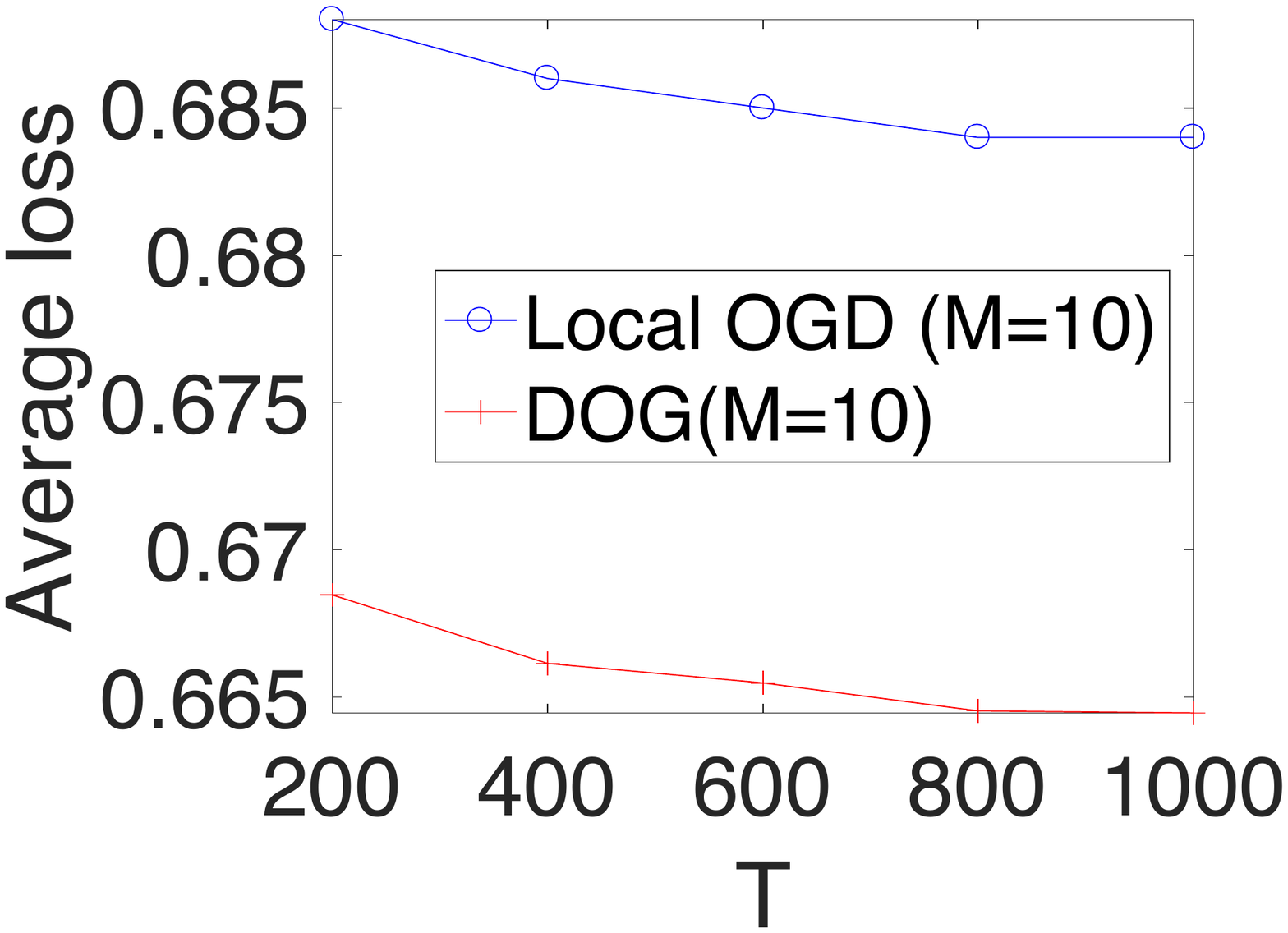}\label{figure_diff_stoc_adver_components_beta09}}
\subfigure[$\beta = 0.7$]{\includegraphics[width=0.24\columnwidth]{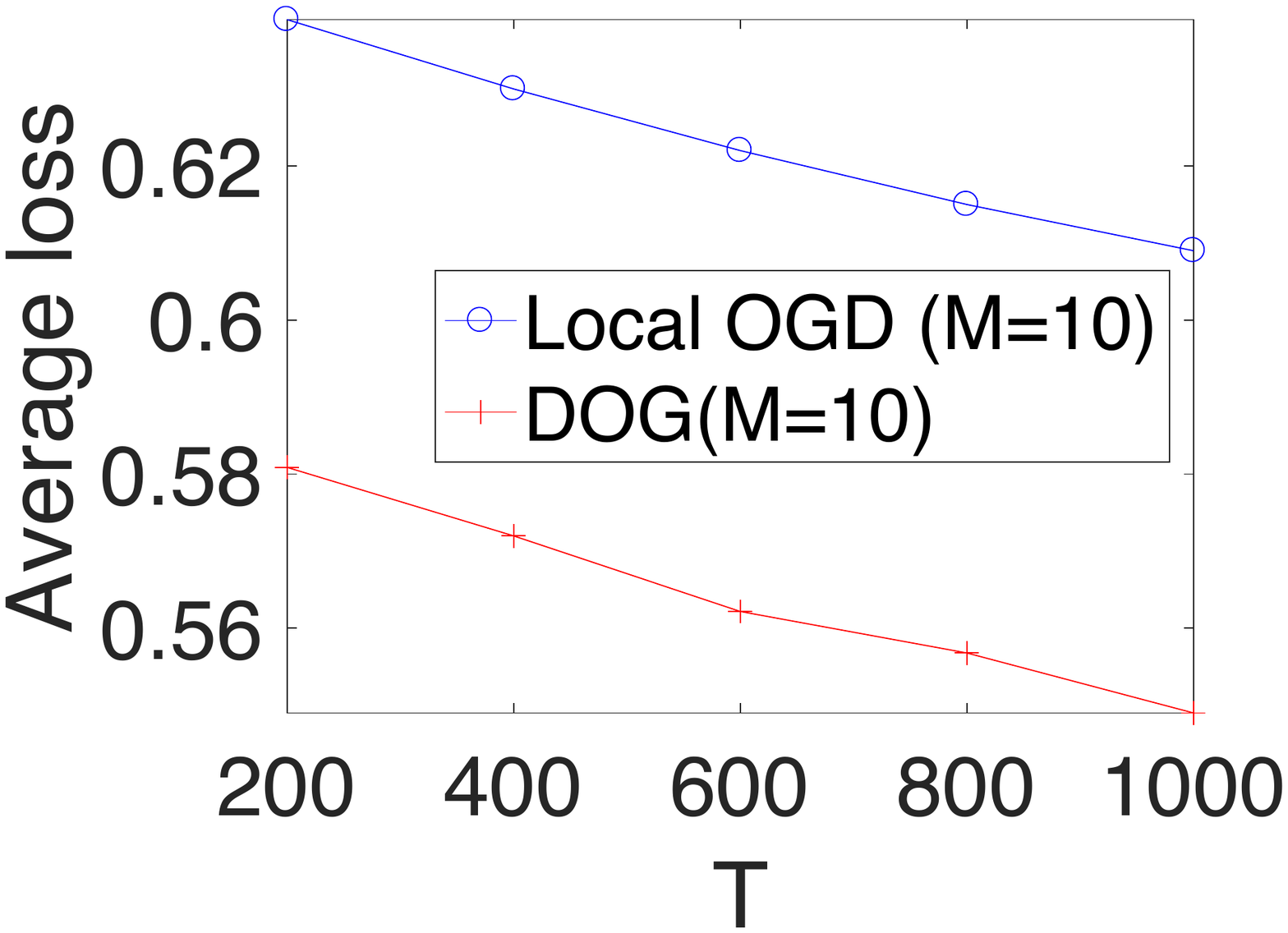}\label{figure_diff_stoc_adver_components_beta07}}
\subfigure[$\beta = 0.5$]{\includegraphics[width=0.24\columnwidth]{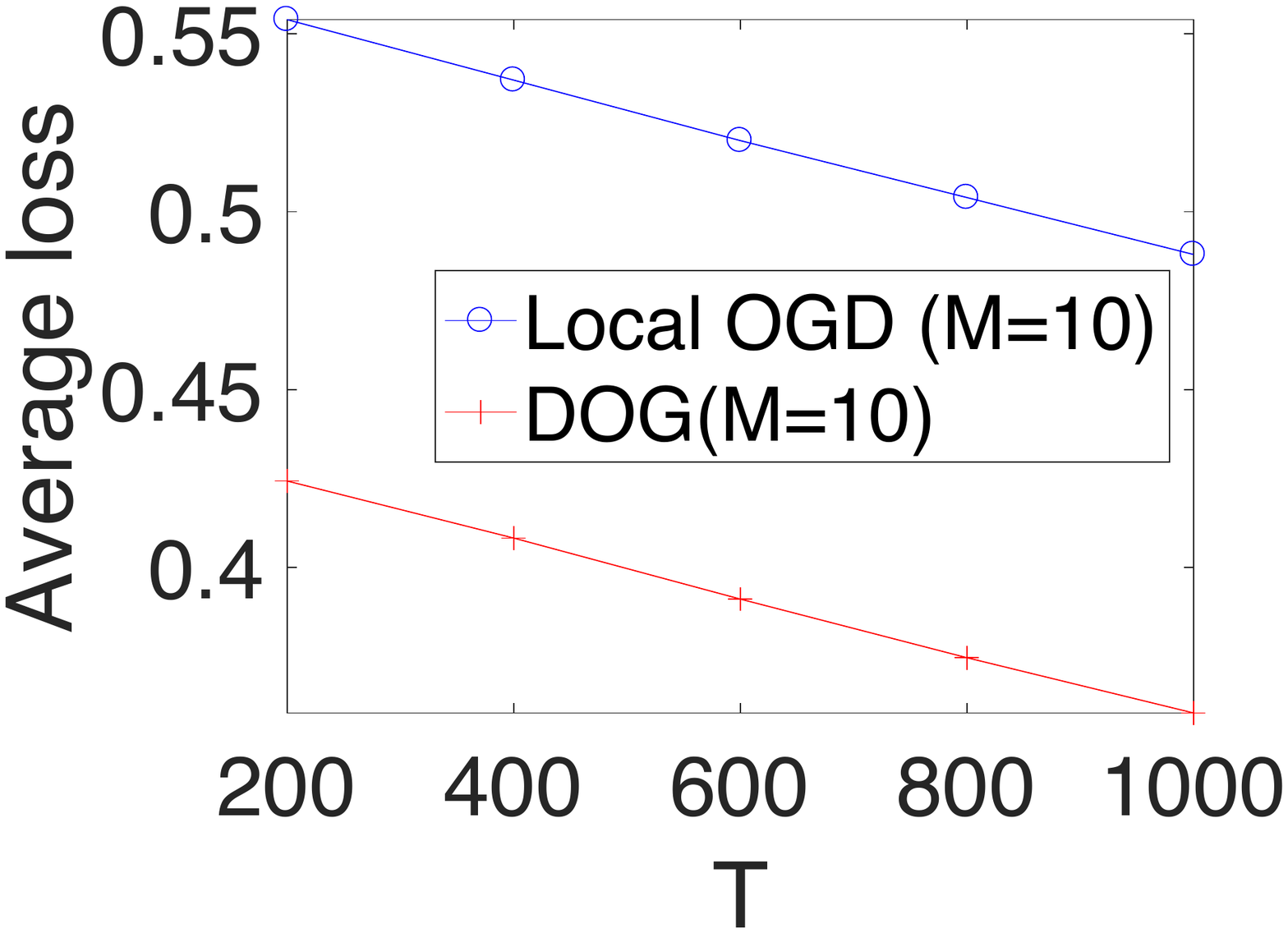}\label{figure_diff_stoc_adver_components_beta05}}
\subfigure[$\beta = 0.3$]{\includegraphics[width=0.24\columnwidth]{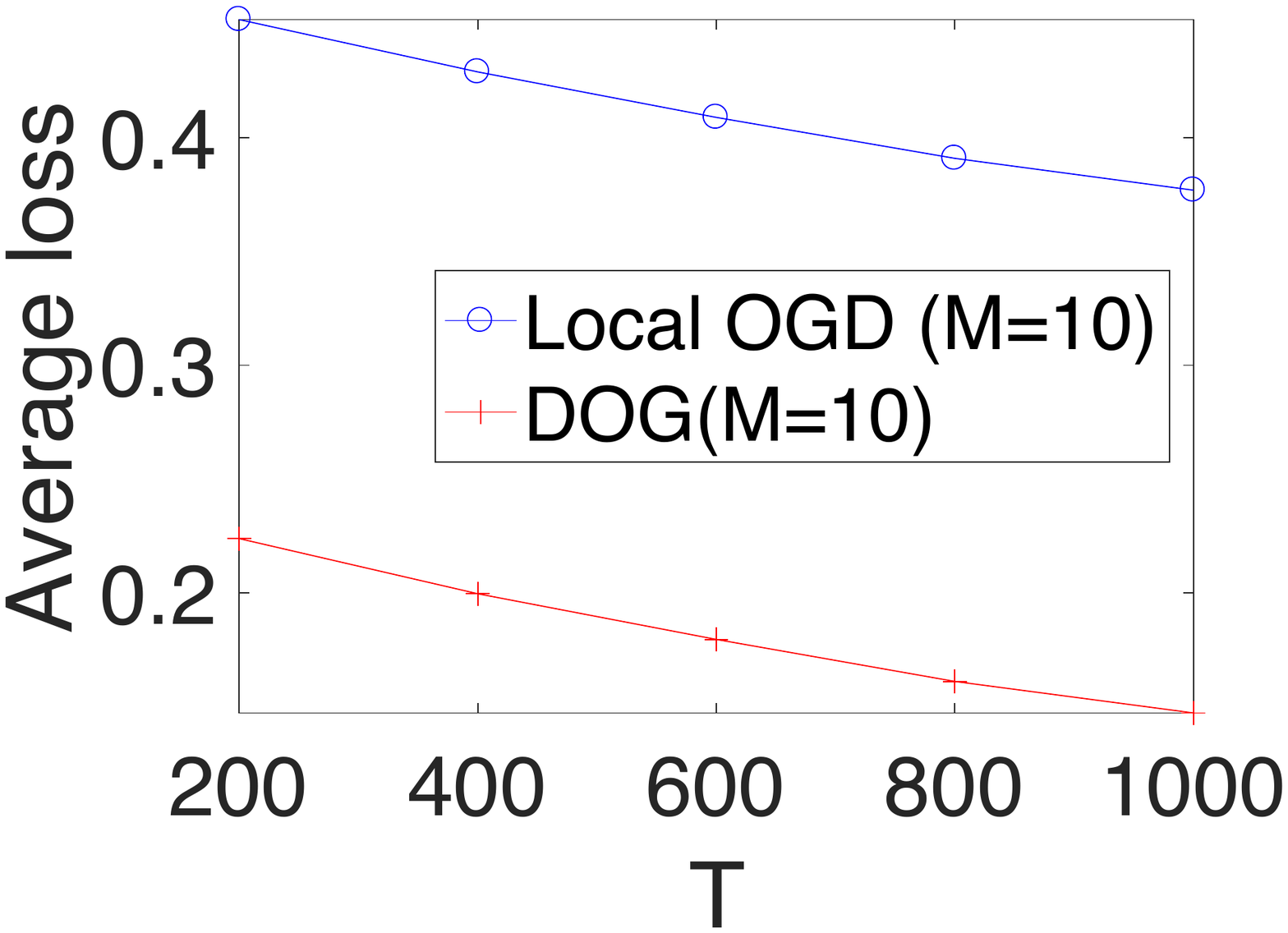}\label{figure_diff_stoc_adver_components_beta03}}
\caption{Local OGD vs. DOG on synthetic data with different ratios of the adversarial component. (10000 nodes with ring topology)}
%\caption{ Since the stochastic component becomes significant against the adversary component with the decrease of $\beta$, DOG is more effective to reduce the average loss than local online gradient descent.}
\label{figure_reduce_stochastic_regret}
\end{figure*}

\begin{figure*}[!t]
\setlength{\abovecaptionskip}{0pt}
\setlength{\belowcaptionskip}{0pt}
\centering 
\subfigure[\textit{synthetic}, $10000$ nodes, $\beta = 0.1$, random topology]{\includegraphics[width=0.235\columnwidth]{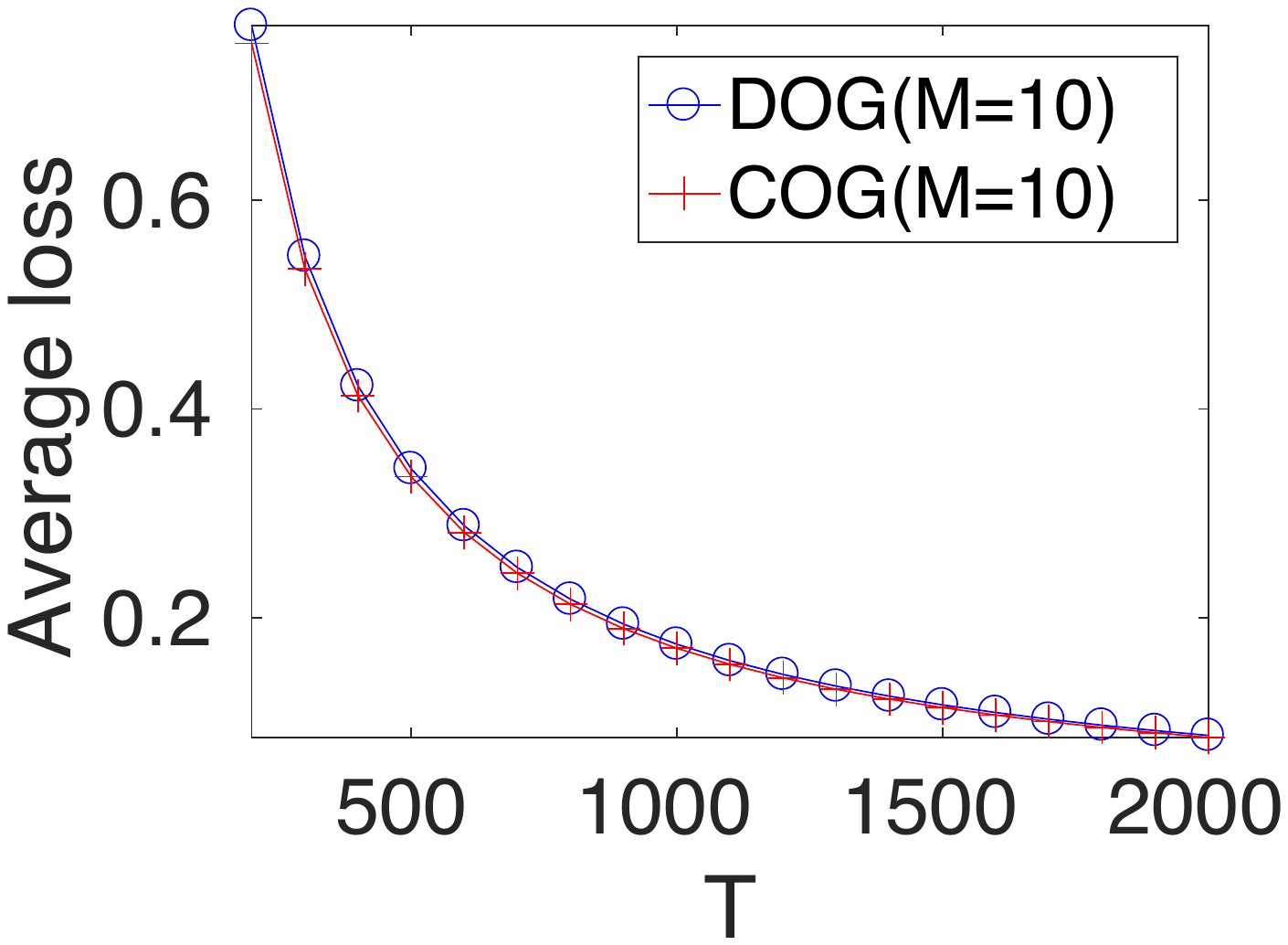}\label{figure_ave_loss_iteration}}
\subfigure[\textit{SUSY}, $100\%$ stochastic data]{\includegraphics[width=0.24\columnwidth]{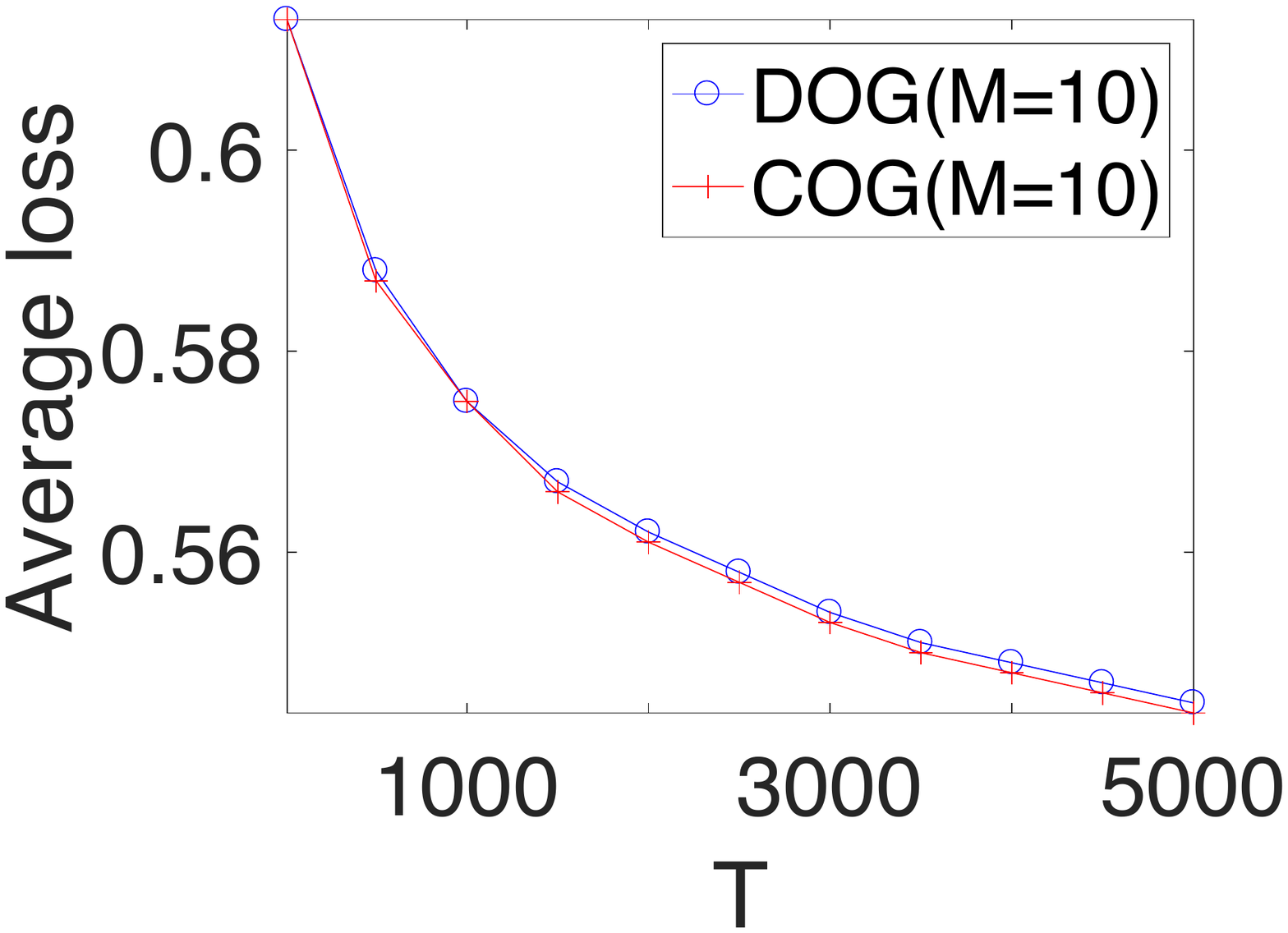}\label{figure_average_loss_susy}}
\subfigure[\textit{SUSY}, $80\%$ stochastic data]{\includegraphics[width=0.24\columnwidth]{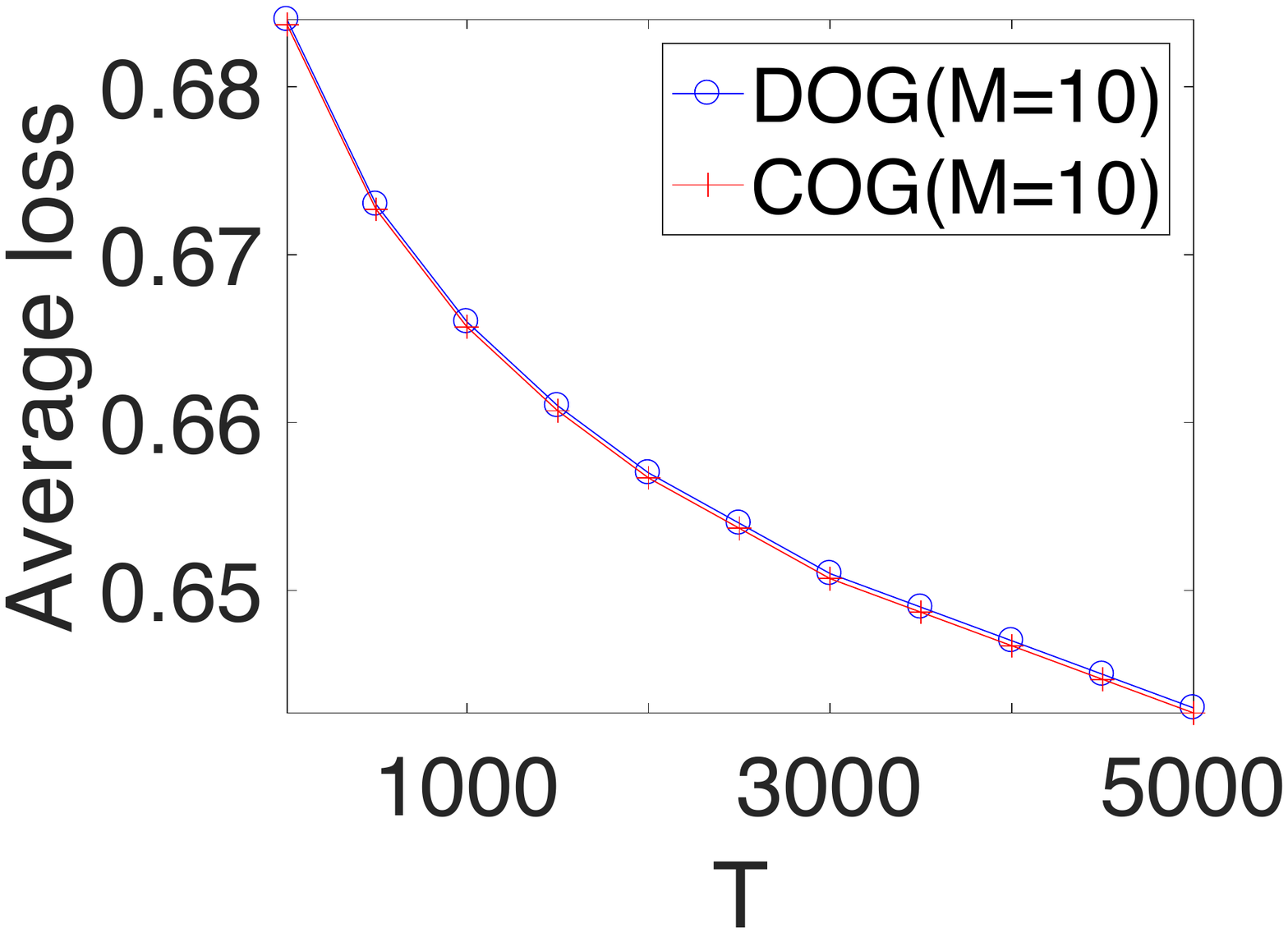}\label{figure_average_loss_susy_02}}
\subfigure[\textit{SUSY}, $50\%$ stochastic data]{\includegraphics[width=0.24\columnwidth]{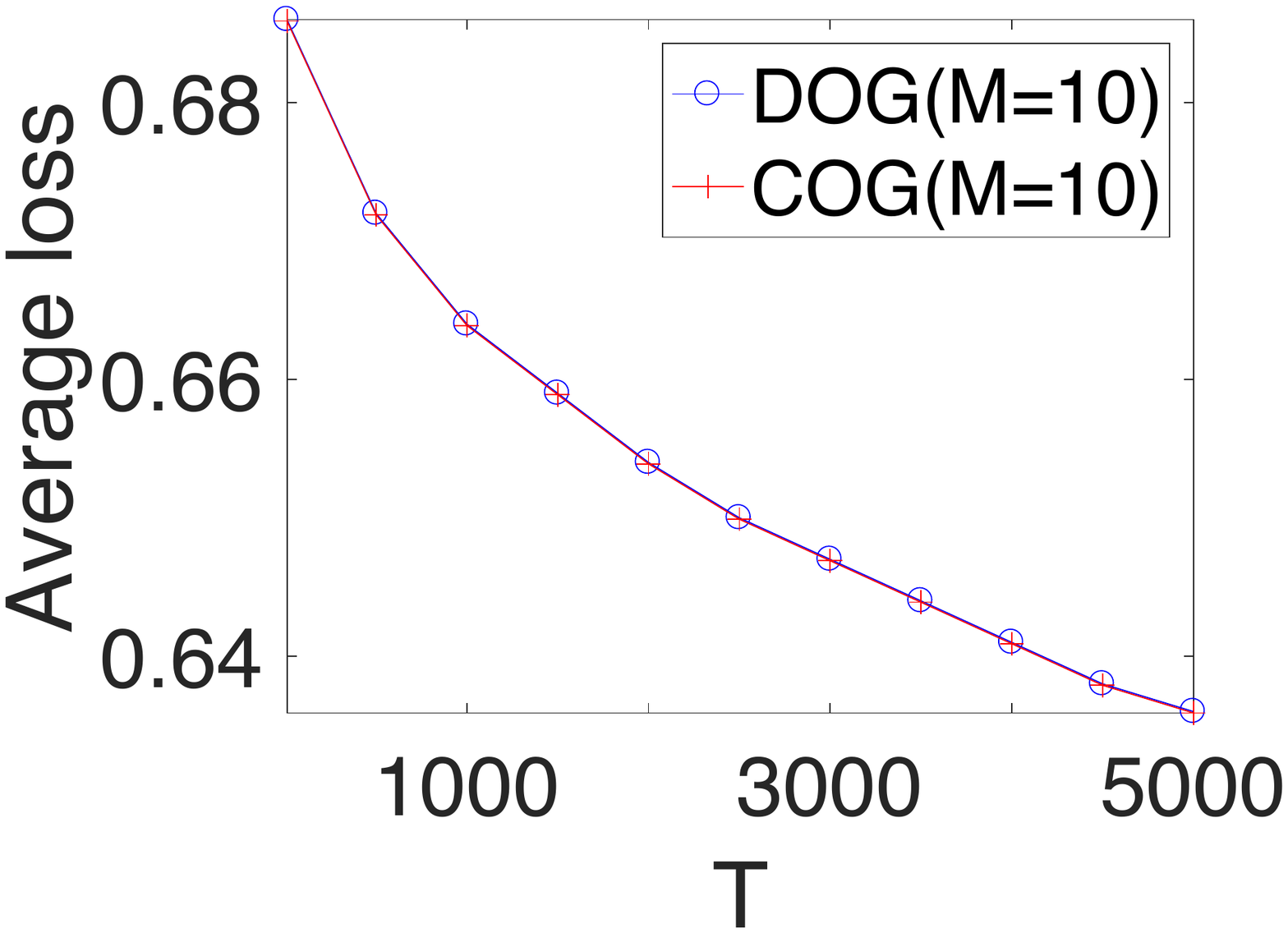}\label{figure_average_loss_susy_05}}
\caption{DOG vs. COG ($1000$ nodes with ring topolgy for real data).}
\label{figure_compare_loss}
\end{figure*}

\begin{figure*}[!t]
\setlength{\abovecaptionskip}{0pt}
\setlength{\belowcaptionskip}{0pt}
\centering 
\subfigure[\textit{synthetic}, $\beta = 0.1$, random topology]{\includegraphics[width=0.24\columnwidth]{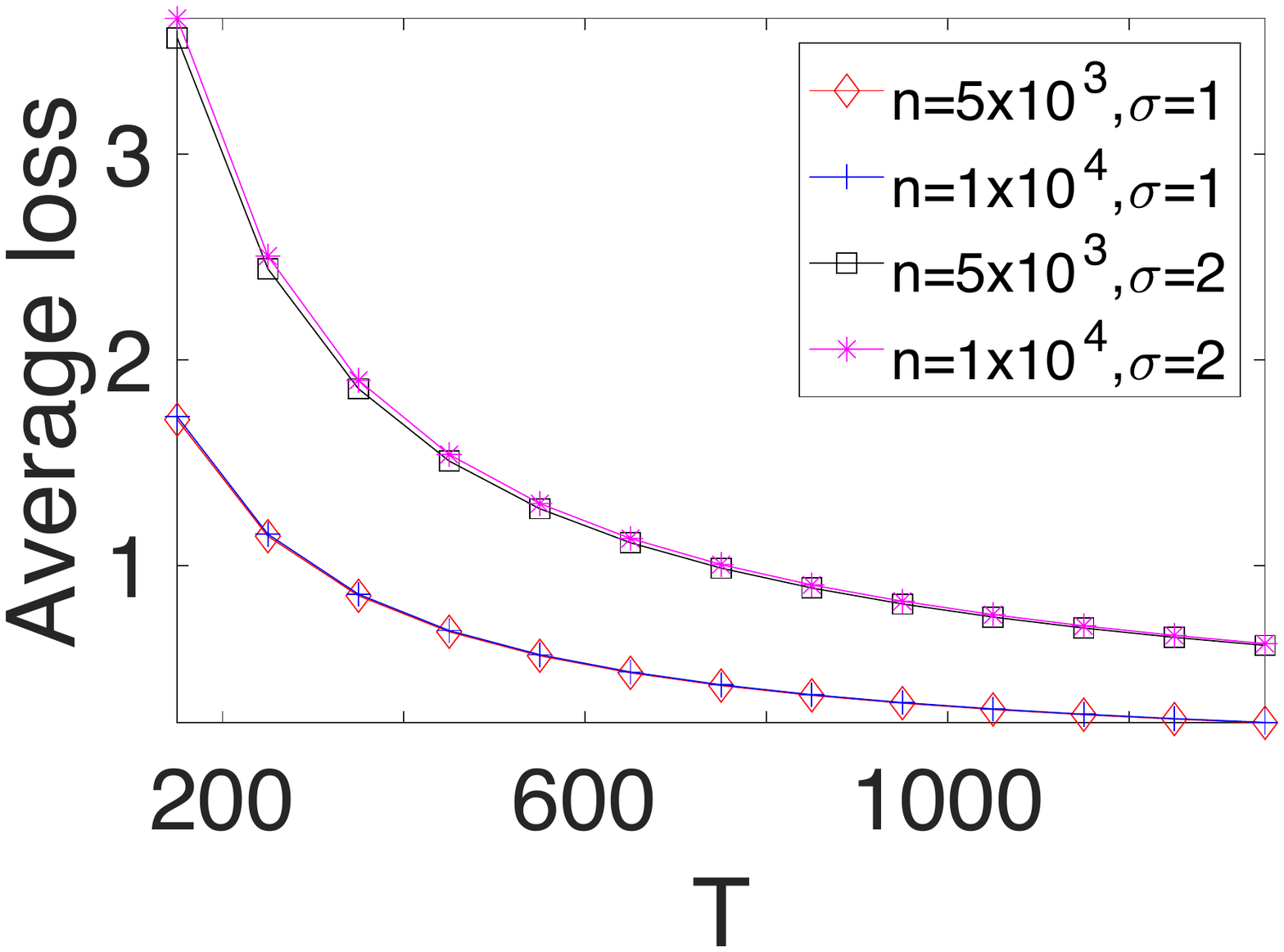}\label{figure_ave_loss_network_size_synthetic}}
\subfigure[\textit{SUSY}, $100\%$ stochastic data, ring topology]{\includegraphics[width=0.24\columnwidth]{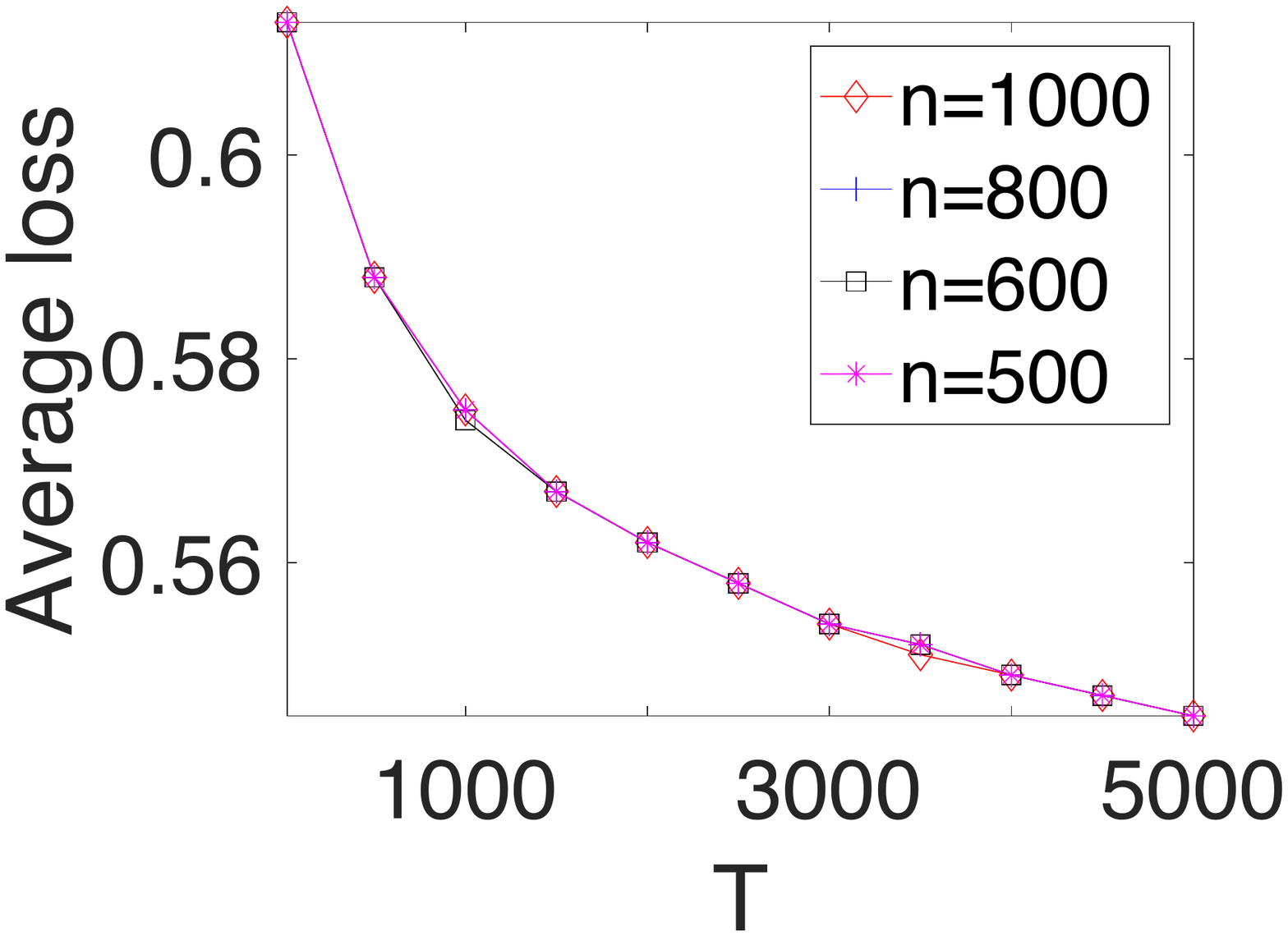}\label{figure_network_size_susy}}
\subfigure[\textit{SUSY}, $80\%$ stochastic data, ring topology]{\includegraphics[width=0.24\columnwidth]{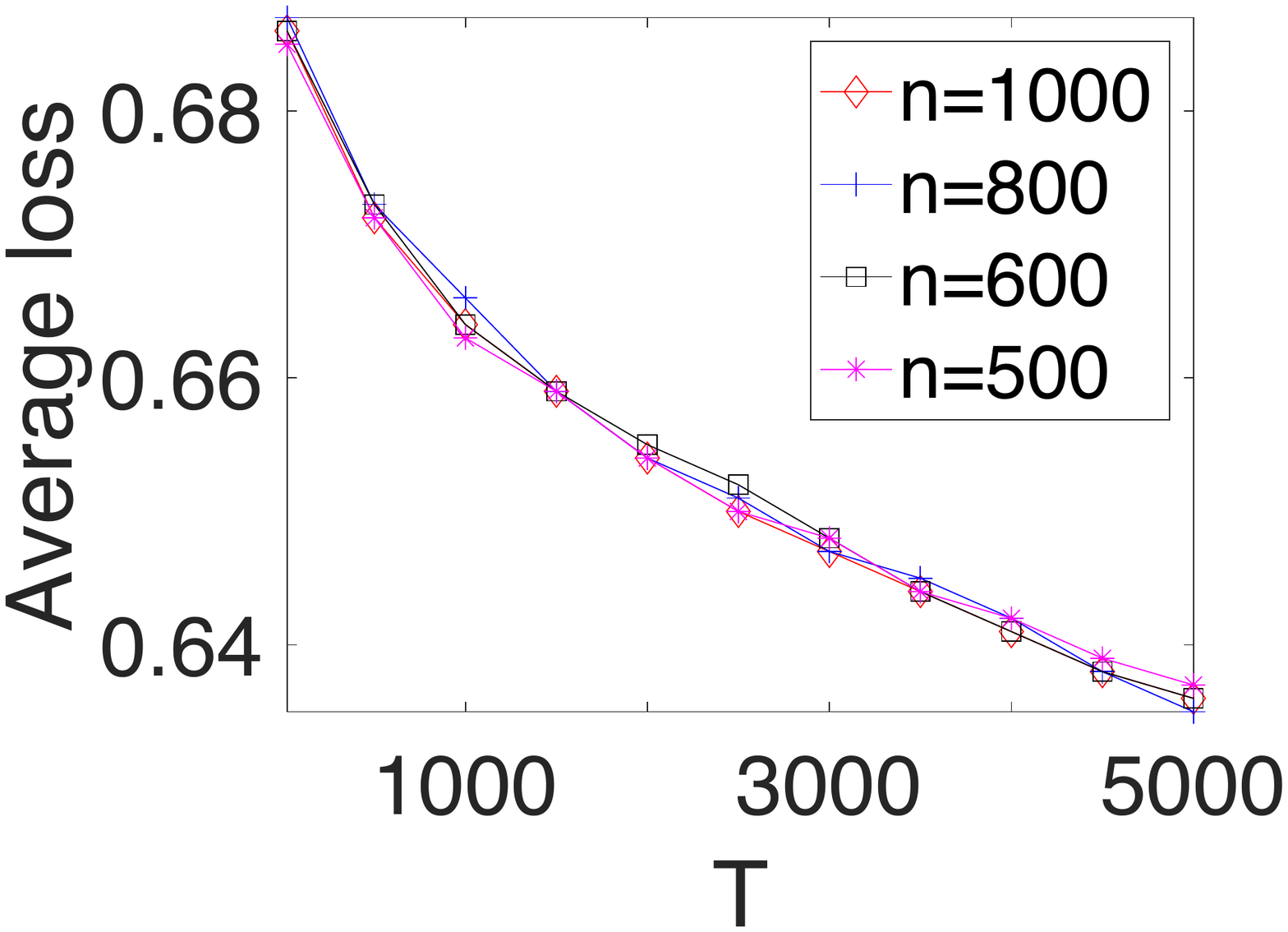}\label{figure_network_size_susy_02}}
\subfigure[\textit{SUSY}, $50\%$ stochastic data, ring topology]{\includegraphics[width=0.24\columnwidth]{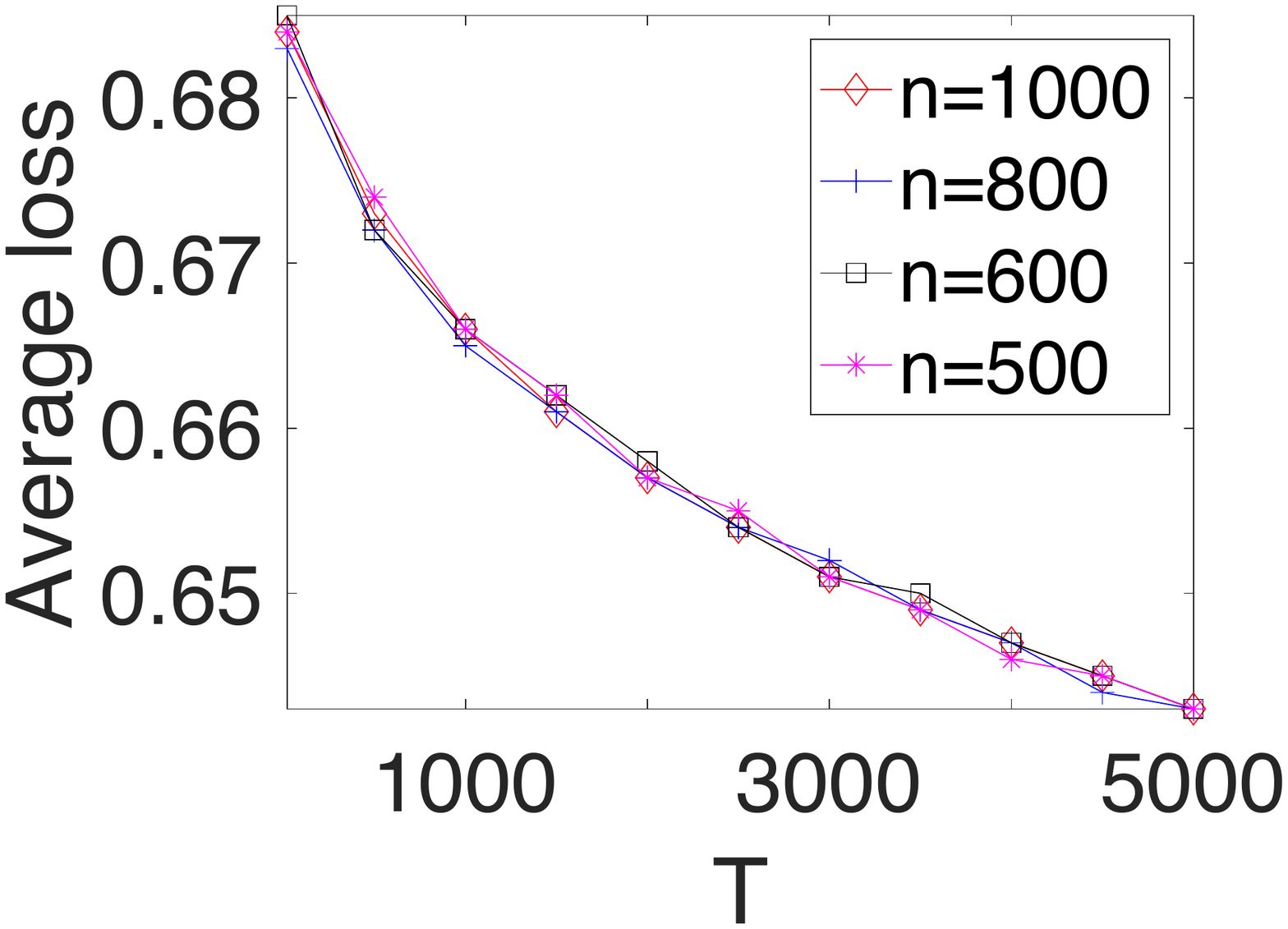}\label{figure_network_size_susy_05}}
\caption{The robustness of DOG wrt the network size.}
\label{figure_compare_network_size}
\end{figure*}

\begin{figure*}[!t]
\setlength{\abovecaptionskip}{0pt}
\setlength{\belowcaptionskip}{0pt}
\centering 
\subfigure[\textit{synthetic}, $10000$ nodes, $\beta = 0.1$]{\includegraphics[width=0.225\columnwidth]{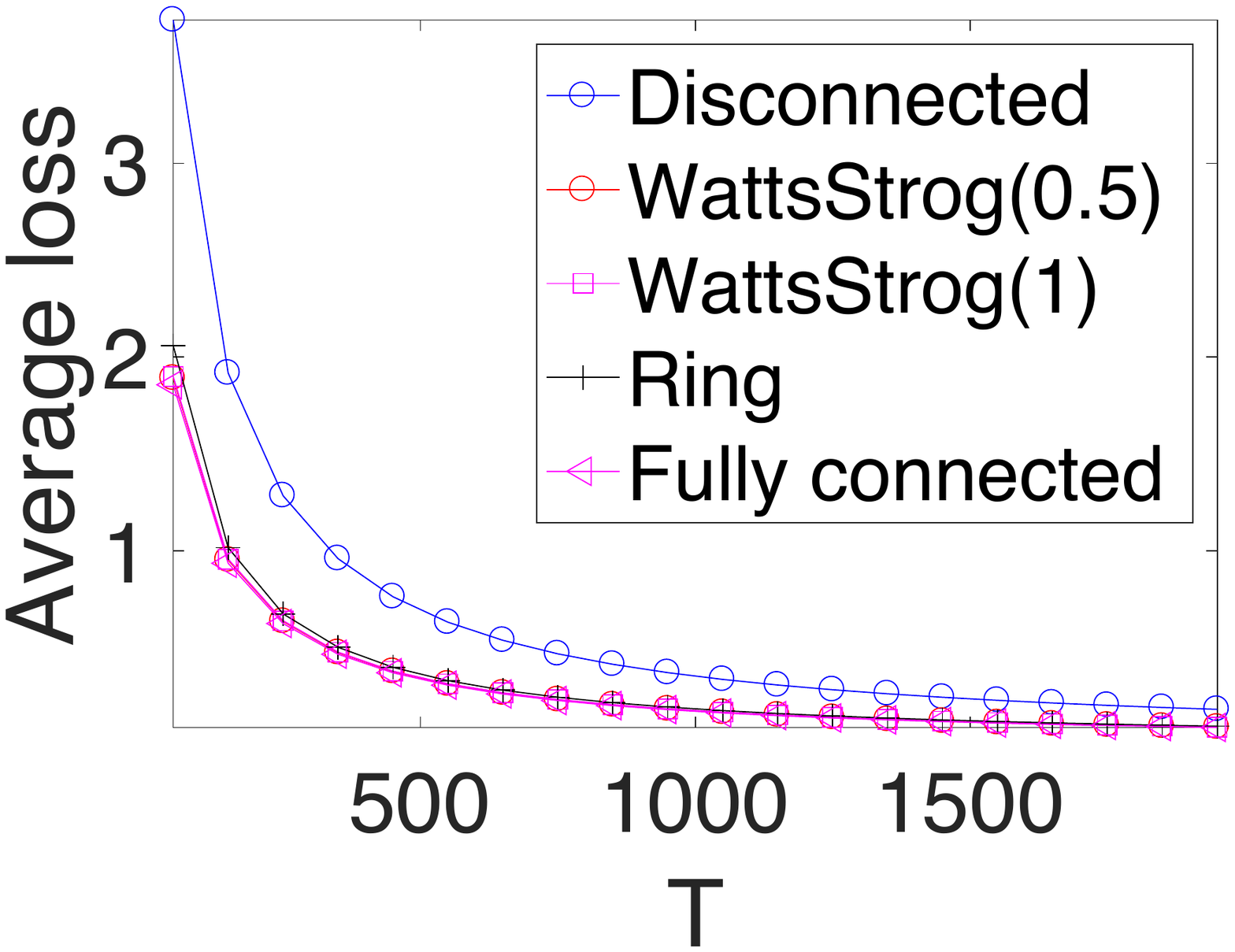}\label{figure_ave_loss_topology_synthetic}}
\subfigure[\textit{SUSY}, $1000$ nodes, $100\%$ stochastic data]{\includegraphics[width=0.24\columnwidth]{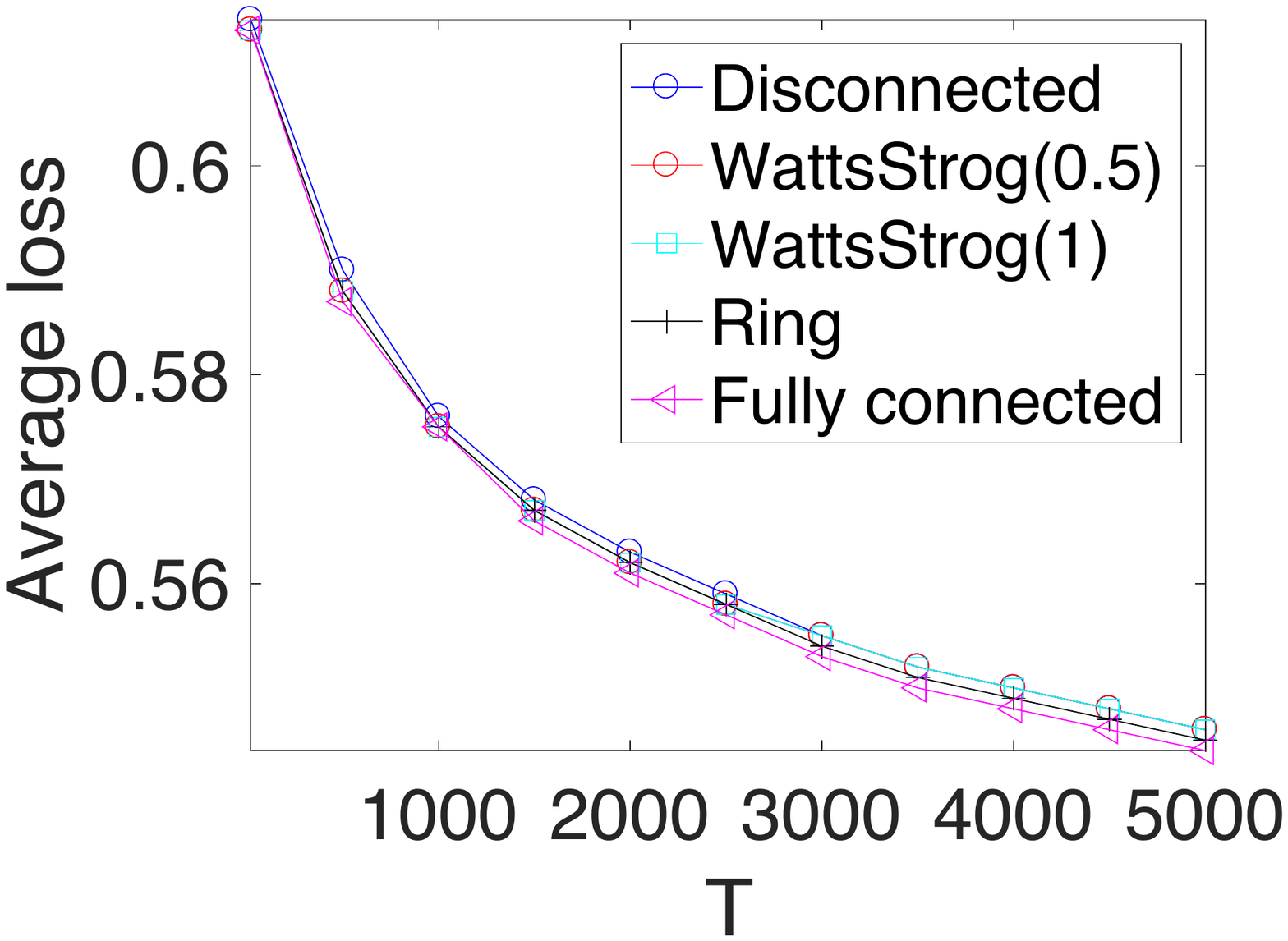}\label{figure_topology_susy}}
\subfigure[\textit{SUSY}, $1000$ nodes, $80\%$ stochastic data]{\includegraphics[width=0.24\columnwidth]{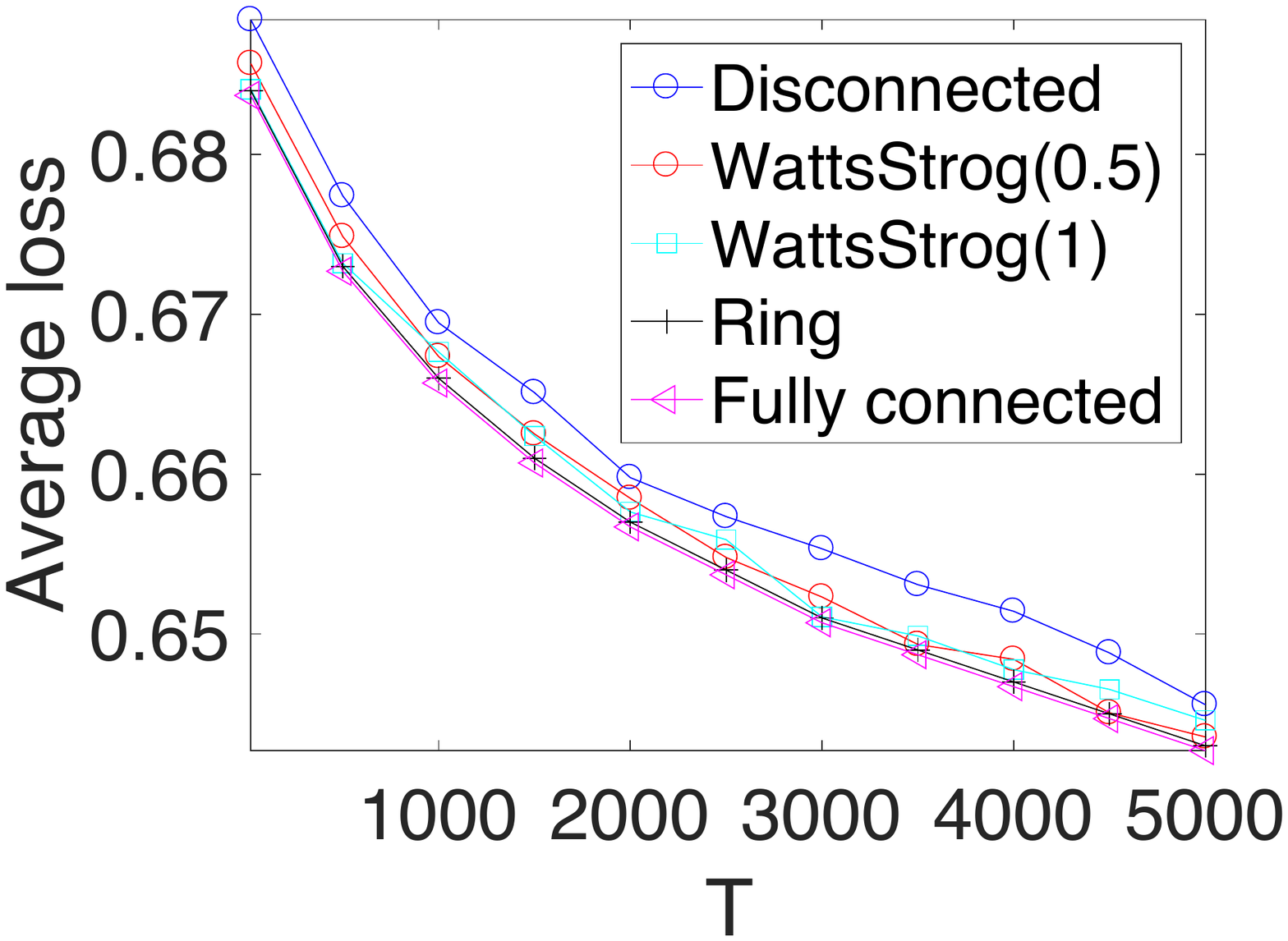}\label{figure_topology_susy_02}}
\subfigure[\textit{SUSY}, $1000$ nodes, $50\%$ stochastic data]{\includegraphics[width=0.24\columnwidth]{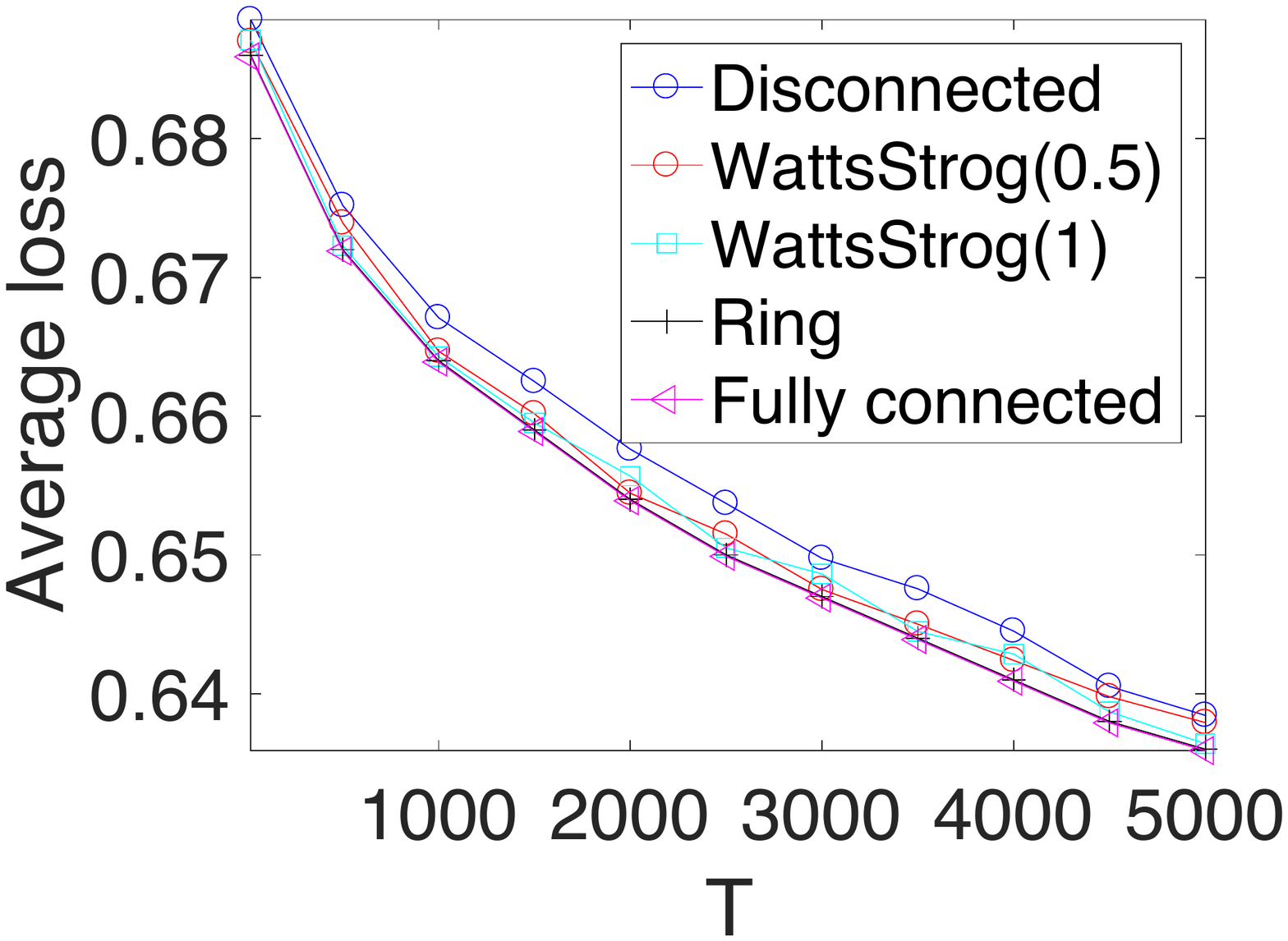}\label{figure_topology_susy_05}}
\caption{The sensitivity of DOG wrt the topology of the network.}
\label{figure_compare_topology}
\end{figure*}

\textbf{DOG is effective to reduce the stochastic component of regret.} It is compared with the local online gradient descent (Local OGD), where every node trains a local model without communication with others. We vary $\beta$ to generate different kinds of synthetic data, to obtain different balance between adversary and stochastic components. As shown before, the stochastic component of data becomes significant with the decrease of $\beta$. Figure \ref{figure_reduce_stochastic_regret} shows that DOG becomes significantly effective to reduce the stochastic component of regret for small $\beta$. It validates that exchanging models in a decentralized network is necessary and important to reduce regret, which matches with our theoretical result.  In the following empirical studies, we generate synthetic data in the setting of $\beta = 0.1$.

\textbf{DOG yields comparable performance with COG.} Figure \ref{figure_compare_loss} summarizes the performance of DOG compared with COG. 
For the synthetic dataset, we simulate a network consisting of $10000$ nodes, where every node is randomly connected with other $15$ nodes. Similarly, we simulate a ring network for the other real datasets, and every node is randomly connected with other $3$ nodes.  Under these settings, we can observe that both DOG and COG are effective for the online learning tasks on all the datasets. Besides, DOG achieves slightly worse performance than COG. It is significant when the adversarial component becomes large, which can be verified by Figures \ref{figure_average_loss_susy_02} and \ref{figure_average_loss_susy_05}.

\textbf{The performance of DOG is robust to the network size, but is sensitive to the variance of the stochastic data.} Figure~\ref{figure_compare_network_size} summarizes the effect of the network size on the performance of DOG. We change the number of nodes different datasets. The synthetic dataset is tested by using the random topology, and those real datasets are tested by using the ring topology. Figure~\ref{figure_compare_network_size} draws the curves of average loss over time steps. We observe that the average loss curves are mostly overlapped with different nodes. It shows that DOG is robust to the network size (or number of users), which validates our theory, that is, the average regret\footnote{The `average regret' equals to $\frac{\text{regret}}{nT}$. Our theoretical analysis shows that the average regret of DOG is $\Ocal{\sqrt{\frac{G^2}{T}+\frac{\sigma^2}{nT}}}$.} does not increase with the number of nodes. Furthermore, we observe that the average loss becomes large with the increase of the variance of stochastic data, which validates our theoretical result nicely.  

\textbf{The performance of DOG can be improved in a well-connected network.} Figure~\ref{figure_compare_topology} shows the effect of the topology of the network on the performance of DOG, where five different topologies are used. Besides, the ring topology, the \textit{Disconnected} topology means there are no edges in the network, and every node does not share its local model to others. The \textit{Fully connected} topology means all nodes are connected, where DOG de-generates to be COG. The topology  \textit{WattsStrogatz} represents a Watts-Strogatz small-world graph, for which we can use a parameter to control the number of stochastic edges (set as $0.5$ and $1$ in this paper). The result shows \textit{Fully connected} enjoys the best performance, because that $\rho = 0$ for it, and \textit{Disconnected} suffers the worst performance due to $\rho = 1$ for it. Other topologies owns $0<\rho<1$ for them.

\section{Conclusion}
We investigate decentralized online learning problem, where the loss is incurred by both adversary and stochastic components.  We define a new dynamic regret, and analyze a decentralized online gradient method theoretically. It shows that the communication is only effective to decrease the regret caused by the stochastic component, and thus users can benefit from sharing their private models, instead of private data.

\balance

%\section*{References}
\bibliography{reference}

\bibliographystyle{abbrvnat}

\newpage
\onecolumn

\section*{Supplementary materials for theoretical analysis}

For math brevity, we denote the function $F_{i,t}$ by $F_{i,t}(\cdot) := \EE_{\xi_{i,t} \sim D_{i,t}} f_{i,t}(\cdot; \xi_{i,t})$ throughout proofs.

\textbf{Proof to Theorem \ref{theorem_regret_upper_bound}:}
\begin{proof}
From the regret definition, we have
\begin{align}
\nonumber
& \EE_{ \Xi_{n,t} \sim \Dcal_{n,t} } \frac{1}{n}\sum_{i=1}^n \lrincir{ f_{i,t}(\x_{i,t};\xi_{i,t}) - f_{i,t}(\x_t^\ast;\xi_{i,t}) } \le  \EE_{ \Xi_{n,t} \sim \Dcal_{n,t} } \frac{1}{n}\sum_{i=1}^n \lrangle{ \nabla f_{i,t}(\x_{i,t};\xi_{i,t}),  \x_{i,t} - \x_t^\ast } \\ \nonumber
= & \underbrace{ \EE_{ \Xi_{n,t} \sim \Dcal_{n,t} } \frac{1}{n}\sum_{i=1}^n   \lrincir{\lrangle{\nabla  f_{i,t}(\x_{i,t};\xi_{i,t}), \x_{i,t} - \bar{\x}_t } + \lrangle{\nabla  f_{i,t}(\x_{i,t};\xi_{i,t}), \bar{\x}_t - \bar{\x}_{t+1} } } }_{I_1(t)} \\ \nonumber 
&+ \underbrace{ \EE_{ \Xi_{n,t} \sim \Dcal_{n,t} } \lrangle{\frac{1}{n}\sum_{i=1}^n\nabla f_{i,t}(\x_{i,t};\xi_{i,t}), \bar{\x}_{t+1} - \x_t^\ast } }_{I_2(t)}. \nonumber
\end{align}

Now, we begin to bound $I_1(t)$.
\begin{align}
\nonumber
I_1(t) = & \underbrace{ \EE_{ \Xi_{n,t} \sim \Dcal_{n,t} }\frac{1}{n}\sum_{i=1}^n\lrangle{\nabla f_{i,t}(\x_{i,t}; \xi_{i,t}), \x_{i,t} - \bar{\x}_t } }_{J_1(t)} +  \underbrace{ \EE_{ \Xi_{n,t} \sim \Dcal_{n,t} }\lrangle{\frac{1}{n}\sum_{i=1}^n \nabla f_{i,t}(\x_{i,t};\xi_{i,t}), \bar{\x}_t - \bar{\x}_{t+1} }}_{J_2(t)}.
\end{align} For $J_1(t)$, we have
\begin{align} 
\nonumber
& J_1(t) \\ \nonumber 
= & \frac{1}{n}\EE_{ \Xi_{n,t} \sim \Dcal_{n,t} }\sum_{i=1}^n\lrangle{\nabla f_{i,t}(\x_{i,t}; \xi_{i,t}), \x_{i,t} - \bar{\x}_t } \\ \nonumber
= & \frac{1}{n}\EE_{ \Xi_{n,t-1} \sim \Dcal_{n,t-1} }\sum_{i=1}^n\lrangle{\nabla F_{i,t}(\x_{i,t}), \x_{i,t} - \bar{\x}_t } \\ \nonumber
= & \frac{1}{n}\EE_{ \Xi_{n,t-1} \sim \Dcal_{n,t-1} }\sum_{i=1}^n\lrangle{\nabla F_{i,t}(\x_{i,t}) - \nabla F_{i,t}(\bar{\x}_t), \x_{i,t} - \bar{\x}_t } + \frac{1}{n}\EE_{ \Xi_{n,t-1} \sim \Dcal_{n,t-1} }\sum_{i=1}^n\lrangle{\nabla F_{i,t}(\bar{\x}_t), \x_{i,t} - \bar{\x}_t } \\ \label{equa_J1_temp}
= & \frac{L}{n}\EE_{ \Xi_{n,t-1} \sim \Dcal_{n,t-1} }\sum_{i=1}^n \lrnorm{\x_{i,t} - \bar{\x}_t }^2 + \frac{1}{n}\EE_{ \Xi_{n,t-1} \sim \Dcal_{n,t-1} }\sum_{i=1}^n\lrangle{\nabla F_{i,t}(\bar{\x}_t), \x_{i,t} - \bar{\x}_t }. 
\end{align}  Consider the last term, and we have 
\begin{align}
\nonumber
& \frac{1}{n}\EE_{ \Xi_{n,t-1} \sim \Dcal_{n,t-1} }\sum_{i=1}^n\lrangle{\nabla F_{i,t}(\bar{\x}_t), \x_{i,t} - \bar{\x}_t } \\ \nonumber  
\refabovecir{=}{\textcircled{1}} & \frac{1}{n} \EE_{ \Xi_{n,t-1} \sim \Dcal_{n,t-1} } \tr\lrincir{ \nabla F_t(\bar{\X}_t)\Tr \lrincir{\X_t -  \bar{\X}_t} } \\ \nonumber
= & \frac{1}{n} \EE_{ \Xi_{n,t-1} \sim \Dcal_{n,t-1} } \tr\lrincir{ \nabla F_t(\bar{\X}_t)\Tr \lrincir{\sum_{s=1}^{t-1} \eta  \G_s \W_s^{t-1-s} -  \frac{1}{n}\sum_{s=1}^{t-1} \eta  \G_s \W_s^{t-1-s} \v_1 \v_1\Tr} } \\ \nonumber
= & \frac{1}{n} \tr\lrincir{ \nabla F_t(\bar{\X}_t)\Tr \lrincir{\sum_{s=1}^{t-1} \eta  \nabla F_s(\X_s) \W_s^{t-1-s} -  \frac{1}{n}\sum_{s=1}^{t-1} \eta  \nabla F_s(\X_s) \W_s^{t-1-s} \v_1 \v_1\Tr} } \\ \nonumber
= & \frac{\eta}{n} \sum_{s=1}^{t-1} \tr\lrincir{ \nabla F_t(\bar{\X}_t)\Tr   \nabla F_s(\X_s)  \W_s^{t-1-s} \lrincir{ \I_n -  \frac{1}{n} \v_1 \v_1\Tr} } \\ \nonumber
= & \frac{\eta}{n} \sum_{s=1}^{t-1} \tr\lrincir{ \nabla F_t(\bar{\X}_t)\Tr   \nabla F_s(\X_s)   \lrincir{ \W_s^{t-1-s} -  \frac{1}{n} \v_1 \v_1\Tr} } \\ \nonumber
\le & \frac{\eta}{2n} \sum_{s=1}^{t-1} \lrincir{ \rho^{t-1-s}\lrnorm{ \nabla F_t(\bar{\X}_t)}_F^2 + \frac{1}{\rho^{t-1-s}} \lrnorm{ \nabla F_s(\X_s)   \lrincir{ \W_s^{t-1-s} -  \frac{1}{n} \v_1 \v_1\Tr} }_F^2 } \\ \nonumber
\refabovecir{\le}{\textcircled{2}} & \frac{\eta}{2n} \sum_{s=1}^{t-1} \lrincir{ \rho^{t-1-s} \lrnorm{ \nabla F_t(\bar{\X}_t)}_F^2 +  \rho^{t-1-s}\lrnorm{ \nabla F_s(\X_s) }_F^2   } \\ \nonumber
\le & \frac{\eta}{2n} \sum_{s=1}^{t-1} \rho^{t-1-s} \lrincir{ \lrnorm{\nabla F_t(\bar{\X}_t)}_F^2 + \lrnorm{\nabla F_s(\X_s)}_F^2  } \\ \nonumber
\le & \eta G^2 \sum_{s=1}^{t-1} \rho^{t-1-s} \\ \nonumber
\le & \frac{\eta G^2}{1-\rho}. 
\end{align} $\textcircled{1}$ holds due to $\X_t = [\x_{1,t}; \x_{2,t}; \cdots; \x_{n,t}]$, and $\bar{\X}_t = [\bar{\x}_t; \bar{\x}_t; \cdots; \bar{\x}_t]$. $\textcircled{2}$ holds due to Lemma \ref{Lemma_hanlin_1}.

Substitute it into \eqref{equa_J1_temp}, and we thus have
\begin{align}
\nonumber
J_1(t) = \frac{L}{n}\EE_{ \Xi_{n,t-1} \sim \Dcal_{n,t-1} }\sum_{i=1}^n \lrnorm{\x_{i,t} - \bar{\x}_t }^2 + \frac{\eta G^2}{1-\rho}. 
\end{align}

For $J_2(t)$, we have
\begin{align}
\nonumber
& J_2(t) \\ \nonumber 
= & \EE_{ \Xi_{n,t} \sim \Dcal_{n,t} }\lrangle{\frac{1}{n}\sum_{i=1}^n\nabla f_{i,t}(\x_{i,t};\xi_{i,t}), \bar{\x}_t - \bar{\x}_{t+1} } \\ \nonumber
\le & \frac{\eta}{2}\EE_{ \Xi_{n,t} \sim \Dcal_{n,t} } \lrnorm{\frac{1}{n}\sum_{i=1}^n \nabla f_{i,t}(\x_{i,t};\xi_{i,t})}^2 + \frac{1}{2\eta} \EE_{ \Xi_{n,t} \sim \Dcal_{n,t} }\lrnorm{ \bar{\x}_t - \bar{\x}_{t+1}}^2  \\ \nonumber
\le & \frac{\eta}{2}\EE_{ \Xi_{n,t} \sim \Dcal_{n,t} }\lrnorm{\frac{1}{n}\sum_{i=1}^n \lrincir{\nabla  f_{i,t}(\x_{i,t};\xi_{i,t}) - \nabla F_{i,t}(\x_{i,t}) + \nabla F_{i,t}(\x_{i,t})} }^2 + \frac{1}{2\eta} \EE_{ \Xi_{n,t} \sim \Dcal_{n,t} }\lrnorm{ \bar{\x}_t - \bar{\x}_{t+1}}^2  \\ \nonumber
\le &  \eta\EE_{ \Xi_{n,t} \sim \Dcal_{n,t} }\lrnorm{\frac{1}{n}\sum_{i=1}^n \lrincir{ \nabla f_{i,t}(\x_{i,t};\xi_{i,t}) - \nabla F_{i,t}(\x_{i,t}) } }^2 + \eta \EE_{ \Xi_{n,t} \sim \Dcal_{n,t} }\lrnorm{\frac{1}{n}\sum_{i=1}^n\nabla F_{i,t}(\x_{i,t})}^2  + \frac{1}{2\eta} \EE_{ \Xi_{n,t} \sim \Dcal_{n,t} }\lrnorm{ \bar{\x}_t - \bar{\x}_{t+1}}^2  \\ \nonumber
\refabovecir{\le}{\textcircled{1}} & \frac{\eta}{n} \sigma^2 + \eta \EE_{ \Xi_{n,t-1} \sim \Dcal_{n,t-1} }\lrnorm{ \frac{1}{n}\sum_{i=1}^n \lrincir{ \nabla F_{i,t}(\x_{i,t}) - \nabla F_{i,t}(\bar{\x}_t) + \nabla F_{i,t}(\bar{\x}_t) } }^2  + \frac{1}{2\eta} \EE_{ \Xi_{n,t} \sim \Dcal_{n,t} }\lrnorm{ \bar{\x}_t - \bar{\x}_{t+1}}^2 \\ \nonumber
\le & \frac{\eta}{n} \sigma^2 + 2\eta \EE_{ \Xi_{n,t-1} \sim \Dcal_{n,t-1} }\lrnorm{\frac{1}{n}\sum_{i=1}^n \lrincir{ \nabla F_{i,t}(\x_{i,t}) - \nabla F_{i,t}(\bar{\x}_t) } }^2  \\ \nonumber 
&+ 2\eta \EE_{ \Xi_{n,t-1} \sim \Dcal_{n,t-1} }\lrnorm{\frac{1}{n}\sum_{i=1}^n\nabla F_{i,t}(\bar{\x}_t)}^2 + \frac{1}{2\eta} \EE_{ \Xi_{n,t} \sim \Dcal_{n,t} }\lrnorm{ \bar{\x}_t - \bar{\x}_{t+1}}^2 \\ \nonumber
\le & \frac{\eta}{n} \sigma^2 + \frac{2\eta}{n} \EE_{ \Xi_{n,t-1} \sim \Dcal_{n,t-1} }\sum_{i=1}^n\lrnorm{ \nabla F_{i,t}(\x_{i,t}) - \nabla F_{i,t}(\bar{\x}_t)  }^2 + 2\eta G^2 + \frac{1}{2\eta} \EE_{ \Xi_{n,t} \sim \Dcal_{n,t} }\lrnorm{ \bar{\x}_t - \bar{\x}_{t+1}}^2 \\ \nonumber
\refabovecir{\le}{\textcircled{2}} & \frac{\eta}{n} \sigma^2 + \frac{2\eta L^2}{n}\EE_{ \Xi_{n,t-1} \sim \Dcal_{n,t-1} }\sum_{i=1}^n \lrnorm{\x_{i,t} - \bar{\x}_t }^2 + 2\eta G^2 + \frac{1}{2\eta} \EE_{ \Xi_{n,t} \sim \Dcal_{n,t} }\lrnorm{ \bar{\x}_t - \bar{\x}_{t+1}}^2.
\end{align} $\textcircled{1}$ holds due to
\begin{align}
\nonumber
& \EE_{ \Xi_{n,t} \sim \Dcal_{n,t} }\lrnorm{\frac{1}{n}\sum_{i=1}^n \lrincir{ \nabla f_{i,t}(\x_{i,t};\xi_{i,t}) - \nabla F_{i,t}(\x_{i,t}) } }^2 \\ \nonumber
= & \frac{1}{n^2}\EE_{ \Xi_{n,t} \sim \Dcal_{n,t} }\lrincir{ \sum_{i=1}^n \EE_{ \xi_{i,t} \sim D_{i,t} }\lrnorm{ \nabla f_{i,t}(\x_{i,t};\xi_{i,t}) - \nabla F_{i,t}(\x_{i,t}) }^2  } \\ \nonumber 
+ & \frac{2}{n^2}\EE_{ \Xi_{n,t} \sim \Dcal_{n,t} }\sum_{i=1}^n\sum_{j=1, j\neq i}^n\lrangle{  \nabla f_{i,t}(\x_{i,t};\xi_{i,t}) - \nabla F_{i,t}(\x_{i,t}),   \nabla f_{j,t}(\x_{j,t};\xi_{j,t}) - \nabla F_{j,t}(\x_{j,t})} \\ \nonumber
= & \frac{1}{n^2}\EE_{ \Xi_{n,t-1} \sim \Dcal_{n,t-1} }\sum_{i=1}^n \EE_{ \xi_{i,t} \sim D_{i,t} }\lrnorm{ \nabla f_{i,t}(\x_{i,t};\xi_{i,t}) - \nabla F_{i,t}(\x_{i,t}) }^2 + 0 \\ \nonumber
\le & \frac{1}{n} \sigma^2.
\end{align} $\textcircled{2}$ holds due to $F_{i,t}$ has $L$ Lipschitz gradients.

 Therefore, we obtain
\begin{align}
\nonumber
& I_1(t) =  (J_1(t) + J_2(t)) \\ \nonumber
= &   \lrincir{ \frac{L}{n} \EE_{ \Xi_{n,t-1} \sim \Dcal_{n,t-1} }\sum_{i=1}^n \lrnorm{\x_{i,t} - \bar{\x}_t}^2 + \frac{\eta}{n} \sigma^2 + \frac{2\eta L^2}{n}\EE_{ \Xi_{n,t-1} \sim \Dcal_{n,t-1} }\sum_{i=1}^n \lrnorm{\x_{i,t} - \bar{\x}_t }^2 } \\ \nonumber
& +  \lrincir{ \lrincir{2+\frac{1}{1-\rho}}\eta G^2 + \frac{1}{2\eta} \EE_{ \Xi_{n,t} \sim \Dcal_{n,t} }\lrnorm{ \bar{\x}_t - \bar{\x}_{t+1}}^2 } \\ \nonumber
\le &   \lrincir{ \frac{L}{n} + \frac{2\eta L^2}{n} }\EE_{ \Xi_{n,t-1} \sim \Dcal_{n,t-1} }\sum_{i=1}^n\lrnorm{\x_{i,t} - \bar{\x}_t }^2   + \lrincir{2+\frac{1}{1-\rho}}\eta G^2 + \frac{\eta  \sigma^2}{n} +  \frac{1}{2\eta} \EE_{ \Xi_{n,t} \sim \Dcal_{n,t} }\lrnorm{ \bar{\x}_t - \bar{\x}_{t+1}}^2.
\end{align}

Therefore, we have 
\begin{align}
\nonumber
& \sum_{t=1}^T I_1(t) \\ \nonumber 
\le &  \lrincir{ \frac{L}{n} + \frac{2\eta L^2}{n} }\EE_{ \Xi_{n,t-1} \sim \Dcal_{n,t-1} }\sum_{i=1}^n\sum_{t=1}^T\lrnorm{\x_{i,t} - \bar{\x}_t }^2   + \lrincir{2+\frac{1}{1-\rho}}T\eta G^2  + \frac{T\eta  \sigma^2}{n}  +  \frac{1}{2\eta} \EE_{ \Xi_{n,t} \sim \Dcal_{n,t} }\sum_{t=1}^T\lrnorm{ \bar{\x}_t - \bar{\x}_{t+1}}^2.
\end{align}

Now, we begin to bound $I_2(t)$. Denote that the update rule is 
\begin{align}
\nonumber
\x_{i,t+1} = \sum_{j=1}^n \W_{ij}\x_{j,t} - \eta \nabla f_{i,t}(\x_{i,t};\xi_{i,t}).
\end{align}  According to Lemma \ref{Lemma_average_update_rule}, we have 
\begin{align}
\label{equa_thoerem_update_rule_equivalent}
\bar{\x}_{t+1} = \bar{\x}_t - \eta \lrincir{\frac{1}{n}\sum_{i=1}^n \nabla f_{i,t}(\x_{i,t};\xi_{i,t})}.
\end{align} 
Denote a new auxiliary function $\phi(\z)$ as 
\begin{align}
\nonumber
\phi(\z) = \lrangle{\frac{1}{n}\sum_{i=1}^n \nabla f_{i,t}(\x_{i,t};\xi_{i,t}), \z} + \frac{1}{2\eta}\lrnorm{\z - \bar{\x}_t}^2.
\end{align} 

It is trivial to verify that \eqref{equa_thoerem_update_rule_equivalent} satisfies the first-order optimality condition of the optimization problem: $\min_{\z\in\RR^d} \phi(\z)$, that is,
\begin{align}
\nonumber
\nabla \phi(\bar{\x}_{t+1}) = \0.
\end{align} We thus have 
\begin{align}
\nonumber
\bar{\x}_{t+1} = \argmin_{\z\in\RR^d} \phi(\z) = \argmin_{\z\in\RR^d} \lrangle{\frac{1}{n}\sum_{i=1}^n \nabla f_{i,t}(\x_{i,t};\xi_{i,t}), \z} + \frac{1}{2\eta}\lrnorm{\z - \bar{\x}_t}^2.
\end{align} Furthermore, denote a new auxiliary variable $\bar{\x}_{\tau}$ as  
\begin{align}
\nonumber
\bar{\x}_{\tau} = \bar{\x}_{t+1} + \tau \lrincir{\x_t^\ast - \bar{\x}_{t+1}},
\end{align} where $0< \tau \le 1$. According to the optimality of $\bar{\x}_{t+1}$, we have
\begin{align}
\nonumber
& 0 \le \phi(\bar{\x}_{\tau}) - \phi(\bar{\x}_{t+1}) \\ \nonumber
= & \lrangle{\frac{1}{n}\sum_{i=1}^n \nabla f_{i,t}(\x_{i,t};\xi_{i,t}), \bar{\x}_{\tau} - \bar{\x}_{t+1}} + \frac{1}{2\eta}\lrincir{ \lrnorm{\bar{\x}_{\tau} - \bar{\x}_t}^2 - \lrnorm{\bar{\x}_{t+1} - \bar{\x}_t}^2 } \\ \nonumber
= & \lrangle{\frac{1}{n}\sum_{i=1}^n \nabla f_{i,t}(\x_{i,t};\xi_{i,t}), \tau \lrincir{\x_t^\ast - \bar{\x}_{t+1}}} + \frac{1}{2\eta}\lrincir{ \lrnorm{\bar{\x}_{t+1} + \tau \lrincir{\x_t^\ast - \bar{\x}_{t+1}} - \bar{\x}_t}^2 - \lrnorm{\bar{\x}_{t+1} - \bar{\x}_t}^2 } \\ \nonumber
= & \lrangle{\frac{1}{n}\sum_{i=1}^n \nabla f_{i,t}(\x_{i,t};\xi_{i,t}), \tau \lrincir{\x_t^\ast - \bar{\x}_{t+1}}} + \frac{1}{2\eta}\lrincir{ \lrnorm{\tau \lrincir{\x_t^\ast - \bar{\x}_{t+1}}}^2 + 2\lrangle{\tau \lrincir{\x_t^\ast - \bar{\x}_{t+1}}, \bar{\x}_{t+1} - \bar{\x}_t } }.
\end{align} Note that the above inequality holds for any $0< \tau \le 1$. Divide $\tau$ on both sides, and we have
\begin{align}
\nonumber
I_2(t) = & \EE_{ \Xi_{n,t} \sim \Dcal_{n,t} } \lrangle{\frac{1}{n}\sum_{i=1}^n \nabla f_{i,t}(\x_{i,t};\xi_{i,t}), \bar{\x}_{t+1} - \x_t^\ast} \\ \nonumber 
\le & \frac{1}{2\eta}\EE_{ \Xi_{n,t} \sim \Dcal_{n,t} }\lrincir{ \lim_{\tau \rightarrow 0^+}\tau \lrnorm{\lrincir{\x_t^\ast - \bar{\x}_{t+1}}}^2 + 2\lrangle{ \x_t^\ast - \bar{\x}_{t+1}, \bar{\x}_{t+1} - \bar{\x}_t } } \\ \nonumber
= & \frac{1}{\eta}\EE_{ \Xi_{n,t} \sim \Dcal_{n,t} }\lrangle{ \x_t^\ast - \bar{\x}_{t+1}, \bar{\x}_{t+1} - \bar{\x}_t } \\ \label{equa_I3_temp}
= & \frac{1}{2\eta}\EE_{ \Xi_{n,t} \sim \Dcal_{n,t} }\lrincir{ \lrnorm{\x_t^\ast - \bar{\x}_t}^2 - \lrnorm{\x_t^\ast - \bar{\x}_{t+1}}^2 - \lrnorm{\bar{\x}_t - \bar{\x}_{t+1}}^2 }. 
\end{align} Besides, we have
\begin{align}
\nonumber
& \lrnorm{\x_{t+1}^\ast - \bar{\x}_{t+1}}^2 - \lrnorm{\x_t^\ast - \bar{\x}_{t+1}}^2 \\ \nonumber 
= & \lrnorm{\x_{t+1}^\ast}^2 - \lrnorm{\x_t^\ast}^2 - 2\lrangle{\bar{\x}_{t+1}, -\x_t^\ast + \x_{t+1}^\ast} \\ \nonumber
= & \lrincir{\lrnorm{\x_{t+1}^\ast} - \lrnorm{\x_t^\ast}} \lrincir{\lrnorm{\x_{t+1}^\ast} + \lrnorm{\x_t^\ast}} - 2\lrangle{\bar{\x}_{t+1}, -\x_t^\ast + \x_{t+1}^\ast} \\ \nonumber
\le & \lrnorm{\x_{t+1}^\ast - \x_t^\ast} \lrincir{\lrnorm{\x_{t+1}^\ast} + \lrnorm{\x_t^\ast}} + 2\lrnorm{\bar{\x}_{t+1}} \lrnorm{\x_{t+1}^\ast-\x_t^\ast} \\ \nonumber
\le & 4\sqrt{R}\lrnorm{\x_{t+1}^\ast - \x_t^\ast}.   
\end{align} The last inequality holds due to our assumption, that is, $\lrnorm{\x_{t+1}^\ast}=\lrnorm{\x_{t+1}^\ast - \0}\le \sqrt{R}$, $\lrnorm{\x_t^\ast} = \lrnorm{\x_t^\ast-\0} \le \sqrt{R}$, and $\lrnorm{\bar{\x}_{t+1}} = \lrnorm{\bar{\x}_{t+1}-\0} \le \sqrt{R}$. 

Thus, telescoping $I_2(t)$ over $t\in[T]$, we have 
\begin{align}
\nonumber
\sum_{t=1}^T I_2(t) \le & \frac{1}{2\eta}\EE_{ \Xi_{n,T} \sim \Dcal_{n,T} }\lrincir{ 4\sqrt{R}\sum_{t=1}^T\lrnorm{\x_{t+1}^\ast - \x_t^\ast} + \lrnorm{\bar{\x}_1^\ast - \bar{\x}_1}^2 - \lrnorm{\bar{\x}_T^\ast - \bar{\x}_{T+1}}^2 }  - \frac{1}{2\eta} \EE_{ \Xi_{n,T} \sim \Dcal_{n,T} }\sum_{t=1}^T \lrnorm{\bar{\x}_t - \bar{\x}_{t+1}}^2 \\ \nonumber
\le & \frac{1}{2\eta}\lrincir{ 4\sqrt{R} M + R } - \frac{1}{2\eta} \EE_{ \Xi_{n,T} \sim \Dcal_{n,T} } \sum_{t=1}^T \lrnorm{\bar{\x}_t - \bar{\x}_{t+1} }^2.
\end{align} Here, $M$ the budget of the dynamics.

Combining those bounds of $I_1(t)$, and $I_2(t)$ together, we finally obtain
\begin{align}
\nonumber
& \EE_{ \Xi_{n,T} \sim \Dcal_{n,T} } \sum_{t=1}^T\sum_{i=1}^n f_{i,t}(\x_{i,t};\xi_{i,t}) - f_{i,t}(\x_t^\ast;\xi_{i,t}) \le n \sum_{t=1}^T \lrincir{ I_1(t) + I_2(t) } \\ \nonumber
\le & \eta T \sigma^2 +  \lrincir{L + 2\eta L^2}  \EE_{ \Xi_{n,T} \sim \Dcal_{n,T} }\sum_{t=1}^T\sum_{i=1}^n \lrnorm{ \bar{\x}_t - \x_{i,t} }^2 + \frac{n}{2\eta}\lrincir{ 4\sqrt{R}M + R  } + 2n\eta T G^2 + \frac{nT \eta G^2}{1-\rho}\\ \nonumber
\refabovecir{\le}{\textcircled{1}} & \eta T \sigma^2 +  \frac{\lrincir{2L + 4\eta L^2 }nT\eta^2(G^2 + \sigma^2)}{(1-\rho)^2} + \frac{n}{2\eta}\lrincir{ 4\sqrt{R}M + R  } + \lrincir{2+\frac{1}{1-\rho}}n\eta T G^2.
\end{align}  
$\textcircled{1}$ holds due to Lemma \ref{Lemma_x_variance_norm_square}
\begin{align}
\nonumber
\EE_{ \Xi_{n,T} \sim \Dcal_{n,T} } \sum_{i=1}^n\sum_{t=1}^T \lrnorm{\x_{i,t} - \bar{\x}_t}^2 \le \frac{nT\eta^2 (2G^2+2\sigma^2) }{(1-\rho)^2}.
\end{align} 
Rearranging items, we finally completes the proof.
\end{proof}

\begin{Lemma}
\label{Lemma_average_update_rule}
Denote $\bar{\x}_t = \frac{1}{n}\sum_{i=1}^n \x_{i,t}$. We have
\begin{align}
\nonumber
\bar{\x}_{t+1} =  \bar{\x}_{t} - \eta \lrincir{\frac{1}{n} \sum_{i=1}^n \nabla f_{i,t}(\x_{i,t};\xi_{i,t})}. 
\end{align}
\end{Lemma}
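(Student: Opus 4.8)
The plan is to derive the averaged dynamics directly from the per-node update rule stated in Algorithm~\ref{algo_DOG}, namely
\begin{align}
\nonumber
\x_{i,t+1} = \sum_{j=1}^n \W_{ij}\x_{j,t} - \eta \nabla f_{i,t}(\x_{i,t};\xi_{i,t}),
\end{align}
and to exploit the fact that $\W$ is doubly stochastic. First I would average both sides over $i\in[n]$, splitting the right-hand side into a \emph{consensus} term $\frac{1}{n}\sum_{i=1}^n\sum_{j=1}^n \W_{ij}\x_{j,t}$ and a \emph{gradient} term $-\frac{\eta}{n}\sum_{i=1}^n \nabla f_{i,t}(\x_{i,t};\xi_{i,t})$. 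The gradient term is already in the desired form, so the entire content of the lemma lives in the consensus term.

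The key step is to interchange the order of summation in the consensus term, writing $\frac{1}{n}\sum_{j=1}^n \x_{j,t}\lrincir{\sum_{i=1}^n \W_{ij}}$, and then invoke the column-stochasticity of $\W$: because $\W$ is doubly stochastic, each column sum $\sum_{i=1}^n \W_{ij}$ equals $1$. This collapses the consensus term to $\frac{1}{n}\sum_{j=1}^n \x_{j,t} = \bar{\x}_t$, which is exactly the definition of the average iterate. Combining this with the gradient term immediately yields $\bar{\x}_{t+1} = \bar{\x}_t - \eta\lrincir{\frac{1}{n}\sum_{i=1}^n \nabla f_{i,t}(\x_{i,t};\xi_{i,t})}$.

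There is no real obstacle here; the proof is a one-line computation once the update rule is substituted. The only point that requires any care is that the cancellation relies specifically on the \emph{column} sums of $\W$ being $1$ (equivalently $\1\Tr\W = \1\Tr$), rather than the row sums; both hold under the double-stochasticity assumption in Assumption~\ref{assumption_bounded_gradient_domain}, so the argument goes through without needing symmetry of $\W$. This lemma is precisely what licenses analyzing the scalar-like trajectory of $\bar{\x}_t$ as a centralized gradient step in the proof of Theorem~\ref{theorem_regret_upper_bound}.
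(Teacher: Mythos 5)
Your proof is correct and is essentially the paper's own argument written entrywise: the paper stacks the iterates into $\X_{t+1}=\X_t\W-\eta\G_t$ and right-multiplies by $\1/n$, which is the same one-line cancellation you obtain by interchanging the double sum $\frac{1}{n}\sum_{i}\sum_{j}\W_{ij}\x_{j,t}$. Your remark that the scalar update as written needs column-stochasticity ($\1\Tr\W=\1\Tr$) is apt and, if anything, slightly more careful than the paper, whose matrix convention $\X_t\W$ tacitly transposes the mixing and so invokes $\W\1=\1$ instead---double stochasticity covers both, so nothing is at stake.
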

\begin{proof}
Denote
\begin{align}
\nonumber
\X_t = &  [\x_{1,t}, \x_{2,t}, ..., \x_{n,t}] \in \RR^{d\times n}, \\ \nonumber
\G_t = & [\nabla f_{1,t}(\x_{1,t};\xi_{1,t}), \nabla f_{2,t}(\x_{2,t};\xi_{2,t}), ..., \nabla f_{n,t}(\x_{n,t};\xi_{n,t})] \in \RR^{d\times n}.
\end{align}

Denote that 
\begin{align}
\nonumber
\x_{i,t+1} = \sum_{j=1}^n \W_{ij}\x_{j,t} - \eta \nabla f_{i,t}(\x_{i,t};\xi_{i,t}).
\end{align} Equivalently, we re-formulate the update rule as
\begin{align}
\nonumber
\X_{t+1} = \X_{t}\W - \eta \G_t.
\end{align} Since the confusion matrix $\W$ is doublely stochastic, we have
\begin{align}
\nonumber
\W \1 = \1.
\end{align} Thus, we have
\begin{align}
\nonumber
\bar{\x}_{t+1} = & \frac{1}{n}\sum_{i=1}^n \x_{i,t+1} \\ \nonumber
= & \X_{t+1}\frac{\1}{n} \\ \nonumber 
= & \X_{t}\W\frac{\1}{n} - \eta \G_t\frac{\1}{n} \\ \nonumber
= & \X_{t}\frac{\1}{n} - \eta \G_t\frac{\1}{n} \\ \nonumber
=& \bar{\x}_{t} - \eta \lrincir{\frac{1}{n} \sum_{i=1}^n \nabla f_{i,t}(\x_{i,t};\xi_{i,t})}. 
\end{align} It completes the proof.
\end{proof}

\begin{Lemma}
\label{lemma_dsm_norm}
For any doubly stochastic matrix $\W$, its norm $\lrnorm{\W} = 1$.
\end{Lemma}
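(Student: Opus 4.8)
The plan is to prove $\lrnorm{\W} = 1$ by establishing the two inequalities $\lrnorm{\W} \ge 1$ and $\lrnorm{\W} \le 1$ separately, where $\lrnorm{\W}$ is the spectral ($\ell_2$ operator) norm, $\lrnorm{\W} = \sup_{\x \neq \0} \lrnorm{\W\x}/\lrnorm{\x}$. Double stochasticity will enter through its two defining features: every row sums to one, and every column sums to one. I expect the lower bound to be immediate and the upper bound to carry the (routine) content.

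For the lower bound, I would use that the row sums being one is exactly $\W\1 = \1$, so $\1$ is an eigenvector of eigenvalue $1$. Feeding $\x = \1$ into the variational definition gives $\lrnorm{\W} \ge \lrnorm{\W\1}/\lrnorm{\1} = \lrnorm{\1}/\lrnorm{\1} = 1$. For the upper bound, I would show the stronger pointwise statement $\lrnorm{\W\x} \le \lrnorm{\x}$ for every $\x$. Writing the $i$-th coordinate as $(\W\x)_i = \sum_j \W_{ij} x_j$, this is a genuine convex combination because $\W_{ij} \ge 0$ and $\sum_j \W_{ij} = 1$ (row stochasticity); applying Jensen's inequality to the convex map $t \mapsto t^2$ then yields $(\W\x)_i^2 \le \sum_j \W_{ij} x_j^2$. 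Summing over $i$ and exchanging the order of summation gives $\lrnorm{\W\x}^2 \le \sum_j x_j^2 \sum_i \W_{ij} = \sum_j x_j^2 = \lrnorm{\x}^2$, where the collapse $\sum_i \W_{ij} = 1$ is precisely column stochasticity. Taking the supremum over unit $\x$ gives $\lrnorm{\W} \le 1$, and combining the two bounds finishes the proof.

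The only point requiring attention—rather than a true obstacle—is that \emph{both} stochasticity conditions are genuinely used and for different purposes: row stochasticity legitimizes the Jensen step (each row is a probability vector), while column stochasticity is what makes the double sum telescope to $\lrnorm{\x}^2$. A row-stochastic-only matrix would still satisfy the lower bound via $\W\1=\1$, but the upper-bound argument would break at the final summation, so the symmetry of the hypothesis is essential. An alternative route for the upper bound is the operator-norm interpolation bound together with the fact that both the maximum row sum and maximum column sum of $\W$ equal $1$, but the Jensen argument above is entirely self-contained and avoids importing that inequality.
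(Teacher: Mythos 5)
Your proof is correct, but it takes a genuinely different route from the paper. The paper obtains the upper bound $\lrnorm{\W}\le 1$ by invoking the Birkhoff--von Neumann theorem: $\W$ is written as a convex combination $\sum_i \zeta_i \M_i$ of permutation matrices, each permutation matrix is an isometry, and the triangle inequality gives $\lrnorm{\W}\le\sum_i\zeta_i=1$. Your Jensen argument --- $(\W\x)_i^2 \le \sum_j \W_{ij} x_j^2$ by row stochasticity, then $\sum_i\sum_j \W_{ij}x_j^2 = \lrnorm{\x}^2$ by column stochasticity --- reaches the same bound with entirely elementary means, avoiding a nontrivial combinatorial decomposition theorem; it also makes transparent exactly where each half of the doubly stochastic hypothesis is consumed, which the Birkhoff route obscures (there both conditions are absorbed into the existence of the decomposition). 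For the lower bound, the paper argues through the spectrum of $\W\W\Tr$, noting it has eigenvalue $1$ so $\lrnorm{\W}^2\ge 1$; your choice of plugging $\x=\1$ into the variational definition is logically equivalent but simpler, and in fact needs only row stochasticity, whereas the paper's version implicitly uses $\W\Tr\1=\1$ as well. What the paper's approach buys is a reusable structural fact (the permutation decomposition) that is sometimes handy elsewhere; what yours buys is self-containedness and a sharper accounting of hypotheses. Your closing remark that the upper bound could alternatively follow from the interpolation bound $\lrnorm{\W}\le\sqrt{\lrnorm{\W}_1\lrnorm{\W}_\infty}$ with both max row and column sums equal to $1$ is also correct, and is a third route distinct from both proofs.
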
   
\begin{proof}
According to Birkhoff-von Neumann theorem \citep{dufosse:hal-31}, $\W$ is a convex combination of some, e.g. $k$, permutation matrices $\{\M_i\}_{i=1}^k$, that is, 
\begin{align}
\W = \sum_{i=1}^k \zeta_i \M_i,    
\end{align} where $0\le \zeta_i \le 1$ for any $1\le i\le k$ and $\sum_{i=1}^k \zeta_i  = 1$.

For any a vector $\u$ such that $\lrnorm{\u}=1$, we have $\lrnorm{\M_i} = \sup_{\lrnorm{\u}=1}\lrnorm{\M_i \u} = \sup_{\lrnorm{\u}=1} \lrnorm{\u} = 1$. Therefore, we have
\begin{align}
\lrnorm{\W} \le \sum_{i=1}^k \zeta_i \lrnorm{\M_i} = \sum_{i=1}^k \zeta_i =1.
\end{align} Since $\lrnorm{\W}^2$ is the maximal eigenvalue of $\W\W\Tr$, and $\W\W\Tr$ has a eigenvalue $1$, thus $\lrnorm{\W} \ge 1$. 

Therefore, $\lrnorm{\W} = 1$, and the proof is completed.
\end{proof}

\begin{Lemma}
\label{Lemma_hanlin_1}
Denote $\v_1 = \frac{\1}{\sqrt{n}}$.  Given any matrix $\X$ and doubly stochastic matrix  $\W$, we have
\begin{align}
\nonumber
\lrnorm{\X \W^t - \X \v_1 \v_1\Tr }_F^2 \le \lrincir{\rho^t \norm{\X}_F}^2, 
\end{align} where  $\rho = \lrnorm{\W-\V_n}$ and $\V_n = \frac{1}{n}\1\1\Tr$.  
\end{Lemma}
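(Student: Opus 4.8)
The plan is to reduce everything to the algebraic identity $\W^t - \V_n = \lrincir{\W - \V_n}^t$, after which the bound follows from standard norm submultiplicativity. First I would record the consequences of double stochasticity. Since $\v_1 = \1/\sqrt{n}$ we have $\V_n = \v_1\v_1\Tr = \frac{1}{n}\1\1\Tr$, and because $\W\1 = \1$ and $\1\Tr\W = \1\Tr$ it follows that $\W\V_n = \V_n$, $\V_n\W = \V_n$, and $\V_n^2 = \V_n$; in particular $\V_n$ is idempotent and is absorbed on either side by $\W$.

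Second I would prove $\W^t - \V_n = \lrincir{\W - \V_n}^t$ by induction on $t$. The base case $t=1$ is immediate. For the inductive step I expand $\lrincir{\W - \V_n}^{t+1} = \lrincir{\W^t - \V_n}\lrincir{\W - \V_n} = \W^{t+1} - \W^t\V_n - \V_n\W + \V_n^2$, and the absorption identities above collapse the last three terms to $-\V_n - \V_n + \V_n = -\V_n$, giving $\W^{t+1} - \V_n$. Here I use $\W^t\V_n = \V_n$, which itself follows by iterating $\W\V_n = \V_n$.

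Third, with the identity in hand I factor out $\X$: since $\X\v_1\v_1\Tr = \X\V_n$, the quantity of interest equals $\lrnorm{\X\lrincir{\W^t - \V_n}}_F^2 = \lrnorm{\X\lrincir{\W - \V_n}^t}_F^2$. I then apply the mixed inequality $\lrnorm{\A\B}_F \le \lrnorm{\A}_F\lrnorm{\B}$ (Frobenius norm against spectral norm) with $\A = \X$ and $\B = \lrincir{\W - \V_n}^t$, followed by submultiplicativity of the spectral norm, $\lrnorm{\lrincir{\W - \V_n}^t} \le \lrnorm{\W - \V_n}^t = \rho^t$. Squaring yields $\lrnorm{\X\W^t - \X\v_1\v_1\Tr}_F^2 \le \rho^{2t}\lrnorm{\X}_F^2$, which is the claimed bound.

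The only delicate point is the algebraic identity in the second step: it is precisely where double stochasticity (both row and column sums equal to one) is used, and it is what makes $\W^t$ and its stationary part $\V_n$ differ by a pure power of the contraction $\W - \V_n$. Everything after that is routine, relying only on the standard submultiplicativity of the operator and Frobenius norms; note that $\rho < 1$ is not even needed for this lemma, since the estimate holds for any value of $\rho$.
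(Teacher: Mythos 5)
Your proof is correct, and it takes a genuinely different — and more economical — route than the paper's. You reduce everything to the one-sided identity $\W^t - \V_n = \lrincir{\W - \V_n}^t$, proved by induction from the absorption relations $\W\V_n = \V_n\W = \V_n = \V_n^2$, and then finish with the mixed-norm inequality $\lrnorm{\X\B}_F \le \lrnorm{\X}_F\lrnorm{\B}$ and submultiplicativity of the spectral norm. The paper instead expands $\lrnorm{\X\W^t - \X\V_n}_F^2$ as a trace, performs a full spectral decomposition of the symmetric PSD matrix $\W^t\lrincir{\W^t}\Tr$ — invoking a separate lemma (via Birkhoff--von Neumann) to establish $\lrnorm{\W}=1$ and hence that the top eigenvalue is exactly $1$ with eigenvector $\v_1$ — bounds the trace by $\lambda^{(2)}\lrnorm{\X}_F^2$, and only then uses the two-sided analogue of your identity, $\lrincir{\W-\V_n}^t\lrincir{\W\Tr-\V_n}^t = \W^t\lrincir{\W\Tr}^t - \V_n$, to get $\lambda^{(2)} \le \rho^{2t}$. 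The two arguments rest on the same algebraic core (the absorption identities coming from double stochasticity, used to telescope powers), but your mixed Frobenius/spectral inequality does in one line what the paper's eigenvalue-ordering argument does in half a page, and you avoid the auxiliary norm lemma entirely. What the paper's longer route buys is explicit information about the eigenstructure of $\W^t\lrincir{\W^t}\Tr$ (which is not needed elsewhere), while your version is equally tight for the stated bound; your closing observation that $\rho<1$ is not required for this particular estimate is also accurate, since that hypothesis only matters when the bound is summed over $t$ in Lemmas \ref{Lemma_hanlin_2} and \ref{Lemma_x_variance_norm_square}.
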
 

\begin{proof}
We start from the left hand side:
\begin{align}
\nonumber
\lrnorm{\X\W^t-\X\V_n}_F^2 = & \tr\lrincir{(\X\W^t-\X\V_n) (\X\W^t-\X\V_n)\Tr} \\ \nonumber
= & \tr\lrincir{\X\W^t \lrincir{\W^t}\Tr\X\Tr-\X\W^t \V_n\Tr\X\Tr-\X\V_n\lrincir{\W^t}\Tr\X\Tr + \X\V_n\V_n\Tr\X\Tr} \\ \nonumber
\refabovecir{=}{\textcircled{1}} & \tr\lrincir{\X\W^t \lrincir{\W^t}\Tr\X\Tr-\X\V_n\X\Tr}.
\end{align} `$\tr$' represents the \textit{trace} operator. $\textcircled{1}$ holds due to $\V_n = \V_n\Tr$,  
\begin{align}
\nonumber
\W^t \V_n\Tr = \frac{1}{n}\W^{t-1} \W \1\1\Tr =  \frac{1}{n} \W^{t-1} \1\1\Tr = \cdots = \V_n,
\end{align} similarly
\begin{align}
\nonumber
\V_n\lrincir{\W^t}\Tr = \frac{1}{n}\1\1\Tr\lrincir{\W^t}\Tr =  \frac{1}{n}\1\1\Tr\lrincir{\W^{t-1}}\Tr = \V_n,
\end{align} and $\V_n\V_n\Tr = \V_n$. 

Additionally, since $\W$ is a doubly stochastic matrix, we have
\begin{align}
\nonumber
\W^t \lrincir{\W^t}\Tr \frac{\1}{\sqrt{n}} = \W^t \lrincir{\W\Tr}^{t-1}\W\Tr \frac{\1}{\sqrt{n}} = \W^t \lrincir{\W\Tr}^{t-1} \frac{\1}{\sqrt{n}} = \cdots = \frac{\1}{\sqrt{n}}.
\end{align}  Thus, $1$ is one of eigenvalues of $\W^t \lrincir{\W^t}\Tr$, and its largest eigenvalue $\lambda^{(1)}_{\W^t \lrincir{\W^t}\Tr} \ge 1$.
According to Lemma \ref{lemma_dsm_norm}, we have
\begin{align}
\lambda^{(1)}_{\W^t \lrincir{\W^t}\Tr} = \lrnorm{\W^t \lrincir{\W\Tr}^t} \le \lrnorm{\W \lrincir{\W}\Tr}^t \le   \lrnorm{\W}^t \lrnorm{\W\Tr}^t \le 1. 
\end{align}

Thus, $\lambda^{(1)}_{\W^t \lrincir{\W^t}\Tr} = 1$ is its largest eigenvalue, and $\v_1 = \frac{1}{\sqrt{n}}\1$ is the corresponding eigenvector. Since $\W^t \lrincir{\W^t}\Tr$ is a symmetric and positive semi-definite matrix, we decompose $\W^t \lrincir{\W^t}\Tr$ as $\W^t \lrincir{\W^t}\Tr = \sum_{i=1}^n \lambda^{(i)}_{\W^t \lrincir{\W^t}\Tr} \v_i \v_i\Tr = \P \bLambda \P\Tr$, where $\P = [\v_1, \v_2, ..., \v_n]\in\RR^{n\times n}$. $\v_i$ is the normalized eigenvector corresponding to the $i$-th eigenvalue $\lambda^{(i)}_{\W^t \lrincir{\W^t}\Tr}$. Denote the absolute value of the $i$-th largest eigenvalue of $\W^t \lrincir{\W^t}\Tr$ by $\lambda^{(i)}_{\W^t \lrincir{\W^t}\Tr}$, that is, $\lambda^{(1)}_{\W^t \lrincir{\W^t}\Tr} \ge \cdots \ge \lambda^{(i)}_{\W^t \lrincir{\W^t}\Tr} \ge \cdots \ge \lambda^{(n)}_{\W^t \lrincir{\W^t}\Tr}$.  $\bLambda$ is a diagonal matrix, and $\lambda^{(i)}_{\W^t \lrincir{\W^t}\Tr}$ is its $i$-th element.

Due to  $\W^t \lrincir{\W^t}\Tr = \V_n + \sum_{i=2}^n \lambda^{(i)}_{\W^t \lrincir{\W^t}\Tr} \v_i\v_i\Tr$, we have
\begin{align}
\nonumber
\lrnorm{\X\W^t-\X\V_n}_F^2 = & \tr\lrincir{\X \lrincir{\sum_{i=2}^n \lambda^{(i)}_{\W^t \lrincir{\W^t}\Tr} \v_i\v_i\Tr} \X\Tr} \\ \nonumber
= & \sum_{i=2}^n \lambda^{(i)}_{\W^t \lrincir{\W^t}\Tr} \tr\lrincir{   \X \lrincir{\v_i\v_i\Tr} \X\Tr} \\ \nonumber
= & \sum_{i=2}^n \lambda^{(i)}_{\W^t \lrincir{\W^t}\Tr} \lrnorm{\X \v_i}^2 \\ \nonumber
\le & \lambda^{(2)}_{\W^t \lrincir{\W^t}\Tr}  \sum_{i=1}^n \lrnorm{\X \v_i}^2    \\ \nonumber     
= & \lambda^{(2)}_{\W^t \lrincir{\W^t}\Tr}  \norm{\X \P}_F^2 {~~~~~~~~~~~~} \text{(due to $\P$'s definition)}    \\ \nonumber
= & \lambda^{(2)}_{\W^t \lrincir{\W^t}\Tr}  \tr\lrincir{\X \P \P\Tr \X\Tr} \\ \nonumber
= & \lambda^{(2)}_{\W^t \lrincir{\W^t}\Tr}  \tr\lrincir{\X \X\Tr}  \\ \nonumber 
= & \lambda^{(2)}_{\W^t \lrincir{\W^t}\Tr}  \norm{\X}_F^2.
\end{align}

Recall that $\V_n = \V_n\Tr$, $\W^t \V_n\Tr = \V_n\lrincir{\W^t}\Tr = \V_n$, and $\V_n\V_n = \V_n$. We thus have 
\begin{align}
\nonumber
& (\W - \V_n)^t (\W\Tr - \V_n)^t = (\W - \V_n)^{t-1} (\W\W\Tr - \W\V_n -\V_n\W\Tr + \V_n \V_n) (\W\Tr - \V_n)^{t-1} \\ \nonumber 
= & (\W - \V_n)^{t-1} (\W\W\Tr - \V_n) (\W\Tr - \V_n)^{t-1}\\ \nonumber 
= & \cdots = \W^t (\W\Tr)^t - \V_n. 
\end{align}  Since $\rho = \lrnorm{\W - \V_n} = \lrnorm{\W\Tr - \V_n}$, we have 
\begin{align}
\nonumber
\lambda^{(2)}_{\W^t \lrincir{\W^t}\Tr} = \lrnorm{\W^t (\W\Tr)^t - \V_n} \le \lrnorm{\W - \V_n}^t \lrnorm{\W\Tr - \V_n}^t = \rho^{2t}. 
\end{align} We finally have
\begin{align}
\nonumber
\lrnorm{\X\W^t-\X\V_n}_F^2 \le \lrincir{\rho^t \norm{\X}_F}^2.
\end{align} It completes the proof.
\end{proof}

\begin{Lemma}[Lemma $6$ in \citep{Tang:2018un}]
\label{Lemma_hanlin_2}
Given two non-negative sequences $\{a_t\}_{t=1}^{\infty}$ and $\{b_t\}_{t=1}^{\infty}$ that satisfying
\begin{align}
\nonumber
a_t = \sum_{s=1}^t \rho^{t-s} b_s,
\end{align} with $\rho \in [0,1)$, we have
\begin{align}
\nonumber
\sum_{t=1}^k a_t^2 \le \frac{1}{(1-\rho)^2}\sum_{s=1}^k b_s^2.
\end{align}
\end{Lemma}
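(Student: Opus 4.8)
The plan is to treat $a_t$ as a discrete convolution of the sequence $\{b_s\}$ against the geometric kernel $\rho^{t-s}$, and to bound its total $\ell_2$ energy by a Cauchy--Schwarz argument that exploits the fact that the kernel weights sum to at most $1/(1-\rho)$. The whole proof reduces to applying Cauchy--Schwarz pointwise in $t$, summing, and then interchanging the order of summation.

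First I would rewrite each summand by splitting the geometric factor symmetrically as $\rho^{t-s} = \rho^{(t-s)/2}\cdot \rho^{(t-s)/2}$ and apply the Cauchy--Schwarz inequality to obtain
\begin{align}
\nonumber
a_t^2 = \left(\sum_{s=1}^t \rho^{(t-s)/2}\cdot \rho^{(t-s)/2} b_s\right)^2 \le \left(\sum_{s=1}^t \rho^{t-s}\right)\left(\sum_{s=1}^t \rho^{t-s} b_s^2\right).
\end{align}
Since $\sum_{s=1}^t \rho^{t-s} = \sum_{j=0}^{t-1}\rho^j \le \frac{1}{1-\rho}$ for $\rho\in[0,1)$, the first factor is controlled uniformly in $t$, giving the pointwise estimate $a_t^2 \le \frac{1}{1-\rho}\sum_{s=1}^t \rho^{t-s} b_s^2$.

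Next I would sum this inequality over $t=1,\dots,k$ and exchange the order of summation, collecting all terms carrying a fixed $b_s^2$:
\begin{align}
\nonumber
\sum_{t=1}^k a_t^2 \le \frac{1}{1-\rho}\sum_{t=1}^k\sum_{s=1}^t \rho^{t-s} b_s^2 = \frac{1}{1-\rho}\sum_{s=1}^k b_s^2 \sum_{t=s}^k \rho^{t-s}.
\end{align}
Bounding the inner geometric sum once more by $\sum_{t=s}^k \rho^{t-s} = \sum_{j=0}^{k-s}\rho^j \le \frac{1}{1-\rho}$ produces the second factor of $1/(1-\rho)$ and yields the claimed bound $\sum_{t=1}^k a_t^2 \le \frac{1}{(1-\rho)^2}\sum_{s=1}^k b_s^2$.

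The argument is essentially routine, and I do not expect a genuine obstacle; the only delicate point is the bookkeeping in the interchange of summation order, where one must correctly note that for a fixed $s$ the index $t$ ranges over $\{s,s+1,\dots,k\}$ rather than $\{1,\dots,k\}$, so that the residual kernel sum is again a finite geometric series bounded by $1/(1-\rho)$. The structural insight that makes the two factors of $1/(1-\rho)$ appear transparently is the symmetric split of $\rho^{t-s}$ in the Cauchy--Schwarz step: an asymmetric split would still succeed but would obscure why the final constant is exactly $1/(1-\rho)^2$.
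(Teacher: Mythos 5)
Your proof is correct: the symmetric Cauchy--Schwarz split of $\rho^{t-s}$, the uniform geometric-sum bound $\sum_{j\ge 0}\rho^j \le \frac{1}{1-\rho}$, and the interchange of summation (with $t$ ranging over $\{s,\dots,k\}$ for fixed $s$) are all handled properly and yield exactly the stated constant $\frac{1}{(1-\rho)^2}$. The paper does not reprove this lemma but imports it from \citet{Tang:2018un}, and your argument is essentially the same standard one given there, so there is nothing further to reconcile.
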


\citet{8015179Shahram} investigates the dynamic regret of DOG, and provide the following sublinear regret.
\begin{Theorem}[Implied by Theorem $3$ and Corollary $4$ in \citet{8015179Shahram}]
\label{theorem_privious_dog_regret}
Choose $\eta = \sqrt{\frac{(1-\rho) M}{T}}$ in Algorithm \ref{algo_DOG}. Under Assumption \ref{assumption_bounded_gradient_domain}, the dynamic regret $\Rcal_T^{\textsc{DOG}}$ is bounded by $\Ocal{n^{\frac{3}{2}}\sqrt{\frac{MT}{1-\rho}} }$.
\end{Theorem}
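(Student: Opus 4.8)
The plan is to obtain this bound as a direct specialization of the decentralized online mirror descent (DOMD) analysis of \citet{8015179Shahram} to the DOG update of Algorithm \ref{algo_DOG}, rather than to re-derive it from scratch. First I would observe that DOMD with the Euclidean Bregman divergence $\frac{1}{2}\lrnorm{\cdot}^2$ coincides exactly with DOG: the proximal/mirror step induced by a quadratic mirror map is an ordinary gradient step, and the neighbor-averaging (consensus) step is identical in both algorithms. Consequently every estimate in their Theorem $3$ transfers verbatim, and their Corollary $4$ (which specializes Theorem $3$ with a concrete step size) applies to DOG once we identify our constants $G$, $R$, and $\rho$ with theirs; note that they consider only the adversarial loss, i.e. the case $\sigma=0$ of Assumption \ref{assumption_bounded_gradient_domain}.

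Next I would record the step-size-dependent form of their bound under Assumption \ref{assumption_bounded_gradient_domain}. It is a sum of a path-length term of order $\frac{n^{3/2}(M+R)}{\eta}$ and a gradient/network term of order $\frac{n^{3/2}\eta T G^2}{1-\rho}$. The crucial point is that their control of the consensus error $\sum_{i,t}\lrnorm{\x_{i,t}-\bar{\x}_t}^2$ is coarser than our Lemma \ref{Lemma_x_variance_norm_square} by a factor of order $n$, and it is this coarser network estimate that inflates the leading power of $n$ from $n$ (as achieved in our Corollary \ref{corollary_regret_upper_bound}) to $n^{3/2}$ in their bound.

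Finally I would plug in the step size $\eta = \sqrt{\frac{(1-\rho)M}{T}}$ prescribed by their Corollary $4$. With this choice the two leading terms balance: the path-length term becomes $n^{3/2}\sqrt{\frac{MT}{1-\rho}}$ and the gradient term becomes $n^{3/2}G^2\sqrt{\frac{MT}{1-\rho}}$, so that, treating $G$ and $R$ as absolute constants, the dynamic regret is $\Ocal{n^{3/2}\sqrt{\frac{MT}{1-\rho}}}$, as claimed.

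I expect the main obstacle to be bookkeeping rather than any new idea: one must carefully align the regularity assumptions and the definition of dynamic regret used in \citet{8015179Shahram} with ours (in particular verifying that their reference-point path-length constraint matches $\Lcal_M^T$), and one must trace exactly where the extra $\sqrt{n}$ enters their estimate so that the comparison with our sharper Lemma \ref{Lemma_x_variance_norm_square} is rigorous rather than merely asserted.
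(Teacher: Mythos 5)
Your proposal matches the paper's treatment exactly: the paper offers no independent derivation of this theorem, presenting it as a direct import of Theorem~3 and Corollary~4 of \citet{8015179Shahram} specialized to DOG via the Euclidean Bregman divergence (an identification the paper itself makes in its related-work discussion), with the stated step size $\eta=\sqrt{(1-\rho)M/T}$ balancing the path-length and gradient terms, and with the extra $\sqrt{n}$ attributed---just as you say---to their coarser consensus-error estimate relative to Lemma~\ref{Lemma_x_variance_norm_square}. Your plan is correct and is essentially the same argument.
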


As illustrated in Theorem \ref{theorem_privious_dog_regret},   \citet{8015179Shahram} has provided a $\Ocal{n\sqrt{nTM}}$ regret for DOG. Comparing with the regret in \citet{8015179Shahram}, our analysis improves the dependence on $n$, which benefits from the following better bound of difference between $\x_{i,t}$ and $\bar{\x}_t$.
\begin{Lemma}
\label{Lemma_x_variance_norm_square}
Setting $\eta>0$ in Algorithm \ref{algo_DOG}, under Assumption \ref{assumption_bounded_gradient_domain} we have 
\begin{align}
\nonumber
\EE_{ \Xi_{n,T} \sim \Dcal_{n,T} } \sum_{i=1}^n\sum_{t=1}^T \lrnorm{\x_{i,t} - \bar{\x}_t}^2 \le \frac{2nT\eta^2 (G^2+\sigma^2) }{(1-\rho)^2}.
\end{align}
\end{Lemma}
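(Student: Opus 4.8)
The plan is to work in the matrix form of the update and unroll the recursion explicitly. Recall from Lemma \ref{Lemma_average_update_rule} that, writing $\X_t = [\x_{1,t},\ldots,\x_{n,t}] \in \RR^{d\times n}$ and $\G_t = [\nabla f_{1,t}(\x_{1,t};\xi_{1,t}),\ldots,\nabla f_{n,t}(\x_{n,t};\xi_{n,t})]$, the update reads $\X_{t+1} = \X_t \W - \eta \G_t$. Since the algorithm initializes $\x_{i,1}=\0$, we have $\X_1 = \0$, so unrolling gives the closed form $\X_t = -\eta \sum_{s=1}^{t-1} \G_s \W^{t-1-s}$. First I would establish this identity and observe that the averaged iterate satisfies $\bar{\X}_t = \X_t \V_n$ with $\V_n = \frac{1}{n}\1\1\Tr$, so that $\sum_{i=1}^n \lrnorm{\x_{i,t}-\bar{\x}_t}^2 = \lrnorm{\X_t - \bar{\X}_t}_F^2$.

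Next, because $\W$ is doubly stochastic we have $\W^{t-1-s}\V_n = \V_n$, hence $\X_t - \bar{\X}_t = \X_t(\I - \V_n) = -\eta\sum_{s=1}^{t-1}\G_s(\W^{t-1-s} - \V_n)$. Applying the triangle inequality in Frobenius norm and then Lemma \ref{Lemma_hanlin_1} (with exponent $t-1-s$) to each summand yields the pathwise bound $\lrnorm{\X_t - \bar{\X}_t}_F \le \eta\sum_{s=1}^{t-1}\rho^{t-1-s}\lrnorm{\G_s}_F$. This is exactly the geometric-convolution structure of Lemma \ref{Lemma_hanlin_2}; after an index shift (treating the right-hand side as $\sum_{s=1}^{t}\rho^{t-s}b_s$ with $b_s := \eta\lrnorm{\G_{s-1}}_F$ for $s\ge 2$ and $b_1:=0$, and using nonnegativity so the inequality survives squaring) I obtain the realization-wise estimate $\sum_{t=1}^T \lrnorm{\X_t-\bar{\X}_t}_F^2 \le \frac{\eta^2}{(1-\rho)^2}\sum_{t=1}^T \lrnorm{\G_t}_F^2$.

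Finally I would take expectations and bound $\EE\lrnorm{\G_t}_F^2 = \sum_{i=1}^n \EE\lrnorm{\nabla f_{i,t}(\x_{i,t};\xi_{i,t})}^2$. Here I condition on $\Xi_{n,t-1}$, under which $\x_{i,t}$ is determined and independent of $\xi_{i,t}$; splitting $\nabla f_{i,t}(\x_{i,t};\xi_{i,t}) = \lrincir{\nabla f_{i,t}(\x_{i,t};\xi_{i,t}) - \nabla F_{i,t}(\x_{i,t})} + \nabla F_{i,t}(\x_{i,t})$ and using $\lrnorm{\a+\b}^2 \le 2\lrnorm{\a}^2 + 2\lrnorm{\b}^2$ together with the two parts of Assumption \ref{assumption_bounded_gradient_domain} (conditional variance $\le \sigma^2$ and $\lrnorm{\nabla F_{i,t}(\x_{i,t})}\le G$) gives $\EE\lrnorm{\nabla f_{i,t}(\x_{i,t};\xi_{i,t})}^2 \le 2\sigma^2 + 2G^2$, so $\EE\lrnorm{\G_t}_F^2 \le 2n(G^2+\sigma^2)$. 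Summing over $t\in[T]$ and substituting into the previous display produces the claimed bound $\frac{2nT\eta^2(G^2+\sigma^2)}{(1-\rho)^2}$. The main obstacle I anticipate is the bookkeeping around Lemma \ref{Lemma_hanlin_2}: it is phrased as an exact recursion, so I must justify applying it to a pathwise \emph{inequality} (via nonnegativity and monotonicity of squaring) and treat the off-by-one index shift carefully; the expectation step is then routine, provided the independence of $\x_{i,t}$ and $\xi_{i,t}$ is invoked correctly so that the cross term is controlled rather than ignored.
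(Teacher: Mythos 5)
Your proposal is correct and follows essentially the same route as the paper's proof: unroll the matrix update $\X_{t+1}=\X_t\W-\eta\G_t$ from $\X_1=\0$, control $\lrnorm{\X_t-\bar{\X}_t}_F$ by the triangle inequality and Lemma \ref{Lemma_hanlin_1}, sum the resulting geometric convolution via Lemma \ref{Lemma_hanlin_2}, and finish with the $2\lrnorm{\a}^2+2\lrnorm{\b}^2$ split under Assumption \ref{assumption_bounded_gradient_domain}. Your added care in justifying the application of Lemma \ref{Lemma_hanlin_2} to a pathwise inequality with an index shift (which the paper applies silently) and in conditioning on $\Xi_{n,t-1}$ makes the argument slightly more rigorous but not different in substance.
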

\begin{proof}
Denote that 
\begin{align}
\nonumber
\x_{i,t+1} = \sum_{j=1}^n \W_{ij}\x_{j,t} - \eta \nabla f_{i,t}(\x_{i,t};\xi_{i,t}), 
\end{align} and according to Lemma \ref{Lemma_average_update_rule}, we have 
\begin{align}
\nonumber
\bar{\x}_{t+1} = \bar{\x}_t - \eta \lrincir{\frac{1}{n}\sum_{i=1}^n \nabla f_{i,t}(\x_{i,t};\xi_{i,t})}.
\end{align} Denote
\begin{align}
\nonumber
\X_t = &  [\x_{1,t}, \x_{2,t}, ..., \x_{n,t}] \in \RR^{d\times n}, \\ \nonumber
\G_t = & [\nabla f_{1,t}(\x_{1,t};\xi_{1,t}), \nabla f_{2,t}(\x_{2,t};\xi_{2,t}), ..., \nabla f_{n,t}(\x_{n,t};\xi_{n,t})] \in \RR^{d\times n}.
\end{align} By letting $\x_{i,1} = \0$ for any $i\in[n]$, the update rule is re-formulated as 
\begin{align}
\nonumber
\X_{t+1} = \X_t \W - \eta \G_t = - \sum_{s=1}^t \eta \G_s \W^{t-s}. 
\end{align} Similarly, denote $\bar{\G}_t = \frac{1}{n}\sum_{i=1}^n \nabla f_{i,t}(\x_{i,t};\xi_{i,t})$, and we have
\begin{align*}
\bar{\x}_{t+1} = \bar{\x}_t - \eta \lrincir{\frac{1}{n}\sum_{i=1}^n \nabla f_{i,t}(\x_{i,t};\xi_{i,t})} = - \sum_{s=1}^t \eta \bar{\G}_s. 
\end{align*}

Therefore, we obtain
\begin{align}
\nonumber
\sum_{i=1}^n \lrnorm{\x_{i,t} - \bar{\x}_t}^2 \refabovecir{=}{\textcircled{1}} & \sum_{i=1}^n \lrnorm{ \sum_{s=1}^{t-1} \lrincir{\eta \bar{\G}_s - \eta \G_s \W^{t-s-1}\e_i} }^2   \\ \nonumber
\refabovecir{=}{\textcircled{2}} & \lrnorm{ \sum_{s=1}^{t-1} \lrincir{\eta \G_s\v_1 \v_1\Tr - \eta \G_s \W^{t-s-1}} }^2_F   \\ \nonumber
= & \eta^2 \lrincir{\lrnorm{ \sum_{s=1}^{t-1} \lrincir{\G_s\v_1 \v_1\Tr - \G_s \W^{t-s-1}} }_F }^2   \\ \nonumber
\le & \eta^2 \lrincir{\sum_{s=1}^{t-1}\lrnorm{ \lrincir{\G_s\v_1 \v_1\Tr - \G_s \W^{t-s-1}} }_F }^2   \\ \nonumber
\refabovecir{\le}{\textcircled{3}} & \eta^2 \lrincir{\sum_{s=1}^{t-1} \rho^{t-s-1} \lrnorm{\G_s}_F }^2   \\ \nonumber
\end{align} $\textcircled{1}$ holds due to $\e_i$ is a unit basis vector, whose $i$-th element is $1$ and other elements are $0$s. $\textcircled{2}$ holds due to $\v_1 = \frac{\1_n}{\sqrt{n}}$. $\textcircled{3}$ holds due to Lemma \ref{Lemma_hanlin_1}.

Thus, we  have
\begin{align}
\nonumber
& \EE_{ \Xi_{n,T} \sim \Dcal_{n,T} } \sum_{i=1}^n\sum_{t=1}^T \lrnorm{\x_{i,t} - \bar{\x}_t}^2 \\ \nonumber 
\le & \EE_{ \Xi_{n,T} \sim \Dcal_{n,T} } \sum_{t=1}^T \lrincir{ \sum_{s=1}^{t-1} \eta \rho^{t-s-1} \lrnorm{\G_s}_F}^2  \\ \nonumber
\refabovecir{\le}{\textcircled{1}} & \frac{\eta^2}{(1-\rho)^2} \EE_{ \Xi_{n,T} \sim \Dcal_{n,T} } \lrincir{  \sum_{t=1}^T \lrnorm{\G_t}_F^2 } \\ \nonumber
= & \frac{\eta^2}{(1-\rho)^2} \lrincir{ \EE_{ \Xi_{n,T} \sim \Dcal_{n,T} } \sum_{t=1}^T \sum_{i=1}^n  \lrnorm{\nabla f_{i,t}(\x_{i,t};\xi_{i,t}) - \nabla F_{i,t}(\x_{i,t}) + \nabla F_{i,t}(\x_{i,t})}^2 } \\ \nonumber
\le & \frac{2\eta^2}{(1-\rho)^2} \EE_{ \Xi_{n,T} \sim \Dcal_{n,T} } \sum_{t=1}^T \sum_{i=1}^n  \lrnorm{\nabla f_{i,t}(\x_{i,t};\xi_{i,t}) - \nabla F_{i,t}(\x_{i,t})}^2 + \frac{2\eta^2}{(1-\rho)^2} \EE_{ \Xi_{n,T} \sim \Dcal_{n,T} } \sum_{t=1}^T \sum_{i=1}^n  \lrnorm{\nabla F_{i,t}(\x_{i,t})}^2 \\ \nonumber
\le & \frac{nT\eta^2 (2G^2+2\sigma^2) }{(1-\rho)^2}.
\end{align} $\textcircled{1}$ holds due to Lemma \ref{Lemma_hanlin_2}.  It completes the proof.
\end{proof}

\textbf{Proof to Theorem \ref{theorem_local_models_closer}:}
\begin{proof}
Setting $\eta = \sqrt{\frac{(1-\rho) \lrincir{nM\sqrt{R} + nR}}{ nTG^2 + T\sigma^2 }}$ into Lemma \ref{Lemma_x_variance_norm_square}, we finally complete the proof.
\end{proof}

\section*{Supplementary materials for empirical studies}

\begin{figure}[!]
\setlength{\abovecaptionskip}{0pt}
\setlength{\belowcaptionskip}{0pt}
\centering 
\subfigure{\includegraphics[width=0.5\columnwidth]{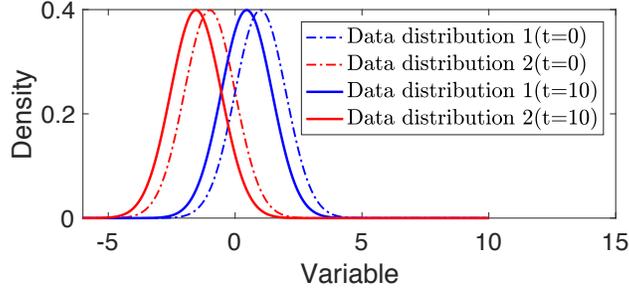}\label{figure_dynamics}}
\caption{An illustration of the dynmaics caused by the time-varying distributions of data. Data distributions $1$ and $2$ satisify $N(1+\sin(t), 1)$ and $N(-1+\sin(t), 1)$, respectively.  Suppose we want to conduct classification between data drawn from distributions $1$ and $2$, respectively. The optimal classification model should change over time.}
\label{figure_illus_dynamics}
\end{figure}

The dynamics of time-varying distributions are illustrated in Figure \ref{figure_illus_dynamics}, which shows the change of the optimal learning model over time and the importance of studying the dynamic regret. We use such time-varying distribution to simulate the dynamics of the synthetic data.

More numerical results are presented as Figures \ref{figure_compare_loss_others}-\ref{figure_compare_topology_others}.

\begin{figure*}[!t]
\setlength{\abovecaptionskip}{0pt}
\setlength{\belowcaptionskip}{0pt}
\centering 
\subfigure[\textit{room-occupancy}, $5$ nodes, ring topology]{\includegraphics[width=0.33\columnwidth]{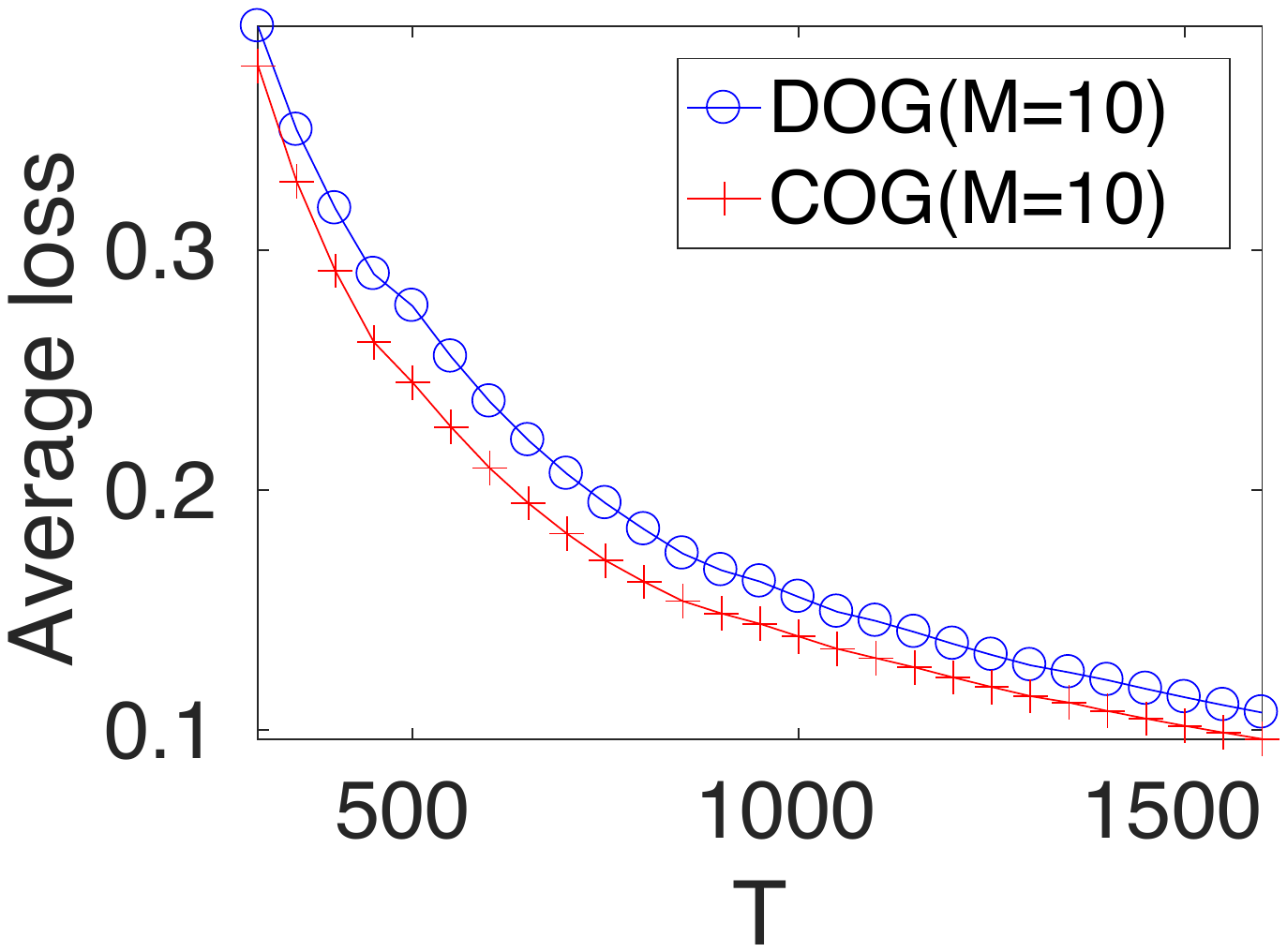}\label{figure_ave_loss_iteration_occupancy}}
\subfigure[\textit{usenet2}, $5$ nodes, ring topology]{\includegraphics[width=0.32\columnwidth]{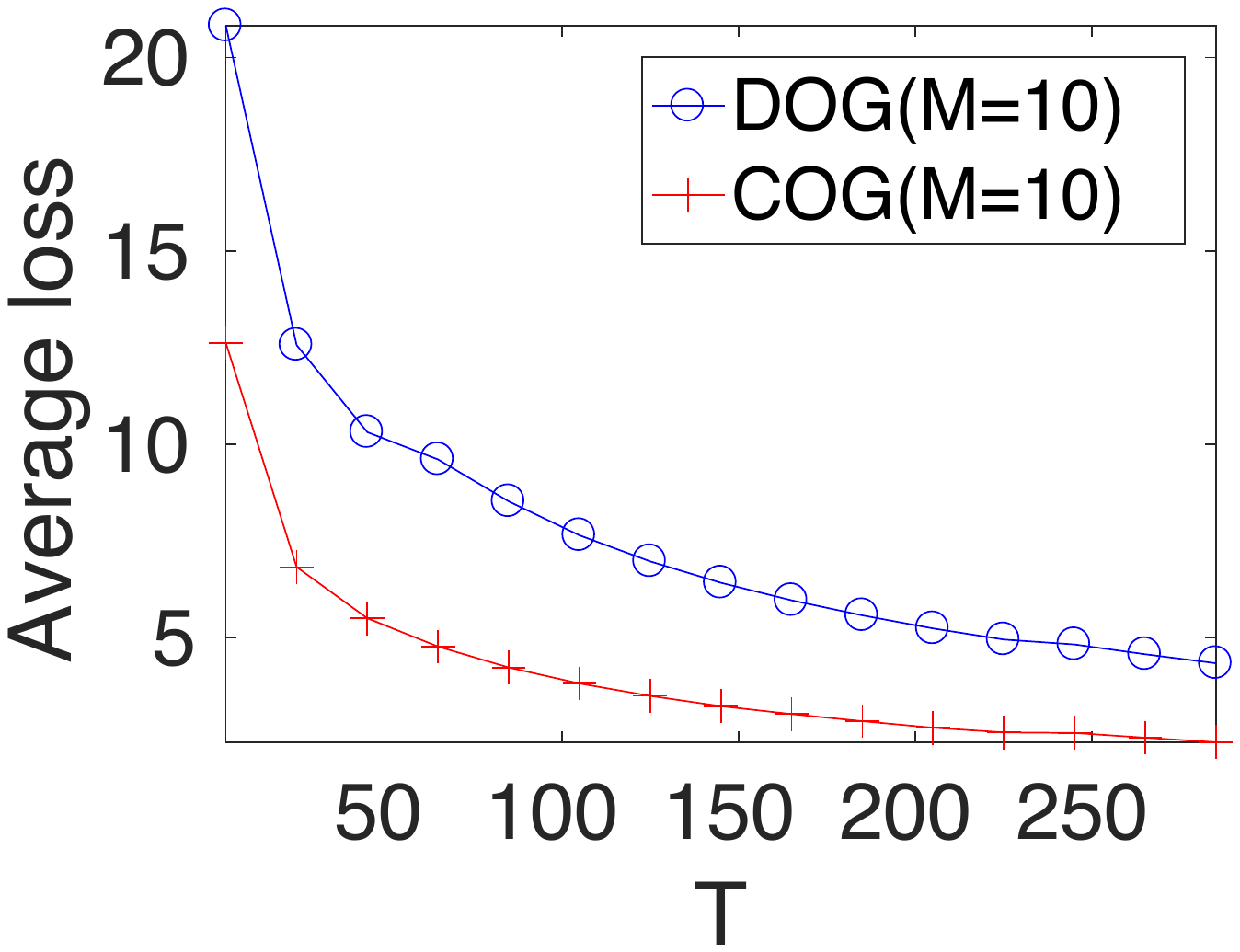}\label{figure_ave_loss_iteration_usenet2}}
\subfigure[\textit{spam}, $5$ nodes, ring topology]{\includegraphics[width=0.32\columnwidth]{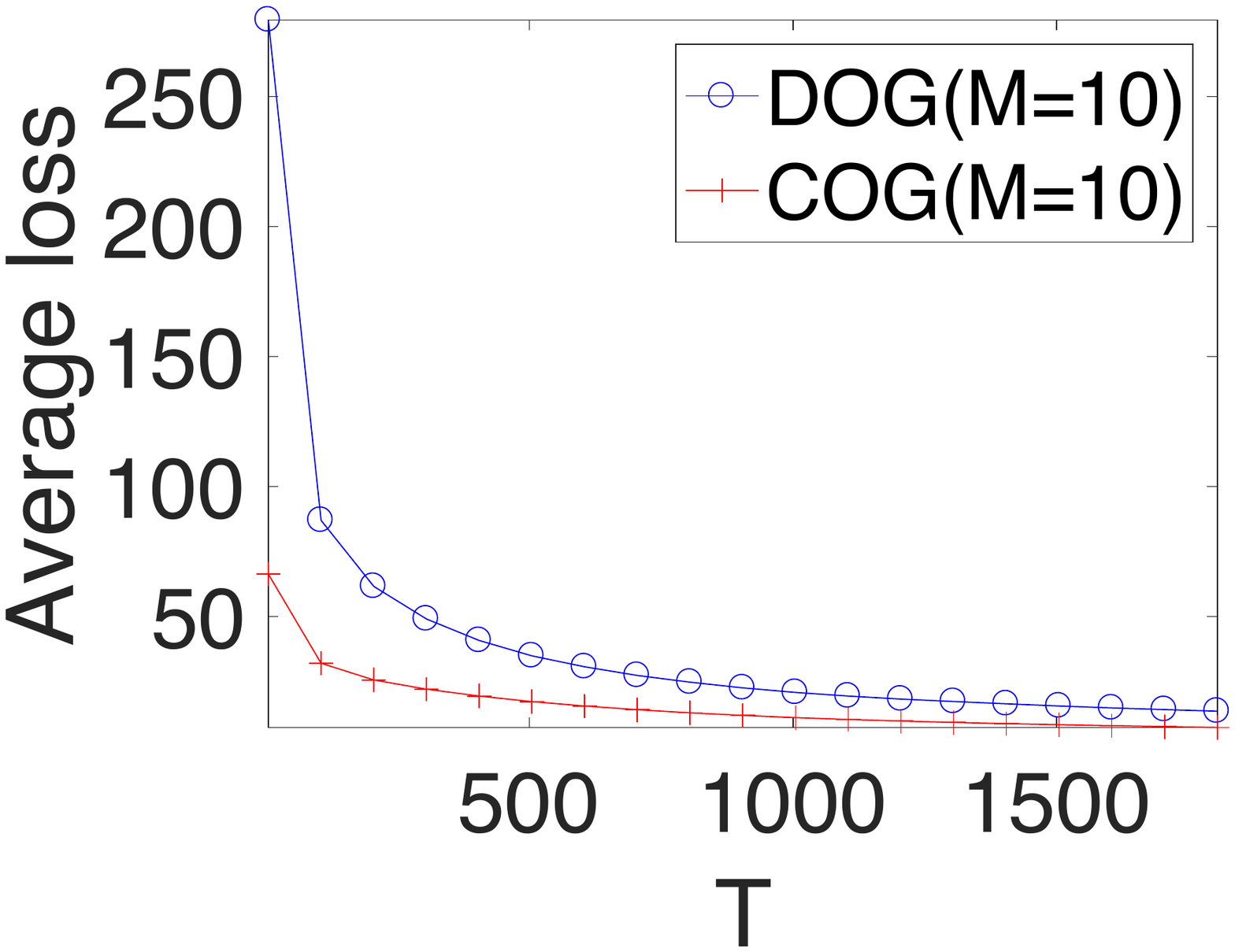}\label{figure_ave_loss_iteration_spam}}
\caption{The average loss yielded by DOG is comparable to that yielded by COG.}
\label{figure_compare_loss_others}
\end{figure*}

\begin{figure*}[!t]
\setlength{\abovecaptionskip}{0pt}
\setlength{\belowcaptionskip}{0pt}
\centering 
\subfigure[\textit{room-occupancy}, ring topology]{\includegraphics[width=0.32\columnwidth]{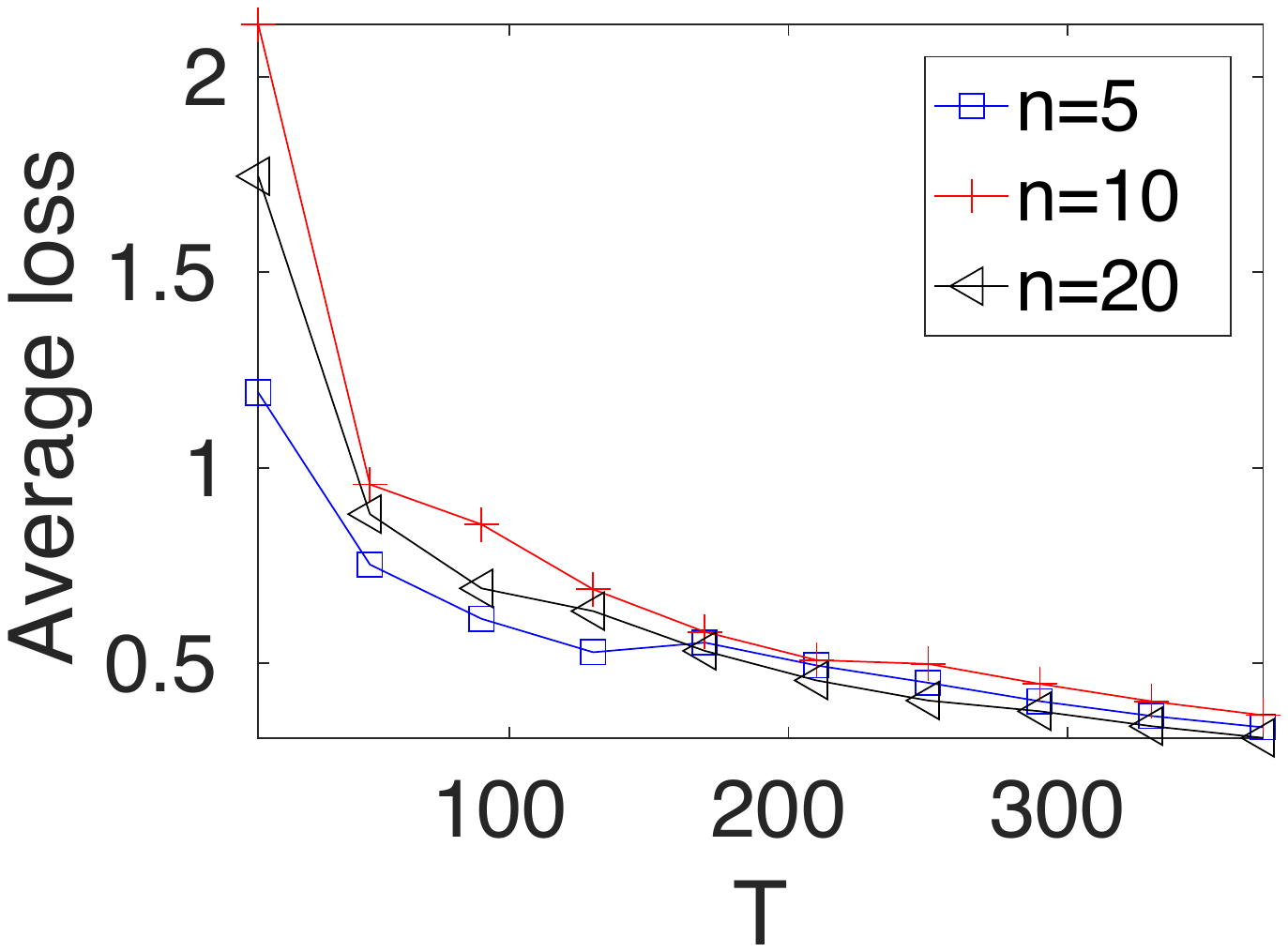}\label{figure_ave_loss_network_size_occupancy}}
\subfigure[\textit{usenet2}, ring topology]{\includegraphics[width=0.32\columnwidth]{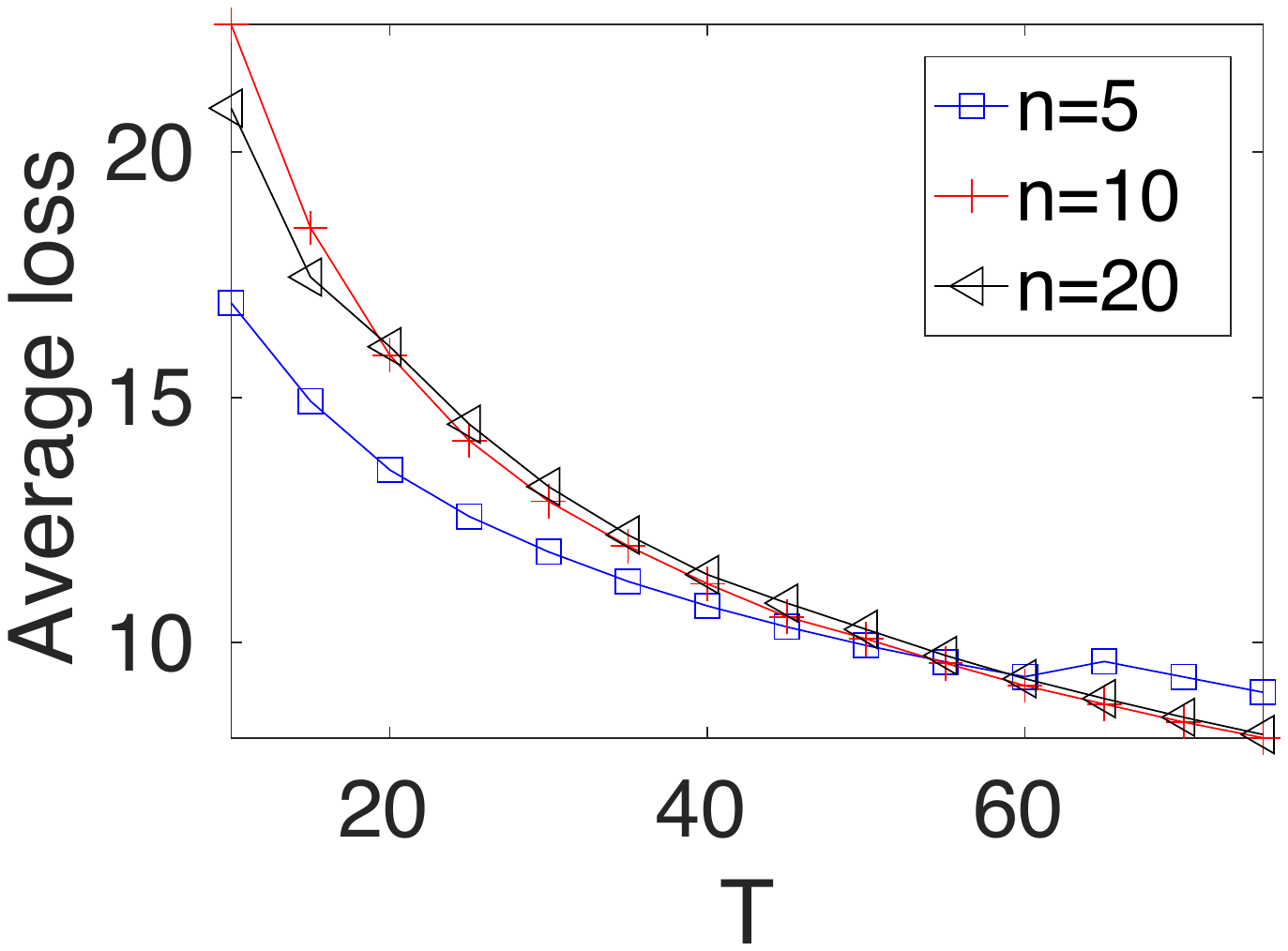}\label{figure_ave_loss_network_size_usenet2}}
\subfigure[\textit{spam}, ring topology]{\includegraphics[width=0.32\columnwidth]{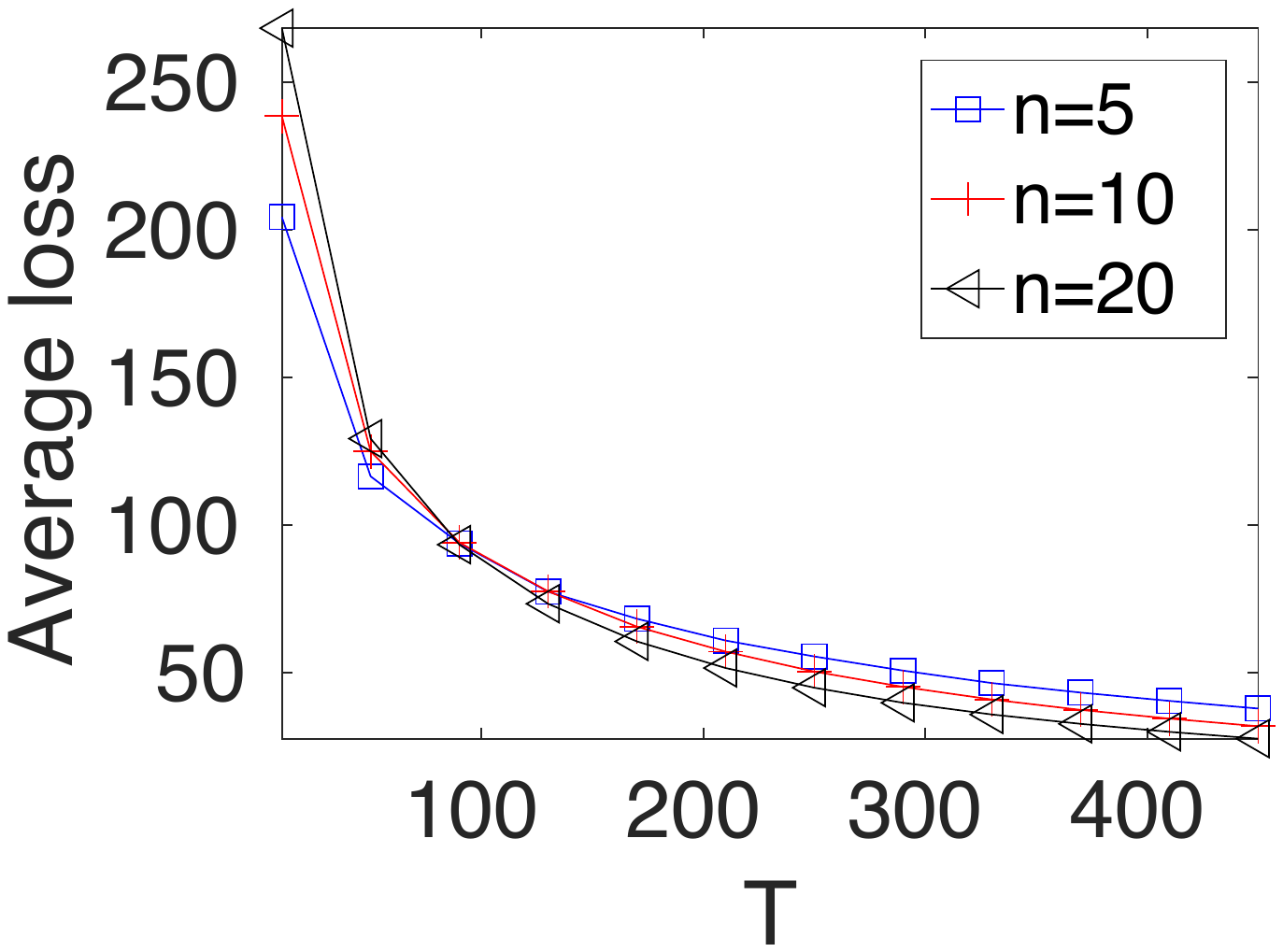}\label{figure_ave_loss_network_size_spam}}
\caption{The average loss yielded by DOG is insensitive to the network size.}
\label{figure_compare_network_size_others}
\end{figure*}

\begin{figure*}[!t]
\setlength{\abovecaptionskip}{0pt}
\setlength{\belowcaptionskip}{0pt}
\centering 
\subfigure[\textit{room-occupancy}, $20$ nodes]{\includegraphics[width=0.32\columnwidth]{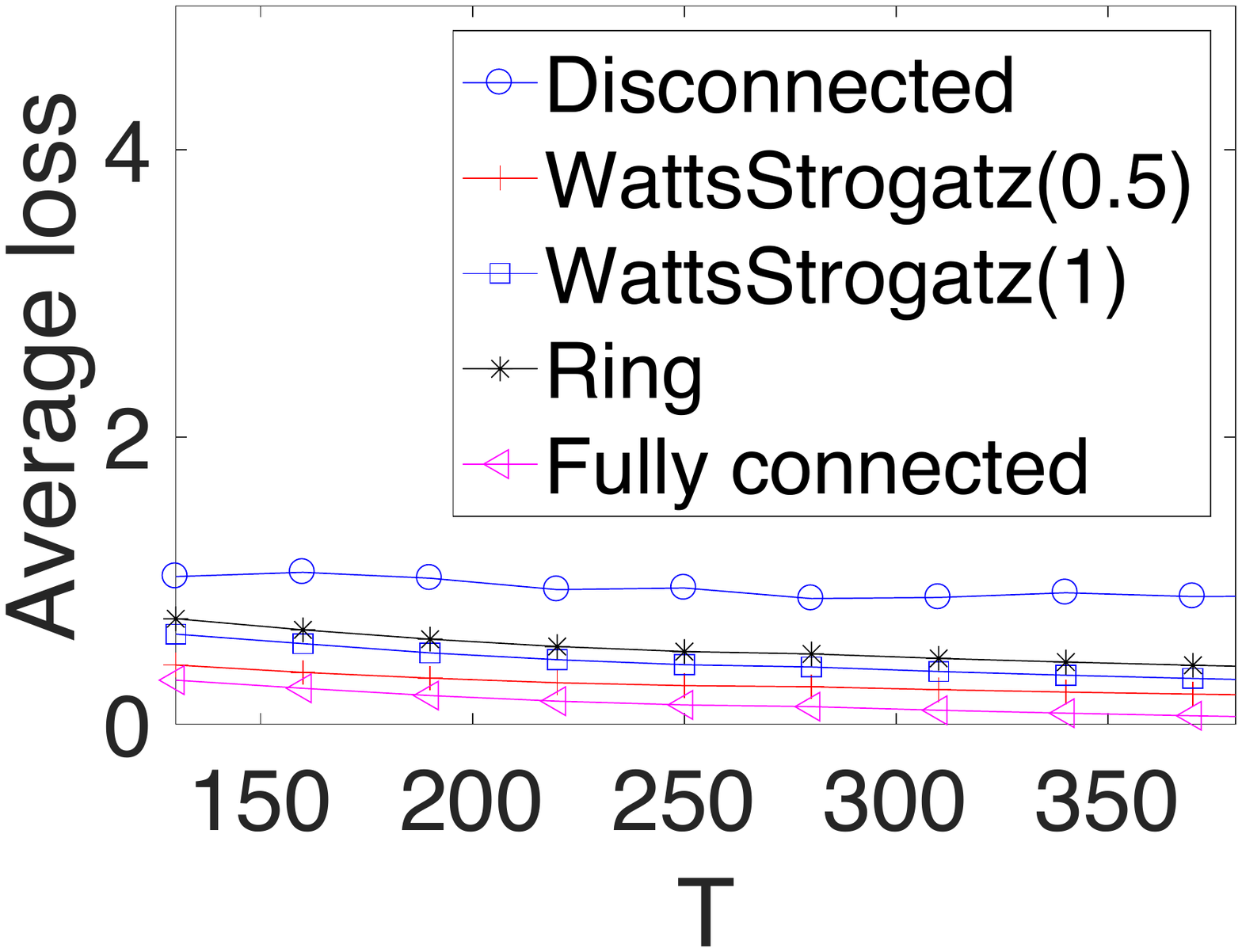}\label{figure_ave_loss_topology_occupancy}}
\subfigure[\textit{usenet2}, $20$ nodes]{\includegraphics[width=0.32\columnwidth]{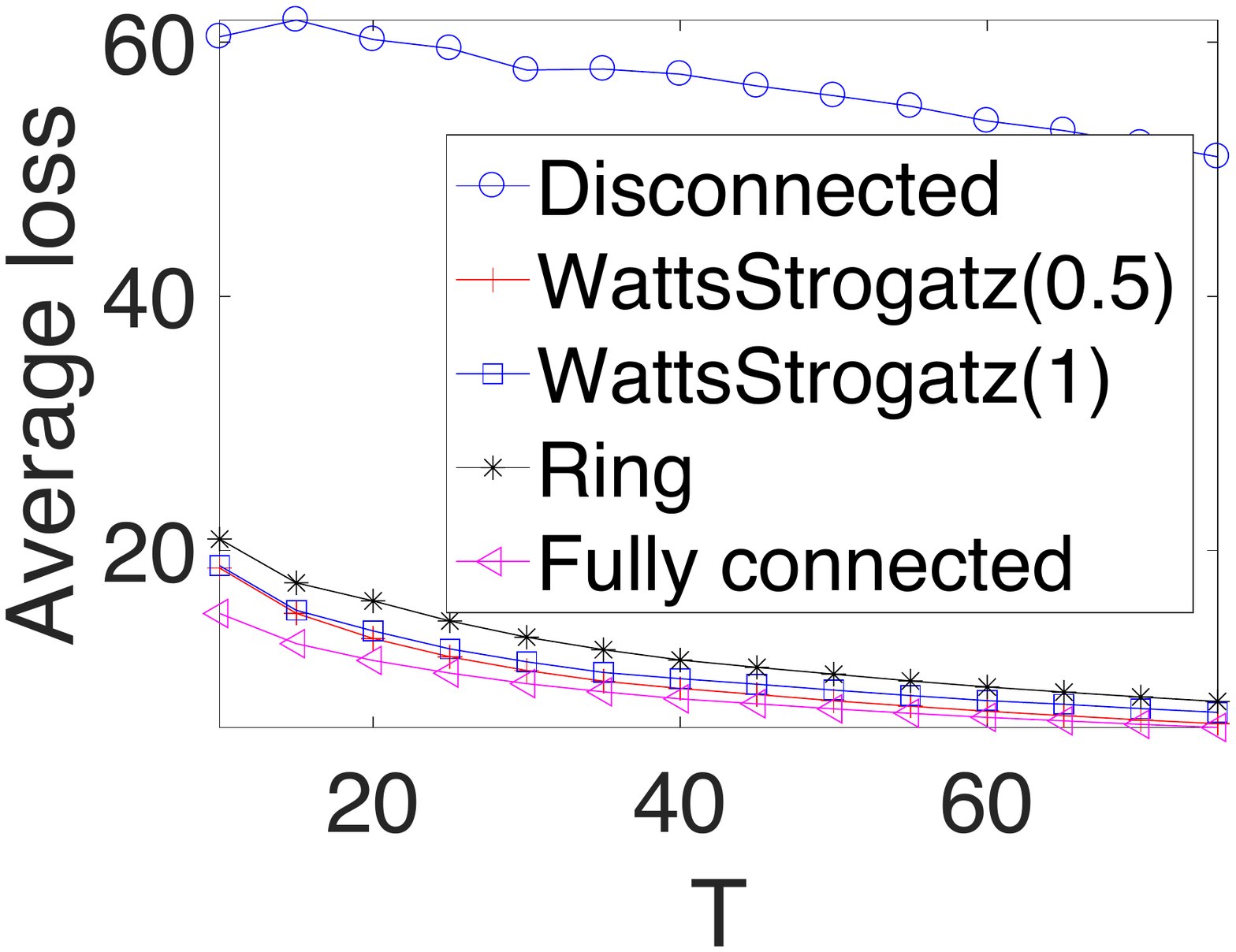}\label{figure_ave_loss_topology_occupancy}}
\subfigure[\textit{spam}, $20$ nodes]{\includegraphics[width=0.32\columnwidth]{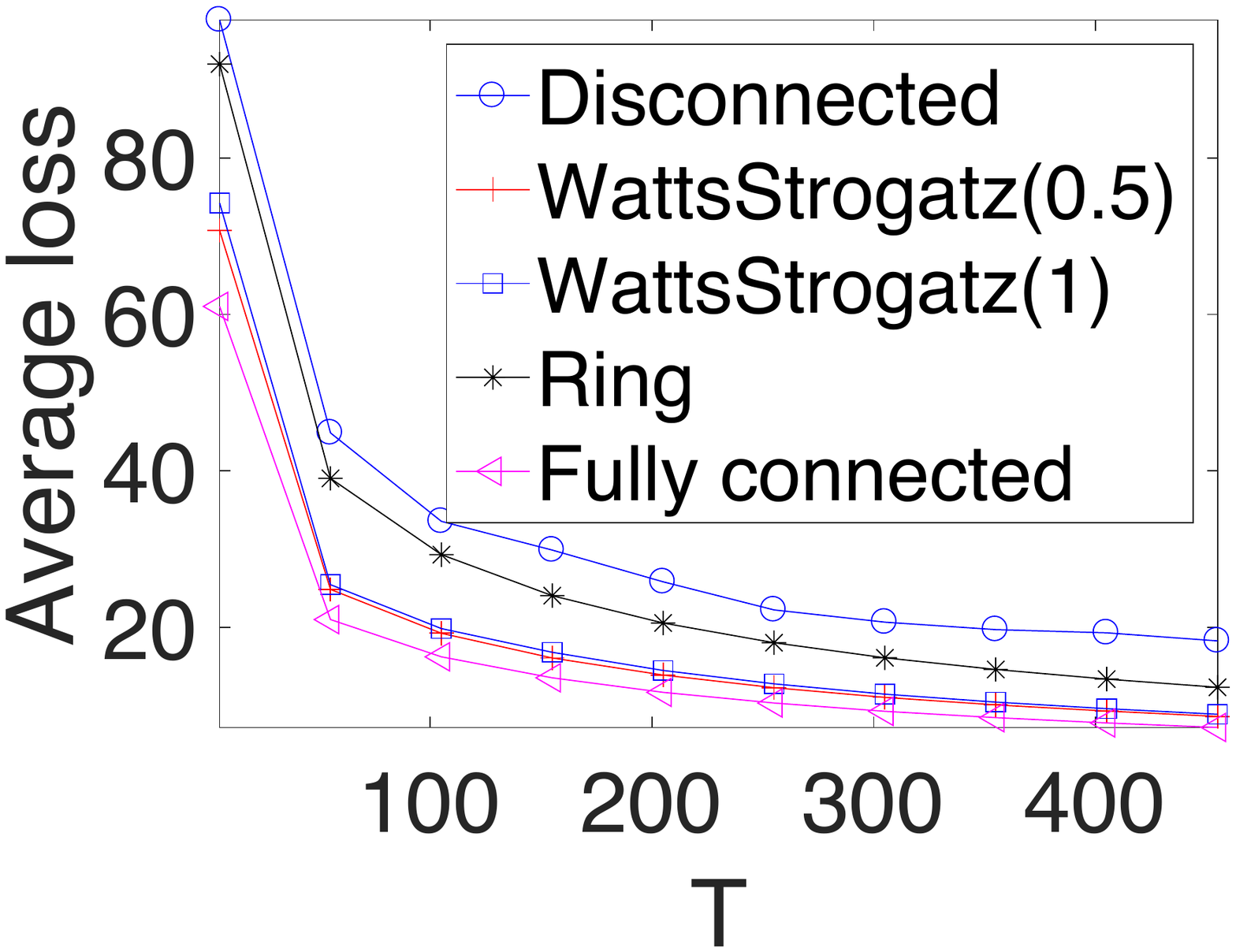}\label{figure_ave_loss_topology_spam}}
\caption{The average loss yielded by DOG is insensitive to the topology of the network.}
\label{figure_compare_topology_others}
\end{figure*}

\end{document}